%% file: neurips_2026.tex
\documentclass{article}

\usepackage[preprint]{neurips_2026}

\usepackage[utf8]{inputenc} 
\usepackage[T1]{fontenc}    
\usepackage{hyperref}       
\usepackage{url}            
\usepackage{booktabs}       
\usepackage{amsfonts}       
\usepackage{nicefrac}       
\usepackage{microtype}      
\usepackage{xcolor}         
\usepackage{amsmath}

\usepackage{amsmath}
\usepackage{amssymb}
\usepackage{amsthm}
\usepackage{graphicx} 

\usepackage{algorithm}
\usepackage{algpseudocode}
\usepackage{wrapfig}

\usepackage{subcaption}    
 \usepackage{multirow}

\newtheorem{theorem}{Theorem}[section]          
\newtheorem{lemma}[theorem]{Lemma}              
\newtheorem{remark}{Remark}[section]            

\title{Mechanism‑Aware Ensemble Conditioning for Data‑Limited Emulation of Extreme Events}

\author{
  \begin{tabular}{cc}
    \textbf{Isabella S. Thiel} & \textbf{Juan Bello-Rivas} \\
    \normalfont Department of Mechanical Engineering & \normalfont Department of Applied Mathematics \\
    \normalfont Massachusetts Institute of Technology & \normalfont and Statistics, Johns Hopkins University \\
    \normalfont Cambridge, MA 02139 & \normalfont Baltimore, MD 21218 \\
    \texttt{thieli@mit.edu} & \texttt{jmbr@superadditive.com} \\
    & \\ 
    \textbf{Yannis G. Kevrekidis} & \textbf{Themistoklis P. ~Sapsis } \\
    \normalfont Department of Chemical and & \normalfont Department of Mechanical Engineering \\ \normalfont
    Biomolecular Engineering, Johns Hopkins University & \normalfont Massachusetts Institute of Technology \\
    \normalfont Baltimore, MD 21218 & \normalfont Cambridge, MA 02139 \\
    \texttt{yannisk@jhu.edu} & \texttt{sapsis@mit.edu} \\
  \end{tabular}
}

\begin{document}

\maketitle

\begin{abstract}
\input{0_abstract}
\end{abstract}

\input{1_introduction}

\input{3_method}

\input{4_lowdimexample}
\input{5_experiment}
\input{6_limitations}

\input{7_conclusion}


{\small
\bibliographystyle{unsrt}
\bibliography{references}
}

\appendix
\input{appendix}

\clearpage
\input{checklist.tex}

\end{document}

%% file: 0_abstract.tex
Extreme events in chaotic systems are difficult to learn from short trajectories because they are controlled by transient finite-time instability rather than by frequently observed bulk dynamics. 
We propose a mechanism-aware conditioning plug-in framework that turns a nudged coarse ensemble into a non-intrusive sensor of local instability geometry. 
In the small-noise regime, the ensemble covariance aggregates the same finite-time deformation kernels that govern local instability, providing a Jacobian-free proxy for the local amplification structure around a synchronized coarse trajectory. 
A small FiLM module injects statistics of this ensemble geometry into an otherwise unchanged backbone while leaving the coarse simulator unchanged. 
We demonstrate this interface in two distinct pipelines: a Transformer-style residual-attention corrector for a controlled low-dimensional chaotic system and a probabilistic recurrent STORN corrector for topographic two-layer quasi-geostrophic (QG) flow. In the low-dimensional benchmark, ensemble covariance directions co-activate with OTD modes and FiLM conditioning improves 99th-percentile exceedance-frequency errors over an identical no-context Transformer baseline. 
In QG, a fixed ensemble-conditioned FiLM-STORN model trained on only \(50\) time units substantially improves long-horizon rare-event statistics in the data-limited regime, including density-tail errors, exceedance frequencies, and spatial exceedance-area distributions relative to an unconditioned STORN trained on the same data; on averaged high-threshold exceedance diagnostics, it also outperforms the baseline STORN trained with $20$ times more high-resolution data.
These results show that local instability geometry is not merely interpretable post hoc, but an actionable conditioning signal for data-efficient rare-event emulation.

%% file: 1_introduction.tex
\section{Introduction}

Extreme events in chaotic dynamical systems are difficult to emulate from short trajectories because they are not merely low-probability samples from a static distribution. They are often generated by transient visits to locally unstable regions of phase space, where a small number of finite-time growing directions dominate the dynamics and drive observables deep into their tails before the trajectory returns to the attractor \citep{sapsis2018,farazmand2019,sapsis2021}. This creates a statistical and algorithmic mismatch. A surrogate trained from limited data may learn the bulk evolution accurately while failing to represent the finite-time instability geometry that produces exceedances. For applications in climate, fluids, ocean waves, and risk-sensitive engineering systems, this is decisive because the relevant quantities are tail probabilities, return levels, and spatial organization of extremes, not only short-horizon prediction error or low-order moments \citep{sapsis2021,fouque2011, dematteis2018, easterling2000}.

Modern surrogate models for time-dependent PDEs have made substantial progress by exploiting spatial structure, operator learning, graph-based simulation, and hybrid physics--ML corrections \citep{li2021,lu2021,chen2025,wang2024}. A related line of work targets long-time statistical fidelity by training models to reproduce invariant measures or attractor statistics rather than pointwise trajectories \citep{jiang2024,hemmer2026}. Rare-event methods, including importance sampling, splitting, large-deviation approaches, active learning, and normalizing-flow-based samplers, provide powerful tools for estimating tail probabilities or transition pathways \citep{giardina2011,bouchet2019, dematteis2019,asghar2024,pickering2022}. These approaches are complementary to the setting considered here. We assume access to a computationally cheap but biased coarse simulator and only limited high-resolution reference data, and we seek a fast non-intrusive correction model whose long-horizon statistics preserve rare-event structure.

Non-intrusive coarse-correction methods address this regime by nudging a coarse trajectory toward projected high-resolution data and learning a correction operator on top of the synchronized trajectory \citep{barthelsorensen2024, sorensen2024,charalampopoulos2023}. Nudging resolves a key training difficulty, as it aligns the coarse and reference states over finite windows, avoiding the ill-posedness caused by exponential trajectory separation due to chaos. However, existing uses of nudging primarily treat it for trajectory-alignment and data assimilation \cite{barthelsorensen2024, sorensen2024,charalampopoulos2023, sun2019, huang2021}. They do not exploit the local ensemble geometry induced around the nudged trajectory, even though that geometry contains information about the finite-time directions along which errors and extremes amplify.

We propose \emph{mechanism-aware ensemble conditioning}: a plug-in strategy that converts a nudged coarse ensemble into a local instability sensor and uses this signal to condition a temporal surrogate. Around the nudged coarse trajectory, we evolve a small stochastic ensemble. In the small-variance regime, the ensemble covariance satisfies a perturbed Lyapunov equation whose homogeneous component is governed by the same deformation operators that define finite-time instability and Cauchy--Green geometry. Thus the ensemble is not used as generic uncertainty quantification. It is used as a non-intrusive probe of local finite-time deformation avoiding tangent dynamics, Jacobians, or adjoint solvers. In settings where covariance eigenspaces are explicitly retained, this can approximate instability directions under additional spectral-gap and scale-separation assumptions.

The resulting conditioning module is deliberately simple. Statistics of the local ensemble are passed through a small network that produces Feature-wise Linear Modulation (FiLM) parameters for a temporal backbone \citep{perez2017}. The backbone, simulator, and loss can remain unchanged. This isolates the core question: whether conditioning on ensemble-derived finite-time deformation summaries improves rare-event emulation under severe data constraints.

We instantiate this idea on a probabilistic recurrent coarse-correction model for a two-layer quasi-geostrophic (QG) system with additional localized topography-induced extremes. Training uses a short trajectory from a highly resolved simulation, while evaluation is performed on significantly longer trajectories. The ensemble-conditioned model improves tail-sensitive diagnostics and exceedance geometry relative to an unconditioned stochastic recurrent baseline, with the largest gains in the data-limited regime. A low-dimensional validation further confirms the proposed mechanism. The leading eigenspaces of the ensemble covariance align with optimally time-dependent modes computed from tangent dynamics, supporting the interpretation of the ensemble as an instability proxy rather than a generic source of stochastic spread.
\vspace{-0.2cm}
\paragraph{Relation to prior work.}
Our setting differs from three adjacent lines of work. Neural operators, graph simulators, and coarse-correction networks learn state-to-state or field-to-field maps, often with strong spatial inductive bias, but they typically treat finite-time instability as implicit in the data rather than as an explicit conditioning signal \citep{li2021,lu2021,chen2025,wang2024, hoang2025}. Rare-event methods based on importance sampling, splitting, large deviations, optimal control, or normalizing flows are designed to estimate tail probabilities or transition pathways, whereas our goal is a fast non-intrusive surrogate for long-horizon corrected trajectories \citep{dematteis2018, giardina2011,bouchet2019, dematteis2019,asghar2024,giardina2006, noe2019}. Finally, EnKF-style ensembles and residual nudging estimate state uncertainty, forecast spread, or assimilation corrections \citep{kalman1960,houtekamer2016,luo2014}. Here, however, the ensemble is instead a controlled geometric probe, whose covariance is used as a proxy for finite-time stretching directions quantified by OTD modes and Cauchy--Green geometry \citep{babaee2016, babaee2017, blanchard2019a}.

Our contributions are:
\begin{itemize}
    \item  We formulate a nudged ensemble SDE whose covariance provides in the small-noise regime a non-intrusive proxy for finite-time instability geometry through a deformation kernel correspondence.
     \item We validate the geometric mechanism in a low-dimensional chaotic system by comparing ensemble covariance eigenspaces with OTD modes computed from tangent dynamics.
    \item We introduce a plug-in FiLM conditioning module that injects compact ensemble-derived instability features into temporal surrogate models without changing the coarse simulator or backbone architecture.
    \item We show on both a low-dimensional system and a high-dimensional two-layer QG benchmark that this mechanism improves data-limited rare-event emulation, especially for tail-sensitive and exceedance-based diagnostics supporting the interpretation of the method as a plug-in conditioning mechanism.
\end{itemize}

%% file: 3_method.tex
\section{Method: Non-intrusive ensemble conditioning}\label{sec:method}
\subsection{Nudged coarse dynamics and ensemble stochastic differential equation}

Let the reference system be \(\dot{\textbf{u}}(t)=F(\textbf{u}(t)), \ \textbf{u}(t)\in\mathbb{R}^N\), and let the coarse model be \({\dot{v}(t)=f(v(t)), \ v(t)\in\mathbb{R}^n,\; n\ll N}\). With projection \(P:\mathbb{R}^N\to\mathbb{R}^n\), the reference projects to \(u(t)=P\textbf{u}(t)\). Direct learning of \(v\mapsto u\) is ill-posed in chaotic regimes because trajectories separate exponentially and \(f\) is biased relative to \(PF\). We therefore define a nudged deviation \(q_\tau(t)=v_\tau(t)-u(t)\) by
\begin{equation*}
\dot{q}_\tau(t)=f\!\big(u(t)+q_\tau(t)\big)-\mathcal{P}F\!\big(\textbf{u}(t)\big)-\tau^{-1}q_\tau(t),
\label{eq:nudged}
\end{equation*}
where \(\tau>0\) is the nudging timescale. 
Nudging stabilizes the coarse trajectory around the reference on finite windows, which makes local comparison meaningful. It is widely used in data assimilation and related settings (see e.g., \cite{barthelsorensen2024, sun2019, huang2021}). Around this nudged trajectory, we evolve an ensemble of perturbations:
\begin{equation}
\mathrm{d}Q^{(m)}(t)=
\Big(f\!\big(u(t)+Q^{(m)}(t)\big)-\mathcal{P}F\!\big(\textbf{u}(t)\big)-\tau^{-1}Q^{(m)}(t)\Big)\mathrm{d}t
+\sqrt{2\beta^{-1}}\mathrm{d}B_t^{(m)},
\label{eq:ensemble_sde}
\end{equation}
with independent Brownian motions \(B_t^{(m)}\) and inverse noise level \(\beta>0\). The ensemble is not used as uncertainty quantification. It is a controlled probe of local amplification directions around a synchronized base trajectory. Under standard global Lipschitz and linear-growth assumptions, the ensemble SDE is well posed with finite moments; see Appendix \ref{app:wellposedness}.

\begin{figure}[h!]
    \centering
    \includegraphics[width=0.75\linewidth]{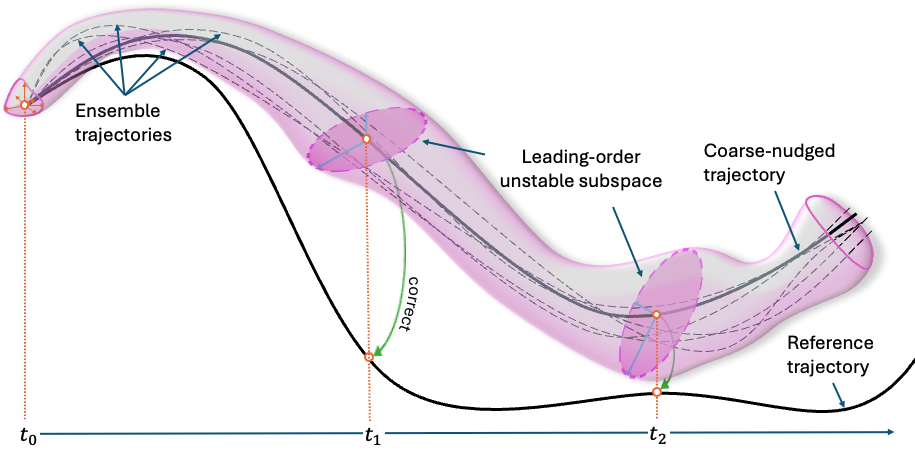}
    \caption{Schematic of the ensemble-based correction framework. At each time instant, a nudged-coarse trajectory is constrained toward the reference trajectory. An ensemble of stochastically perturbed trajectories (dashed) is evolved around this nudged solution, whose spread at each instant approximates the leading-order unstable subspace (pink ellipsoids).}
    \label{fig:main_idea}
\end{figure}

\subsection{Covariance--instability correspondence}
Decompose each ensemble member as \(Q^{(m)}(t)=\bar{Q}(t)+Q^{\prime(m)}(t), \;\bar{Q}(t)=\mathbb{E}[Q^{(m)}(t)]\).
The key observation is that in the small-noise regime the covariance evolves under the same linearized deformation operator that governs finite-time instabilities of the nudged dynamics. Thus its dominant directions encode a time-integrated finite-time stretching geometry. Under additional spectral-gap and scale-separation assumptions, these directions align with the leading Cauchy-Green eigenspaces. 

\begin{theorem}[Covariance--instability correspondence]\label{prop:covariance-instability}
Assume the conditions of Lemma~\ref{lem:wellposedness}. In addition, assume that \(\sup_{x\in\mathbb{R}^n}\|D^2 f(x)\|_{\mathrm{op}}\le H<\infty\) and that the centered fluctuations \(Q'(t)=Q(t)-\bar Q(t)\), where \(\bar Q(t)=\mathbb{E}[Q(t)]\), satisfy
\[
\sup_{t\in[0,T]}\mathbb{E}\|Q'(t)\|^2\le C_2\beta^{-1},
\qquad
\sup_{t\in[0,T]}\mathbb{E}\|Q'(t)\|^4\le C_4\beta^{-2}.
\]
Then, for all $t\in[0,T]$, the following hold:

\begin{enumerate}
\item[(i)] (\textbf{Mean-field consistency}) The ensemble mean satisfies
\[
\dot{\bar{Q}}(t) =f\big(u(t)+\bar{Q}(t)\big) - \mathcal{P}F\big(\textbf{u}(t)\big) - \tau^{-1}\bar{Q}(t) + O(\beta^{-1}).
\]
\item[(ii)] (\textbf{Covariance dynamics}) The covariance \(\Sigma(t) = \mathbb{E}\big[Q'(t)Q'(t)^\top\big]\) satisfies the perturbed Lyapunov equation
\begin{equation}
\label{eq:app:covariance_dynamics}
\dot\Sigma(t)=L_\tau(t)\Sigma(t)+\Sigma(t)L_\tau(t)^\top + \frac{2}{\beta}I +R_\Sigma(t),
\qquad
\|R_\Sigma(t)\|_{\mathrm{op}}\le C_T\beta^{-3/2}.
\end{equation}
where $L_\tau(t) = Df(u(t)+\bar{Q}(t)) - \tau^{-1}I$.
\item[(iii)] (\textbf{Deformation-kernel representation}) Consequently, if \(\Phi(t,s)\) is the fundamental solution generated by \(L_\tau(t)\), then
\[
\Sigma(t) = \Phi(t,0)\Sigma(0)\Phi(t,0)^\top + \frac{2}{\beta}\int_0^t \Phi(t,s)\Phi(t,s)^\top\,ds + O_T(\beta^{-3/2}).
\]
Thus, at the \(O(\beta^{-1})\) covariance scale, the leading geometry is determined by the finite-time deformation kernels.
\end{enumerate}
\end{theorem}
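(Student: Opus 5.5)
The plan is a second-order Taylor expansion of the drift around the ensemble mean, combined with It\^o's formula for the mean and for the second moments. Write the drift as $b(t,Q)=f(u(t)+Q)-\mathcal{P}F(\textbf{u}(t))-\tau^{-1}Q$. Set $x(t)=u(t)+\bar Q(t)$. Taylor's theorem with integral remainder gives
\[
f(x+Q')=f(x)+Df(x)Q'+r(t,Q'),
\qquad
\|r(t,Q')\|\le \tfrac{H}{2}\|Q'\|^2,
\]
using the uniform bound $\|D^2f\|_{\mathrm{op}}\le H$. Lemma~\ref{lem:wellposedness} supplies finite moments. These make all It\^o integrals true martingales with zero expectation, and they justify exchanging expectation and time differentiation, so $\bar Q$ and $\Sigma$ are absolutely continuous. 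I assume this is what the well-posedness lemma is for.

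For (i), take expectations in the SDE. The Brownian term vanishes, so
\[
\dot{\bar Q}=\mathbb{E}\,b(t,Q)=f(x)-\mathcal{P}F(\textbf{u})-\tau^{-1}\bar Q+\mathbb{E}\,r(t,Q'),
\]
because $\mathbb{E}[Df(x)Q']=0$. The remainder satisfies $\|\mathbb{E}\,r\|\le \tfrac H2\,\mathbb{E}\|Q'\|^2\le \tfrac H2 C_2\beta^{-1}$, which gives the claimed $O(\beta^{-1})$ term.

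For (ii), the centered process satisfies
\[
dQ'=\big(b(t,Q)-\mathbb{E}\,b(t,Q)\big)dt+\sqrt{2\beta^{-1}}\,dB.
\]
After the expansion, its drift is $L_\tau(t)Q'+\big(r-\mathbb{E}\,r\big)$, with $L_\tau(t)=Df(x)-\tau^{-1}I$. Apply It\^o's product rule to $Q'Q'^\top$ and take expectations. The quadratic variation contributes $\tfrac{2}{\beta}I$, and the linear part contributes $L_\tau\Sigma+\Sigma L_\tau^\top$. What remains is
\[
R_\Sigma=\mathbb{E}\big[(r-\mathbb{E}\,r)Q'^\top+Q'(r-\mathbb{E}\,r)^\top\big].
\]
Since $\mathbb{E}\,Q'=0$, the $\mathbb{E}\,r$ pieces drop out, leaving $\mathbb{E}[rQ'^\top+Q'r^\top]$. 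Cauchy--Schwarz then gives
\[
\|R_\Sigma\|_{\mathrm{op}}\le 2\,\mathbb{E}\big[\|r\|\,\|Q'\|\big]\le H\,\mathbb{E}\|Q'\|^3\le H\big(\mathbb{E}\|Q'\|^2\big)^{1/2}\big(\mathbb{E}\|Q'\|^4\big)^{1/2}.
\]
By the moment hypotheses this is at most $H\sqrt{C_2C_4}\,\beta^{-3/2}$, so we may take $C_T=H\sqrt{C_2C_4}$. The uniformity in $t\in[0,T]$ comes from the sup in the assumptions.

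For (iii), treat the equation in (ii) as a linear matrix ODE with forcing $\tfrac2\beta I+R_\Sigma$. $L_\tau$ is continuous because $Df$ is Lipschitz and $\bar Q$ is continuous, so $\Phi$ exists. Variation of constants, checked by differentiating $\Phi(t,s)M(s)\Phi(t,s)^\top$, gives
\[
\Sigma(t)=\Phi(t,0)\Sigma(0)\Phi(t,0)^\top+\int_0^t\Phi(t,s)\Big(\tfrac2\beta I+R_\Sigma(s)\Big)\Phi(t,s)^\top ds.
\]
Grönwall gives $\|\Phi(t,s)\|\le e^{(\|Df\|_\infty+\tau^{-1})T}$, with $\|Df\|_\infty$ bounded by the global Lipschitz constant. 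The remainder integral is therefore at most $T e^{2(\cdot)T}C_T\beta^{-3/2}$, which is the $O_T(\beta^{-3/2})$ term.

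The main obstacle is not the algebra but being precise about what is assumed. The fourth-moment scaling is what makes the remainder $\beta^{-3/2}$ rather than $\beta^{-1}$, and the theorem takes it as a hypothesis. The other delicate point is the rigorous interchange of $\mathbb{E}$ and $d/dt$. I would handle it in integrated form, writing $\Sigma(t)-\Sigma(0)$ as a time integral and invoking the finite-moment bounds of Lemma~\ref{lem:wellposedness}, which control the stochastic-integral terms. I would not attempt to derive the moment bounds here.
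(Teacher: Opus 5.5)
Your proposal is correct and follows the paper's proof essentially step for step. It uses the same second-order Taylor remainder with $\|r\|\le \tfrac{H}{2}\|Q'\|^2$ for (i). For (ii) it has the same centered fluctuation SDE and It\^o product rule, giving $R_\Sigma=\mathbb{E}[rQ'^\top+Q'r^\top]$ bounded by $H\sqrt{C_2C_4}\,\beta^{-3/2}$ via Cauchy--Schwarz. For (iii) it uses the same variation-of-constants identity with $\|\Phi(t,s)\|_{\mathrm{op}}\le e^{(K+\tau^{-1})T}$. Your plan to handle the interchange of $\mathbb{E}$ and $d/dt$ in integrated form is just a more explicit version of the paper's ``a.e.\ $t$'' caveat.
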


\noindent \textit{Proof: see Appendix \ref{app:covariance-instability}}

Theorem \ref{prop:covariance-instability} is the central theoretical object in the method. It distinguishes the ensemble from ordinary uncertainty quantification. The ensemble is used because in the small-variance regime, spread inherits the geometry of finite-time stretching. This is not an ensemble Kalman filter. EnKF uses ensembles to estimate state uncertainty or posterior covariance under observational updates \cite{kalman1960, houtekamer2016}. Here, the ensemble is generated around a nudged trajectory to reveal the unstable directions that drive rare events and then turned into features for a surrogate. Likewise, the covariance is not interpreted as epistemic uncertainty over model parameters, but as a dynamical descriptor of local amplification structure.
Finite ensembles approximate the relevant subspaces with controlled sampling error. The covariance built from the centered fluctuations is a noise-driven probe of recent deformation history.

\begin{theorem}[Finite-sample approximation of covariance-induced instability subspaces] \label{theo:finite_ensemble_approx}
Fix \(t\in[0,T]\). Assume that the centered ensemble fluctuations \(Q^{\prime(1)}(t),\ldots,Q^{\prime(M)}(t)\) are iid satisfying \(\mathbb E[Q'(t)]=0\) and \(\mathbb E[\|Q'(t)\|_2^4]\le \kappa<\infty\). Let
\[
\Sigma(t)=\mathbb E[Q'(t)Q'(t)^\top],\qquad \widehat\Sigma_M(t)=\frac1M\sum_{m=1}^M Q^{\prime(m)}(t)Q^{\prime(m)}(t)^\top .
\]
Let \(R(t)\) and \(\widehat R(t)\) denote the orthogonal projectors onto the leading \(r\)-dimensional eigenspaces of \(\Sigma(t)\) and \(\widehat\Sigma_M(t)\), respectively. Assume a spectral gap holds uniformly, i.e. \(\lambda_r(\Sigma(t))-\lambda_{r+1}(\Sigma(t))\ge \delta>0\). Then
\[
\mathbb E[\|\widehat R(t)-R(t)\|_{\mathrm{op}}]\le \frac{2\sqrt{\kappa}}{\delta\sqrt M}.
\]
Moreover, for every \(\varepsilon>0\),
\[
\mathbb P\{\|\widehat R(t)-R(t)\|_{\mathrm{op}}\ge\varepsilon\}\le \frac{4\kappa}{\delta^2M\varepsilon^2}.
\]
\end{theorem}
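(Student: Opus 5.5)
The approach is to combine a second-moment bound on the sample covariance error with a Davis--Kahan-type perturbation bound for spectral projectors, and then finish with Jensen and Markov/Chebyshev. Write \(X_m=Q^{\prime(m)}(t)Q^{\prime(m)}(t)^\top-\Sigma(t)\). These are iid, mean-zero, symmetric matrices with \(\widehat\Sigma_M(t)-\Sigma(t)=\frac1M\sum_m X_m\). The plan has three steps: (a) bound \(\mathbb E\|\widehat\Sigma_M-\Sigma\|_{\mathrm{op}}^2\le \kappa/M\); (b) show \(\|\widehat R-R\|_{\mathrm{op}}\le 2\|\widehat\Sigma_M-\Sigma\|_{\mathrm{op}}/\delta\) deterministically; (c) take expectations.

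For step (a), bound the operator norm by the Frobenius norm and use independence and zero mean, so that the cross terms vanish:
\[
\mathbb E\Big\|\tfrac1M\textstyle\sum_m X_m\Big\|_F^2=\tfrac1M\,\mathbb E\|X_1\|_F^2 .
\]
Then \(\mathbb E\|X_1\|_F^2=\mathbb E\|Q'Q'^\top\|_F^2-\|\Sigma\|_F^2\le \mathbb E\|Q'\|^4\le\kappa\), using \(\|xx^\top\|_F=\|x\|^2\). Jensen then gives \(\mathbb E\|\widehat\Sigma_M-\Sigma\|_{\mathrm{op}}\le\sqrt{\kappa/M}\).

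Step (b) is the main obstacle. The bound needs only the population gap \(\delta\), with no assumption on the sample gap, and it must hold for every realization. Let \(E=\widehat\Sigma_M-\Sigma\). I split into two cases.
\begin{itemize}
\item \emph{Case \(\|E\|_{\mathrm{op}}\ge\delta/2\).} Since \(\|\widehat R-R\|_{\mathrm{op}}\le1\) for any two orthogonal projectors, the bound \(1\le 2\|E\|_{\mathrm{op}}/\delta\) is trivial.
\item \emph{Case \(\|E\|_{\mathrm{op}}<\delta/2\).} Weyl's inequality gives a separation of at least \(\delta-\|E\|_{\mathrm{op}}>\delta/2\) between the top \(r\) eigenvalues of \(\widehat\Sigma_M\) and the bottom \(n-r\) eigenvalues of \(\Sigma\), and symmetrically. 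The Davis--Kahan \(\sin\Theta\) theorem in the Yu--Wang--Samworth form then gives \(\|\widehat R-R\|_{\mathrm{op}}=\|\sin\Theta\|_{\mathrm{op}}\le 2\|E\|_{\mathrm{op}}/\delta\).
\end{itemize}
I expect the only delicate point to be the constant 2. This is exactly the standard Yu--Wang--Samworth constant, so I would cite that version instead of re-deriving it from the Sylvester equation \(\widehat R E(I-R)\)-type argument. One should also note that if \(\lambda_r(\widehat\Sigma_M)=\lambda_{r+1}(\widehat\Sigma_M)\), then \(\widehat R\) is ambiguous, but this can only happen in the first case, where any choice of \(\widehat R\) satisfies the bound.

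For step (c), combine (a) and (b) to get
\[
\mathbb E\|\widehat R-R\|_{\mathrm{op}}\le \frac{2}{\delta}\,\mathbb E\|E\|_{\mathrm{op}}\le \frac{2\sqrt\kappa}{\delta\sqrt M}.
\]
For the tail bound, use the squared form of (b) together with Markov's inequality applied to \(\|E\|_{\mathrm{op}}^2\):
\[
\mathbb P\{\|\widehat R-R\|_{\mathrm{op}}\ge\varepsilon\}\le\mathbb P\{\|E\|_{\mathrm{op}}^2\ge\delta^2\varepsilon^2/4\}\le \frac{4\,\mathbb E\|E\|_{\mathrm{op}}^2}{\delta^2\varepsilon^2}\le\frac{4\kappa}{\delta^2M\varepsilon^2}.
\]

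The iid, mean-zero and fourth-moment hypotheses of Theorem~\ref{theo:finite_ensemble_approx} are used only in step (a). In practice they are supplied by the independence of the Brownian motions \(B^{(m)}_t\) and the moment bounds of Lemma~\ref{lem:wellposedness}.
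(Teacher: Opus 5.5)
Your proposal is correct and follows the paper's proof step for step: the Frobenius second-moment bound $\mathbb E\|\widehat\Sigma_M-\Sigma\|_F^2\le\kappa/M$, the deterministic Davis--Kahan bound $\|\widehat R-R\|_{\mathrm{op}}\le 2\|\widehat\Sigma_M-\Sigma\|_{\mathrm{op}}/\delta$ using only the population gap, then Jensen for the mean bound and Markov for the tail bound. The one place you go further is your case split: the trivial bound $\|\widehat R-R\|_{\mathrm{op}}\le 1$ when $\|E\|_{\mathrm{op}}\ge\delta/2$, and Weyl plus the classical $\sin\Theta$ theorem otherwise. This explicitly justifies the constant $2$ and the well-definedness of $\widehat R$, which the paper simply attributes to ``Davis--Kahan''.
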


\noindent \textit{Proof: see Appendix \ref{app:finite_ensemble_approx}}

This theorem is the finite-sample statement needed to justify the empirical use of the ensemble as a geometry estimator. The instability proxy is stable under sampling, and the error scales with the ensemble size,  \(M^{-1/2}\) under standard assumptions. Combined with the previous covariance-instability theorem, it yields the intended interpretation that for moderate \(M\), small \(\beta^{-1}\), and finite time horizons, the ensemble covariance identifies the local unstable subspace of the nudged flow.

\subsection{Ensemble-conditioning module as a plug-in architecture}
Having identified the dynamically sensitive, unstable directions, we now use this information to condition an already established machine-learning corrector within a plug-in architecture.

Let \(c_t\) denote a low-dimensional summary of the ensemble over a short window \([t-L,t]\),
\begin{equation}
c_t=\phi\!\left(\{v_\tau^{(m)}(s)\}_{m=1}^M,\ s\in[t-L,t]\right).
\end{equation}
In the high-dimensional QG model, \(z_t\) is computed from compact statistics such as time-windowed means and standard deviations across ensemble members. This compression is intentional as the objective is not to propagate the full ensemble, but to encode its local geometric structure in a lower-dimensional representation identify time-localized dynamically sensitive regimes. The map \(g_\theta\) transforms \(c_t\) into FiLM parameters,
\begin{equation}
(\gamma_t,\beta_t)=g_\theta(c_t),
\end{equation}
which modulate the hidden state \(h_t\) of a temporal backbone via
\begin{equation}
\tilde{h}_t=(1+\gamma_t)\odot h_t+\beta_t.
\end{equation}

\begin{figure}
    \centering
    \includegraphics[width=0.98\textwidth]{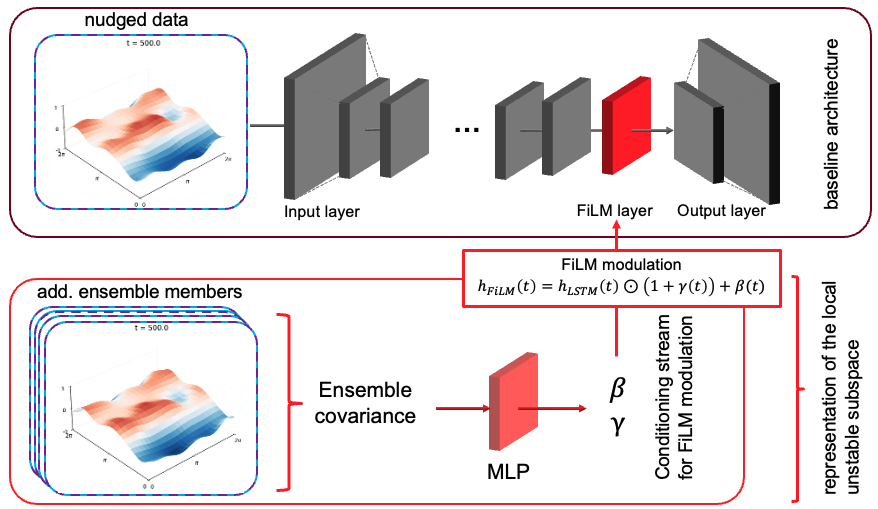}
    \caption{Plug-in backbone instantiation. The nudged ensemble summary entering a FiLM module attached to baseline network.}
    \label{fig:ML_plugin}
\end{figure}

The module is architecture-agnostic. It can be attached to a recurrent or a probabilistic sequence model, as well as a neural controlled differential equation, or a transformer, provided the backbone admits feature-wise affine modulation \cite{perez2017}. Appendix \ref{app:lowdim_architecture} gives an end-to-end residual-attention instantiation for the low-dimensional system (see Section~\ref{sec:lowdimexample}); Section \ref{sec:experiment} and Appendix~\ref{app:ML_QG} use a probabilistic recurrent STORN backbone for QG flow. In both cases, the shared mechanism is FiLM conditioning on ensemble-derived instability summaries, not the temporal backbone itself.
Its role is to condition the surrogate on local instability geometry, not to replace the backbone or the simulator.
The central contribution is a data-efficient conditioning strategy that extracts ensemble-derived instability geometry to condition a learned surrogate. By leveraging latent geometric structure present in finite trajectory segments, the approach provides substantially richer information than individual trajectories, which is particularly valuable in data-scarce regimes where rare and extreme events are underrepresented. 

In contrast, we reinterpret a nudged ensemble as a dynamical sensor of local instability. Its covariance approximates the flow’s unstable geometry, and low-dimensional summaries of this structure condition the surrogate. This enables the model to exploit latent geometric information in the data, improving fidelity in distribution tails, especially under limited data.

%% file: 4_lowdimexample.tex
\section{Low‑dimensional example}\label{sec:lowdimexample}

\begin{wrapfigure}[30]{r}{0.51\textwidth}
    \centering
    \includegraphics[width=0.5\textwidth]{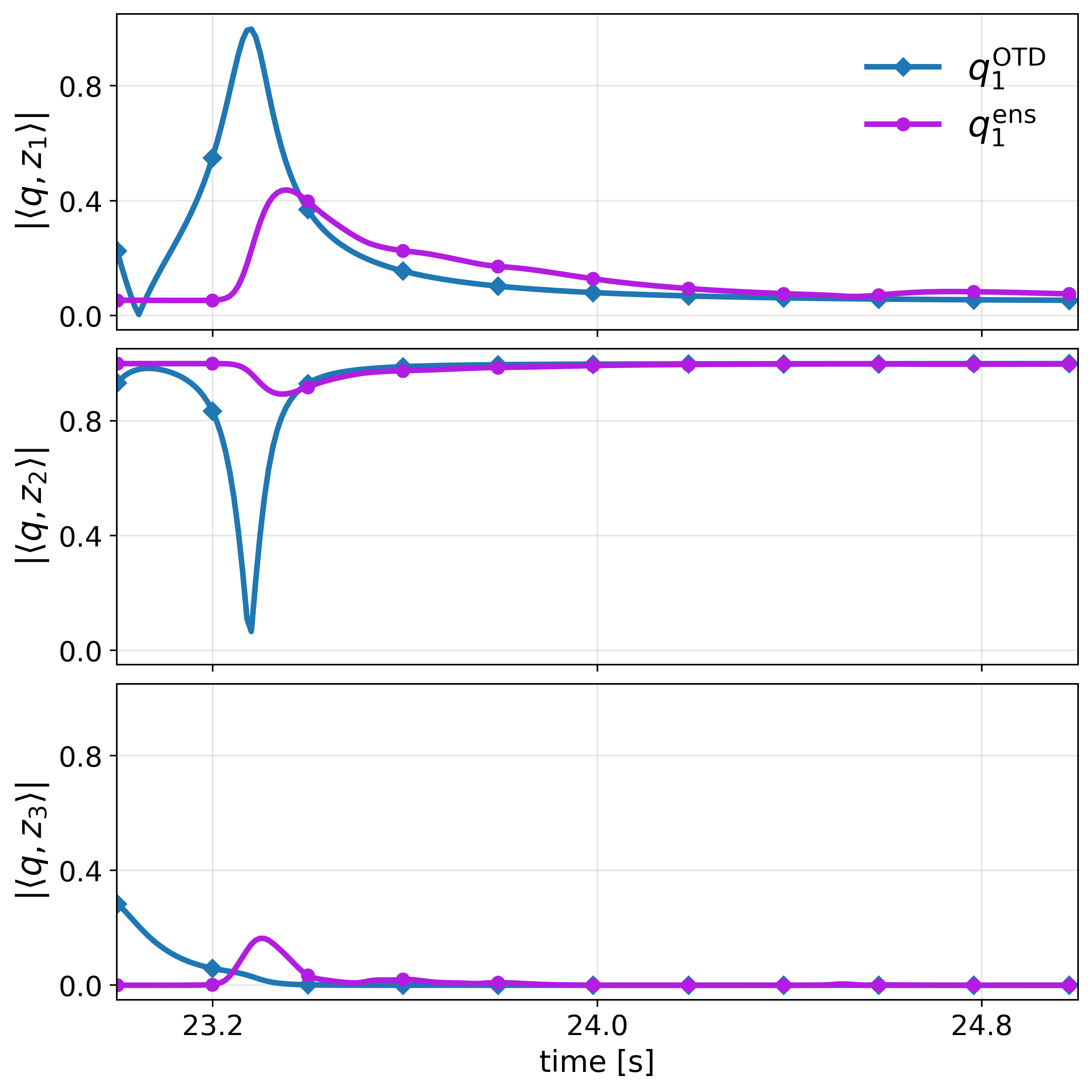}
    \caption{Low-dimensional validation of the ensemble-derived instability proxy comparing the component-wise magnitudes of the leading OTD mode \(q_{\mathrm{OTD}}(t)\) and the leading ensemble-covariance mode \(q_{\mathrm{ens}}(t)\) near a finite-time instability burst (\(\tau=8.0\), \(\sigma=5\cdot 10^{-6}\)). The objective is not pointwise amplitude matching, but temporal localization of the unstable episode. The ensemble-derived mode activates over the same finite-time window as the OTD mode, indicating that the ensemble detects the relevant instability.}
    \label{fig:OTD_alignment}
\end{wrapfigure}

The low-dimensional system serves two roles. First, because OTD modes can be computed independently from tangent dynamics, it tests whether the ensemble covariance detects the same finite-time instability episodes as the true time-dependent unstable subspace. Second, it provides a controlled end-to-end correction problem in which the same ensemble-derived features are injected into a deterministic residual-attention corrector, details are given in Appendix \ref{app:lowdim_architecture}.

We consider the three-dimensional autonomous system of \cite{babaee2016}, which exhibits pronounced finite-time instabilities and well-defined optimally time-dependent (OTD) modes.

In parallel, we evolve an ensemble SDE around the same trajectory, as outlined in Section~\ref{sec:method}, and compute the empirical covariance \(\hat{\Sigma}_M(t)\) of centered ensemble perturbations in the three-dimensional state space. The leading eigenvector \(q_{\mathrm{ens}}(t)\) of \(\hat{\Sigma}_M(t)\) is then compared to \(q_{\mathrm{OTD}}(t)\) over time. See Appendix~\ref{app:lowdimsystem} for the full system definition and further details.

The time-resolved evolution between \(q^{{OTD}}(t)\) and \(q_{\mathrm{ens}}(t)\) remains consistent over extended intervals. This behavior is in agreement with the theoretical analysis in Section~\ref{sec:method}.
The comparison in Figure~\ref{fig:OTD_alignment} is intended to test temporal localization of finite-time instability rather than exact component-wise amplitude recovery. Perfect amplitude agreement across components is neither expected nor required. The relevant observation is that the ensemble-derived mode becomes active over the same burst window as the OTD mode. This supports the interpretation of the ensemble covariance as a non-intrusive detector of transient instability episodes rather than a generic measure of spread, and that this proxy can be constructed without computing tangent dynamics or Jacobians.

\begin{table}[H]
\centering
\small
\begin{tabular}{llccc}
\toprule
Diagnostic & No-FiLM attention & Best FiLM setting & Best FiLM value & Reduction \\
\midrule
 $L_{1_{\log}}(z_3)$
& $6.903 \pm 0.100$
& eigvec, $\sigma=10^{-6}$, $M=2$
& $6.727 \pm 0.067$
& $2.5\%$ \\
 $E_{99}(z_3)$
& $0.0056 \pm 0.0017$
& mean/std/eigs, $\sigma=10^{-5}$, $M=2$
& $0.0041 \pm 0.0016$
& $26.3\%$ \\
\bottomrule
\end{tabular}
\vspace{5pt}
\caption{
Sweep-selected low-dimensional $z_3$ tail diagnostics. Entries are mean $\pm$ standard error over five independent runs. Training was conducted on a trajectory of length $T_{\rm train}= 150$ while being tested on an independent trajectory of length $T_{\rm test}= 500$. This table is a sensitivity summary, not the primary model-selection protocol. See Tables~\ref{tab:lowdim_z3_sweep_summary} and \ref{tab:lowdim_attention_ablation} for more detailed results.
}
\label{tab:lowdim_z3_sweep_summary}
\end{table}


%% file: 5_experiment.tex
\vspace{-0.5cm}
\section{Experiments on high-dimensional chaotic PDE (QG)}\label{sec:experiment}

\subsection{Two-layer QG setup and nudged ensembles}
\vspace{-0.2cm}
We evaluate on the baroclinic two-layer quasi-geostrophic system, a canonical high-dimensional chaotic PDE with intermittent and topography-induced extremes \cite{qi2018, barthelsorensen2024, sorensen2024}. The reference dynamics are simulated on a \(128\times128\) grid while the coarse model uses the same equations on a \(24\times24\) grid. The dominant error is the unresolved-scale bias rather than a changed governing law. A fixed spectral projector maps the reference state to the coarse space, and lower-layer topography induces spatially localized exceedances. The nudged ensemble is generated by relaxing the coarse trajectory toward the projected reference on timescale \(\tau\) and injecting isotropic perturbations of amplitude \(\sigma\) only in resolved modes, so the ensemble remains interpretable as a local probe around a synchronized coarse trajectory rather than as a generic stochastic simulator. We refer to the Appendix~\ref{app:QG} for a complete description of the governing equations, numerical discretization, and simulation parameters.

\subsection{Baseline}
\vspace{-0.3cm}
The primary baseline is a probabilistic recurrent correction model (STORN) trained on a single nudged trajectory, without ensemble inputs. This choice is based on prior controlled comparisons on the same system, where standard RNNs, VAE–RNN hybrids, and variational RNNs (VRNNs) were evaluated under identical conditions and probabilistic recurrent formulations consistently performed best \cite{barthelsorensen2024, barthelsorensen2024a}.


\subsection{Results}
\vspace{-0.3cm}
We evaluate the corrected trajectory against the high-resolution reference using KL divergence to capture attractor misalignment and the log-L1 distance to emphasize tail discrepancies, full details are provided in the Appendix ~\ref{app:evaluation}.

\subsubsection{Data-limited learning of extremes}
\vspace{-0.3cm}
The central result is that ensemble conditioning improves tail fidelity in the data-limited regime. With short high-resolution training windows, the conditioned model better matches the reference on tail distributions, exceedance probabilities, and long-horizon density statistics than the single-trajectory baseline trained on the same data. In several settings, it matches or exceeds a baseline trained on much longer trajectories, which is the relevant comparison for data efficiency in rare-event emulation. Tail- and exceedance-specific diagnostics are reported in Table~\ref{tab:aot_geometry_main}.

\begin{table}[t]
\centering
\small
\setlength{\tabcolsep}{3.2pt}
\renewcommand{\arraystretch}{1.15}
\begin{tabular}{@{}l c c r r r r r r@{}}
\toprule
Method
& $T_{\mathrm{train}}$
& $(\sigma,M)$
& $D_{\mathrm{KL}}\downarrow$
& $L_{\log}\downarrow$
& \shortstack{$\psi_2\,\bar E_{\mathrm{AOT}}$\\$\downarrow$}
& \shortstack{$\psi_2\,\bar E_{\mathrm{freq}}$\\$\downarrow$}
& \shortstack{Avg. $\bar E_{\mathrm{AOT}}$\\$\downarrow$}
& \shortstack{Avg. $\bar E_{\mathrm{freq}}$\\$\downarrow$} \\
\midrule
STORN$^*$
& 1000
& \textemdash
& \textbf{0.00393}
& \textbf{5.3454}
& 0.00501
& 0.10564
& 0.01195
& 0.11726 \\
STORN
& 50
& \textemdash
& 0.07079
& 11.970
& 0.00589
& 0.14660
& 0.01574
& 0.12118 \\
FiLM-STORN (ours)
& 50
& $(0.01,2)$
& 0.03646
& 8.2959
& \textbf{0.00391}
& \textbf{0.06874}
& \textbf{0.01047}
& \textbf{0.07199} \\
\bottomrule
\end{tabular}
\vspace{7pt}
\caption{
Held-out QG comparison emphasizing high-threshold extreme-event diagnostics. Lower is better for all metrics. The STORN and FiLM-STORN models use the same previously validated STORN backbone \cite{barthelsorensen2024}, loss, optimizer and all training hyperparameters were taken from the baseline configuration optimized for the long-data STORN model with $T_{\mathrm{train}}=1000$. The FiLM variant does not retune the backbone or training procedure, and differs only by the addition of the ensemble-conditioned FiLM interface and the ensemble parameters $(\sigma, M)=(0.01,2)$. With only $T_{\mathrm{train}}=50$, FiLM-STORN improves the threshold-exceedance area and event-frequency errors relative to STORN trained on 20 times longer trajectory data. The full sweep over $\sigma$ and $M$ is reported in Appendix~\ref{app:addresults:details}.
}
\label{tab:main_distributional_summary}
\end{table}


\subsubsection{Ensemble parameter sweeps and robustness}
\vspace{-0.3cm}
We sweep the ensemble noise level \(\sigma\) and ensemble size \(M\) to test whether the method depends on a narrow tuning regime. The observed pattern is a broad intermediate window: if \(\sigma\) is too small, the ensemble collapses and fails to resolve anisotropy and localize finite-time instability episodes, if \(\sigma\) is too large, the covariance no longer tracks the local nudged dynamics. Increasing \(M\) improves performance only until the dominant covariance directions are stably resolved, after which the gains saturate. This behavior is consistent with the interpretation of the ensemble as a finite-time instability probe rather than a generic stochastic regularizer (see Figure \ref{fig:noise_ensemble_metrics}).

\begin{figure}[htbp]
    \centering
    \begin{subfigure}[b]{0.99\linewidth}
        \centering
        \includegraphics[width=\linewidth]{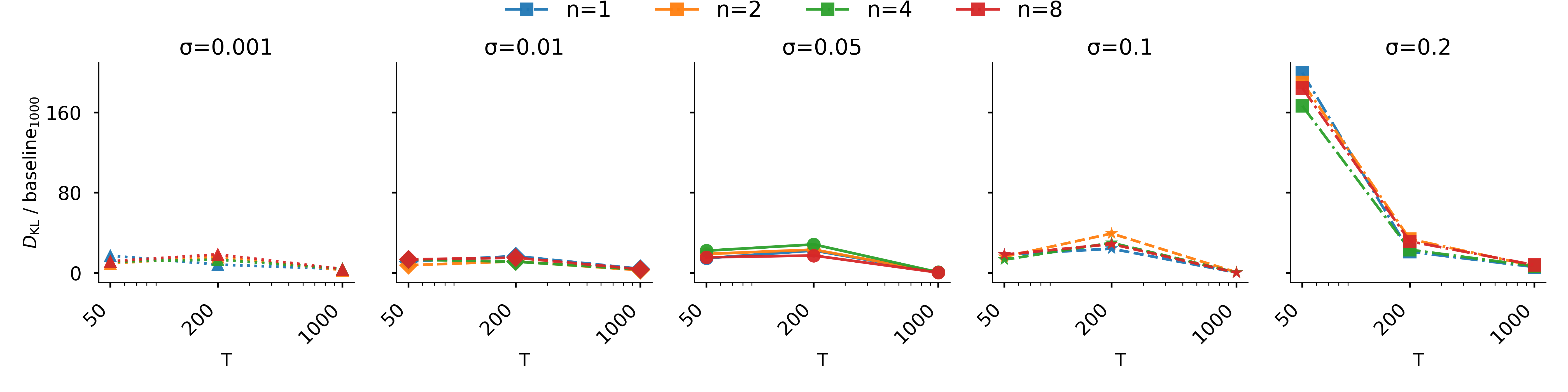}
        \caption{KL divergence relative to the $T=1000$ baseline for varying noise intensity and ensemble sizes.}
        \label{fig:kl_vs_noise}
    \end{subfigure}
    
    \vspace{0.5cm} 
    
    \begin{subfigure}[b]{0.99\linewidth}
        \centering
        \includegraphics[width=\linewidth]{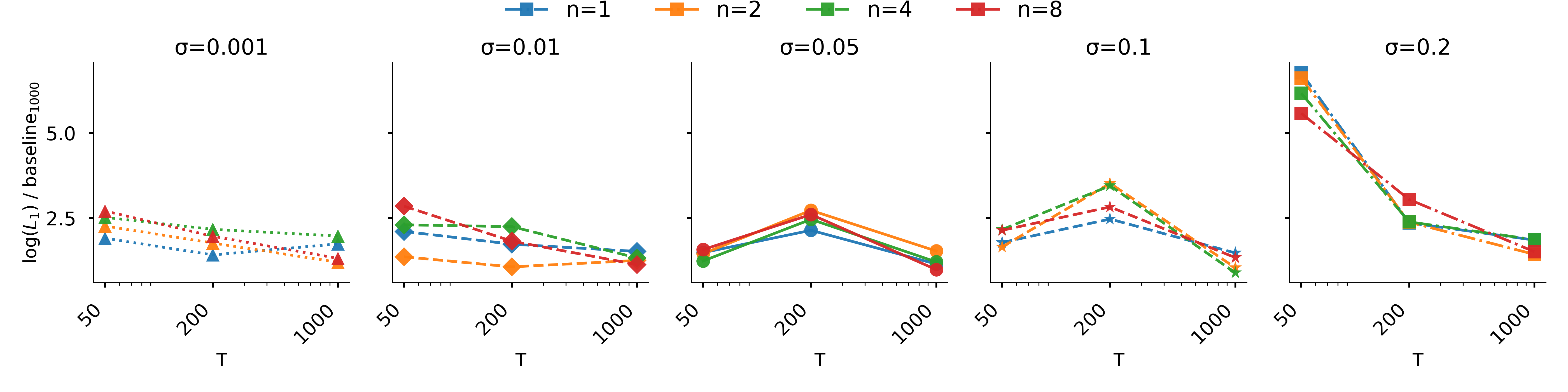}
        \caption{Log-L1 error relative to the $T=1000$ baseline for varying noise intensity and ensemble sizes.}
        \label{fig:logL1_vs_noise}
    \end{subfigure}
    
    \caption{Comparison of corrected trajectories across ensemble sizes and noise intensities for KL divergence \ref{fig:kl_vs_noise} and log-L1 errors \ref{fig:logL1_vs_noise}, both normalized relative to the $T=1000$ baseline.}
    \label{fig:noise_ensemble_metrics}
\end{figure}

\subsubsection{Extreme-event structure and exceedance geometry}
\vspace{-0.3cm}
Beyond scalar tail statistics, the conditioned model better reproduces the spatial structure of exceedance sets. In the QG benchmark, extreme events are localized and anisotropic, with dependence on the introduced bottom topography (see Fig \ref{fig:app_QGtopograpy}). A model that matches marginal tails but misses this geometry is not capturing the mechanism of the event. The ensemble-conditioned surrogate improves the localization, shape, and persistence of thresholded event regions relative to the single-trajectory baseline and the pooled-ensemble control. This matters because the relevant object is not just the distribution of amplitudes but the geometry of the excursion manifold that generates those amplitudes. The empirical pattern therefore supports that local instability information, extracted non-intrusively from a nudged ensemble, can be converted into a conditioning signal that improves rare-event pathways, not only pointwise prediction.

\begin{table}[t]
\centering
\small
\setlength{\tabcolsep}{5pt}
\renewcommand{\arraystretch}{1.12}
\begin{tabular}{cccccc}
\toprule
Diagnostic & Metric & CR & STORN \(T=50\) & STORN \(T=1000\) & FiLM-STORN \(T=50\) \\
\midrule
\(\psi_2\)
& \(\overline{\mathcal E}_{\mathrm{AOT}}\)
& 0.00970 & 0.00589 & 0.00501 & \textbf{0.00391} \\
\(\psi_2\)
& \(\overline{\mathcal E}_{\mathrm{freq}}\)
& 0.25215 & 0.14660 & 0.10564 & \textbf{0.06874} \\
Avg. \(\psi_1,\psi_2\)
& \(\overline{\mathcal E}_{\mathrm{AOT}}\)
& 0.03115 & 0.01574 & 0.01195 & \textbf{0.01047} \\
Avg. \(\psi_1,\psi_2\)
& \(\overline{\mathcal E}_{\mathrm{freq}}\)
& 0.40190 & 0.12118 & 0.11726 & \textbf{0.07199} \\
\bottomrule
\end{tabular}
\caption{
Extreme-event geometry errors over high thresholds \(c\in\{1.0,1.5,1.75\}\). 
\(\overline{\mathcal E}_{\mathrm{AOT}}\) is the mean empirical Wasserstein-1 distance between the distributions of exceedance area \(A_c(t)/A\), and \(\overline{\mathcal E}_{\mathrm{freq}}\) is the mean absolute error in event frequency \(\mathbb P(A_c(t)>0)\). Lower is better. The lower layer \(\psi_2\) is reported separately because the topographic forcing acts directly in that layer; the two-layer average checks that the improvement is not isolated to one field. FiLM-STORN uses \(T_{\mathrm{train}}=50\), \(\sigma=0.01\), and \(N_{\mathrm{add}}=2\).
}
\label{tab:aot_geometry_main}
\end{table}

\begin{figure}[htbp]
    \centering
    \begin{subfigure}[b]{0.48\linewidth}
        \centering
        \includegraphics[width=1.07\linewidth]{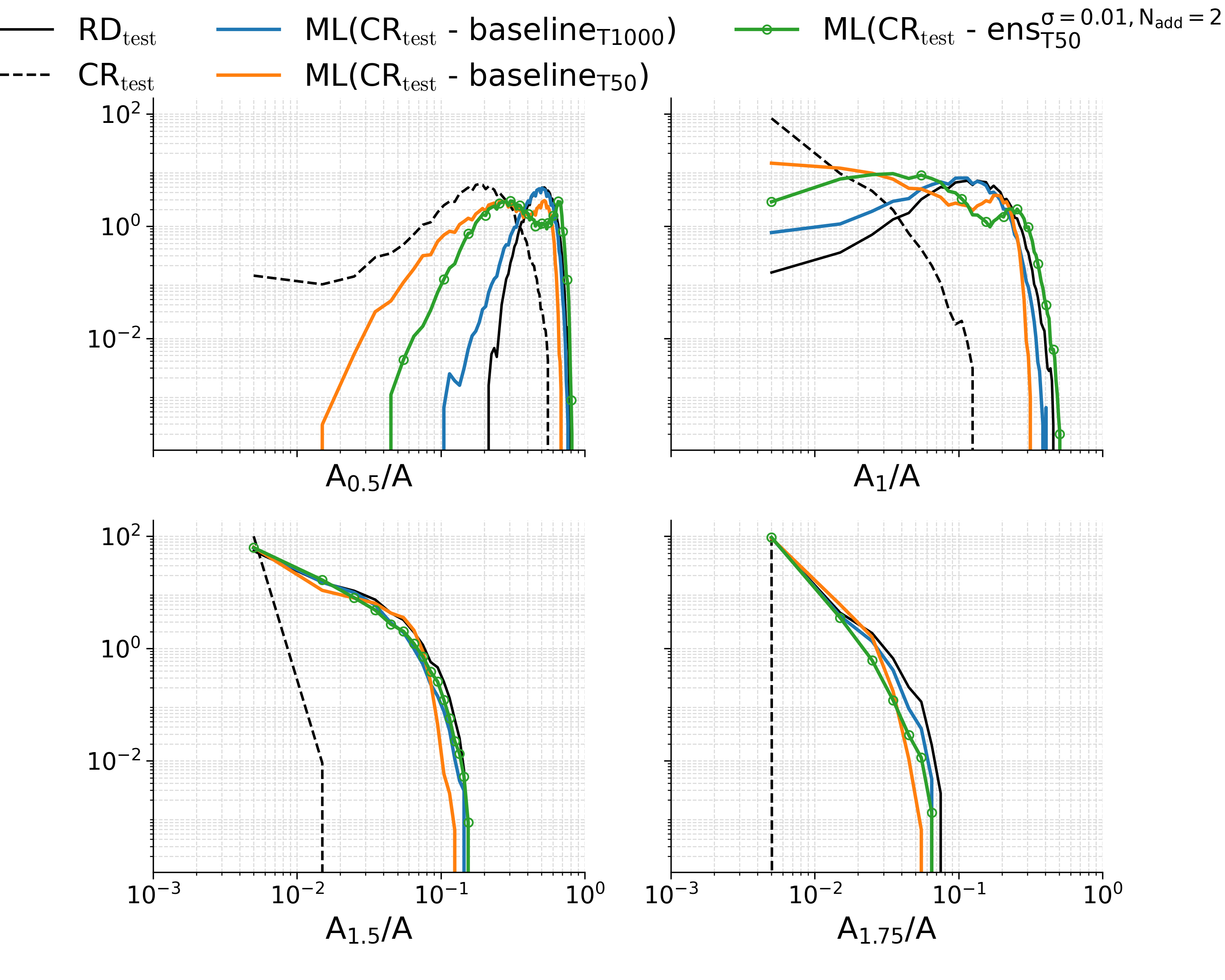}
        \caption{Threshold exceedance: $\psi_1$}
        \label{fig:psi1}
    \end{subfigure}
    \hfill
    \begin{subfigure}[b]{0.48\linewidth}
        \centering
        \includegraphics[width=1.07\linewidth]{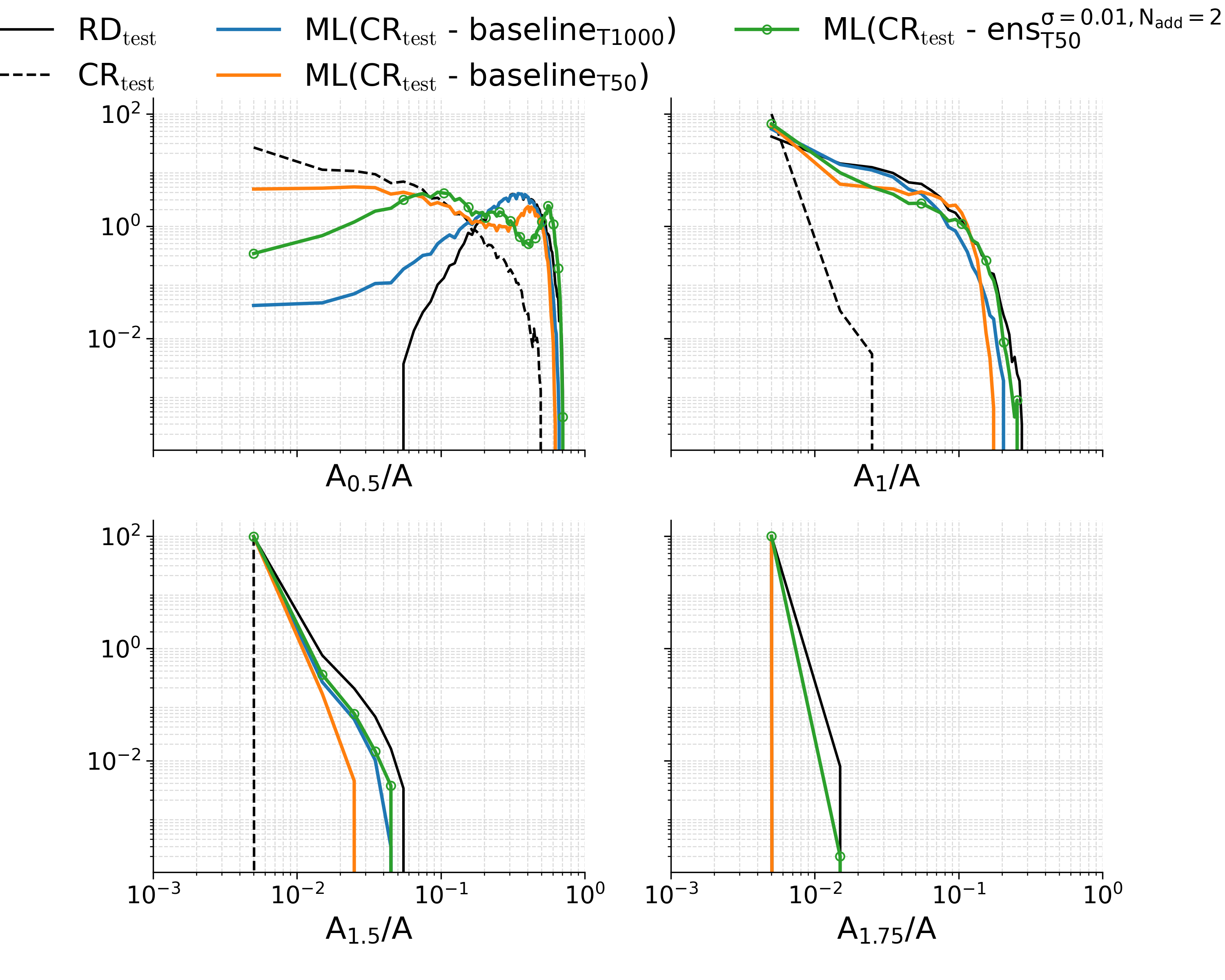}
        \caption{Threshold exceedance: $\psi_2$}
        \label{fig:psi2}
    \end{subfigure}
\caption{Threshold-exceedance area distributions across increasing amplitude thresholds. Each curve shows the empirical distribution of \(\mathrm{AOT}_c(t)=A_c(t)/A\), the fraction of the domain exceeding threshold \(c\). Black is the projected high-resolution reference, dashed black is the uncorrected coarse trajectory, blue is STORN trained with \(T_{\mathrm{train}}=1000\), orange is STORN trained with validation-selected configuration \(T_{\mathrm{train}}=50\), and green is FiLM-STORN trained with \(T_{\mathrm{train}}=50\), \(\sigma=0.01\), and \(N_{\mathrm{add}}=2\). The separation at the highest thresholds shows the rare-event regime: the short-data STORN baseline loses large exceedance regions, whereas ensemble conditioning recovers the reference tail more accurately.}
    \label{fig:aots}
\end{figure}

%% file: 6_limitations.tex
\section{Limitations} \label{sec:limitations}

The results should be interpreted in terms of both scope and structural assumptions. The ensemble-based instability proxy is most informative in regimes where the coarse model exhibits strong dynamical sensitivity relative to the reference, particularly when extreme events induce significant deviations between coarse and reference attractors. When coarse and reference statistics are already closely aligned, the additional conditioning signal is weak and the method offers limited improvement over standard coarse-correction approaches. Conversely, in severely data-limited cases the method can improve tail behavior while leaving bulk statistics under-corrected, simply because there are too few samples for the model to learn both the mean behavior and the far tails. 

Second, the instability proxy is approximate and regime-dependent. The covariance–instability correspondence relies on small-variance fluctuations, local linearization, and a spectral gap assumption. If noise is too weak, perturbations collapse and fail to resolve anisotropy; if too strong, covariance mixes dynamical and noise-induced variability, degrading alignment with the unstable subspace. 

Third, the ensemble introduces additional computational cost. While the coarse model is cheaper than the high-resolution reference, evolving an ensemble SDE and computing summary statistics is more expensive than a single trajectory. In the quasi-geostrophic experiments, small ensembles and low-order statistics (e.g., means and variances) suffice to yield substantial improvements in tail behavior. Nevertheless, when the coarse solver is itself costly or computational budgets are tight, this overhead must be weighed against the reduction in required high-resolution data.

Fourth, the empirical evaluation is intentionally scoped. We consider a prototypical quasi-geostrophic system with a probabilistic recurrent backbone, alongside a low-dimensional system with analytically tractable instability structure.
Accordingly, we do not claim uniform superiority, but demonstrate consistent gains within the studied model–system regimes.


%% file: 7_conclusion.tex
\section{Conclusion}\label{sec:conclusion}

We introduced mechanism-aware ensemble conditioning, a non-intrusive way to convert nudged coarse ensembles into local instability features for data-limited surrogate modeling. The central contribution is the coupling of a dynamical sensor with an architectural interface: a nudged ensemble exposes local finite-time instability geometry, and FiLM provides a lightweight way to inject this geometry into otherwise fixed temporal correctors. This shift in emphasis from proposing a new architecture to changing what information the architecture is conditioned on, is important for data efficiency: the gain comes from exploiting additional structure in the dynamics, rather than from increasing model capacity. We demonstrated this interface in two distinct backbone families, a probabilistic recurrent FiLM-STORN model for high-dimensional QG and a deterministic FiLM residual-attention model for the low-dimensional controlled system. The QG experiments show improved rare-event statistics in the data-limited regime, while the low-dimensional OTD comparison and correction-pipeline support the proposed geometric mechanism. These results suggest that ensemble-derived instability geometry is a reusable conditioning signal for rare-event emulation, rather than a feature tied to a single surrogate architecture.


%% file: appendix.tex
\section{Proofs of main results}\label{app:proofs}
\input{appendix_sections/proofs}

\section{Low-Dimensional System: methodology and implementation details} \label{app:lowdimsystem}

\input{appendix_sections/lowdim}

\section{QG model: governing equations and numerical implementation}\label{app:QG}

\input{appendix_sections/QG_model}

\section{QG model: methodology and implementation details}\label{app:ML_QG}
\input{appendix_sections/ML}

\section{QG model: evaluation and additional results}\label{app:evaluation}
\input{appendix_sections/add_results}

%% file: appendix_sections/proofs.tex
In this section, we provide the proofs of our main results.
\subsection{Well-posedness and moment control for nudged ensemble}\label{app:wellposedness}

This result is standard for time-inhomogeneous SDEs with Lipschitz drift, but we state it because the entire construction relies on a controlled stochastic probe that remains stable over the same horizons on which the surrogate is trained (see i.e. \cite{oksendal2003}). The relevance of the ensemble lies not in stochasticity per se, but in the geometry of the resulting covariance.

\begin{lemma}[Well-posedness and moment bounds] \label{lem:wellposedness}
Let $T>0$, and assume that $f \in C^2(\mathbb{R}^n; \mathbb{R}^n)$ is globally Lipschitz with linear growth, i.e.,
\[
\|f(x)-f(y)\| \le K\|x-y\|, \qquad \|f(x)\| \le K(1+\|x\|),
\]
for some $K>0$. Further assume that $u \in C([0,T];\mathbb{R}^n)$ and $\mathcal P F( \textbf{u}(t))\in \mathbb R ^n$ are bounded on $[0, T]$. Define the drift
\[
v_t(Q) := f(u(t)+Q) - \mathcal{P}F(\textbf{u}(t)) - \tau^{-1}Q.
\]
Then for every $\tau,\beta>0$ and any square-integrable initial condition $Q(0)$, the SDE
\[
dQ(t) = v_t(Q(t))\,dt + \sqrt{\tfrac{2}{\beta}}\,dB_t, \qquad t\in[0,T],
\]
admits a unique strong solution $Q \in C([0,T];\mathbb{R}^n)$. Moreover, for any fixed $p \ge 2$, if $Q(0) \in L^p$, then there exists a constant $C_{p,n}>0$ such that
\[
\sup_{t\in[0,T]} \mathbb{E}\|Q(t)\|^p \le C_{p,n}\Bigl(1 + \mathbb{E}\|Q(0)\|^p\Bigr),
\]
where $C_{p,n}$ depends only on $p$, $n$, $K$, $\tau$, $\beta$, $T$ and the bounds on $\sup_{t\leq T}\| u(t)\| $ on $\sup_{t\leq T}\|\mathcal{P}F(\textbf{u}(t))\| $.
\end{lemma}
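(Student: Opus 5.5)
The plan is to treat this as a standard existence-and-uniqueness result for SDEs with additive noise and a time-dependent, globally Lipschitz drift. The moment bound then follows from Itô's formula together with Gronwall's inequality.

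\emph{Step 1: verify the drift hypotheses.} First I would check that \(v_t\) satisfies the usual conditions uniformly in \(t\in[0,T]\). For the Lipschitz bound,
\[
\|v_t(x)-v_t(y)\|\le (K+\tau^{-1})\|x-y\|.
\]
For linear growth, writing \(U=\sup_{t\le T}\|u(t)\|\) and \(G=\sup_{t\le T}\|\mathcal P F(\textbf{u}(t))\|\),
\[
\|v_t(x)\|\le K(1+U+\|x\|)+G+\tau^{-1}\|x\|\le K'(1+\|x\|),
\]
where \(K'=\max\{K(1+U)+G,\;K+\tau^{-1}\}\). The drift is jointly measurable in \((t,x)\): it is continuous in \(x\), and \(t\mapsto u(t)\) is continuous. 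I would also assume \(t\mapsto\mathcal PF(\textbf{u}(t))\) is measurable, which is implicit in the statement.

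\emph{Step 2: existence and uniqueness.} The diffusion coefficient \(\sqrt{2/\beta}\,I\) is constant, so it is trivially Lipschitz. The classical theorem (Øksendal, Thm.~5.2.1) then gives a unique continuous strong solution for any \(Q(0)\in L^2\) independent of \(B\). If a self-contained argument is preferred, one can run Picard iteration in \(L^2(\Omega;C([0,T]))\) and use Gronwall to get pathwise uniqueness.

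\emph{Step 3: the \(p\)-th moment bound.} This is the main step. Apply Itô's formula to \(\|Q\|^p\) for \(p\ge2\). To avoid assuming integrability in advance, localize with stopping times \(\theta_R=\inf\{t:\|Q(t)\|\ge R\}\). Itô's formula gives
\[
d\|Q\|^p=p\|Q\|^{p-2}\langle Q,v_t(Q)\rangle\,dt+\tfrac{1}{\beta}\bigl(p\|Q\|^{p-2}n+p(p-2)\|Q\|^{p-2}\bigr)dt+dM_t,
\]
where \(M_t\) is a local martingale. The stopped martingale has zero mean. Next bound the two drift terms:
\[
\langle Q,v_t(Q)\rangle\le K'(\|Q\|+\|Q\|^2),
\qquad
\|Q\|^{p-2}\le 1+\|Q\|^p,
\qquad
\|Q\|^{p-1}\le 1+\|Q\|^p .
\]
The second and third inequalities are Young-type. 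Combining them gives
\[
\tfrac{d}{dt}\mathbb E\|Q(t\wedge\theta_R)\|^p\le A+B\,\mathbb E\|Q(t\wedge\theta_R)\|^p,
\]
with \(A,B\) depending only on \(p,n,K,\tau,\beta,U,G\). In practice I would write this in integral form, since the stopped process is not differentiable in \(t\).

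Gronwall's inequality then yields
\[
\mathbb E\|Q(t\wedge\theta_R)\|^p\le\bigl(\mathbb E\|Q(0)\|^p+AT\bigr)e^{BT}.
\]
Letting \(R\to\infty\) by Fatou's lemma, and using \(\theta_R\to\infty\) almost surely (the solution is continuous), gives the claim with \(C_{p,n}=\max(1,AT)e^{BT}\).

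The only delicate point is justifying that the martingale term vanishes in expectation, and the localization handles this. The dimension \(n\) enters only through the Itô correction term \(\tfrac{1}{\beta}p\,n\|Q\|^{p-2}\), which explains the dependence of \(C_{p,n}\) on \(n\).
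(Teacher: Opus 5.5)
Your proposal is correct and follows the same route as the paper. It checks that $v_t$ is uniformly Lipschitz with linear growth and invokes \O{}ksendal's Theorem 5.2.1, then applies It\^o's formula to $\|Q\|^p$, bounds the drift and It\^o-correction terms with Young's inequality, and closes with Gr\"onwall. The one difference is your stopping-time localization with Gr\"onwall in integral form. It properly justifies two steps the paper takes for granted when it differentiates $\mathbb{E}\|Q(t)\|^p$ directly: that the martingale term has zero mean, and that $\mathbb{E}\|Q(t)\|^p$ is finite.
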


\begin{proof}\hfill \newline
\textbf{Existence and uniqueness.}
Since $f$ is globally Lipschitz and $u(t)$ is bounded, the drift $v_t(Q)$ is globally Lipschitz in $Q$, uniformly in $t$, and satisfies a linear growth bound of the form
\[
\|v_t(Q)\| \le C(1+\|Q\|),
\]
for some $C>0$. The diffusion coefficient is constant. Standard results for SDEs with globally Lipschitz coefficients (e.g.\ \cite[Theorem 5.2.1]{oksendal2003}) therefore yield existence and uniqueness of a strong solution.

\vspace{0.5em}
\noindent
\textbf{Moment bounds.}
Fix $p\ge2$. Applying Itô's formula to $\phi(x)=\|x\|^p$ (which is $C^2$ away from the origin and admissible for $p\ge2$) yields
\[
d\|Q(t)\|^p = p\|Q\|^{p-2} Q^\top v_t(Q)\,dt + \frac{p(n+p-2)}{\beta}\|Q\|^{p-2}\,dt + p\|Q\|^{p-2} Q^\top \sqrt{\tfrac{2}{\beta}}\,dB_t.
\]
Taking expectations and using the growth bound on $v_t$ gives
\[
\frac{d}{dt}\mathbb{E}\|Q(t)\|^p \le C\,\mathbb{E}\big[\|Q\|^{p-1}(1+\|Q\|)\big]+C\,\mathbb{E}\|Q\|^{p-2}.
\]
Using Young’s inequality,
\(
\|Q\|^{p-1} \le 1 + \|Q\|^p, \; \|Q\|^{p-2} \le 1 + \|Q\|^p,
\)
we obtain
\[
\frac{d}{dt}\mathbb{E}\|Q(t)\|^p \le C_{p,n} \bigl(1 + \mathbb{E}\|Q(t)\|^p\bigr).
\]
An application of Grönwall’s inequality yields
\[
\sup_{t\in[0,T]} \mathbb{E}\|Q(t)\|^p \le C_{p,n}\Bigl(1 + \mathbb{E}\|Q(0)\|^p\Bigr),
\]
as claimed.
\end{proof}

\subsection{Proof of Proposition \ref{prop:covariance-instability} (Covariance-instability correspondence)}\label{app:covariance-instability}

\begin{theorem}[Covariance--instability correspondence, Theorem \ref{prop:covariance-instability} ]
Assume the conditions of Lemma~\ref{lem:wellposedness}. In addition, assume that
\[
\sup_{x\in\mathbb{R}^n}\|D^2 f(x)\|_{\mathrm{op}}\le H<\infty,
\]
and that the centered fluctuations \(Q'(t)=Q(t)-\bar Q(t)\), where \(\bar Q(t)=\mathbb{E}[Q(t)]\), satisfy
\[
\sup_{t\in[0,T]}\mathbb{E}\|Q'(t)\|^2\le C_2\beta^{-1},
\qquad
\sup_{t\in[0,T]}\mathbb{E}\|Q'(t)\|^4\le C_4\beta^{-2}.
\]
Then, for all $t\in[0,T]$, the following hold:

\begin{enumerate}
\item[(i)] (\textbf{Mean-field consistency}) The ensemble mean satisfies
\[
\dot{\bar{Q}}(t) =f\big(u(t)+\bar{Q}(t)\big) - \mathcal{P}F\big(\textbf{u}(t)\big) - \tau^{-1}\bar{Q}(t) + O(\beta^{-1}).
\]

\item[(ii)] (\textbf{Covariance dynamics}) The covariance
\(\Sigma(t) = \mathbb{E}\big[Q'(t)Q'(t)^\top\big]\)
satisfies the perturbed Lyapunov equation
\begin{equation}
\label{eq:app:covariance_dynamics}
\dot\Sigma(t)=L_\tau(t)\Sigma(t)+\Sigma(t)L_\tau(t)^\top + \frac{2}{\beta}I +R_\Sigma(t),
\qquad
\|R_\Sigma(t)\|_{\mathrm{op}}\le C_T\beta^{-3/2}.
\end{equation}

where $L_\tau(t) = Df(u(t)+\bar{Q}(t)) - \tau^{-1}I$.

\item[(iii)] (\textbf{Deformation-kernel representation}) Consequently, if \(\Phi(t,s)\) is the fundamental solution generated by \(L_\tau(t)\), then
\[
\Sigma(t)
=
\Phi(t,0)\Sigma(0)\Phi(t,0)^\top
+
\frac{2}{\beta}\int_0^t \Phi(t,s)\Phi(t,s)^\top\,ds
+
O_T(\beta^{-3/2}).
\]
Equivalently,
\[
\beta\Sigma(t)
=
\Phi(t,0)\beta\Sigma(0)\Phi(t,0)^\top
+
2\int_0^t \Phi(t,s)\Phi(t,s)^\top\,ds
+
O_T(\beta^{-1/2}).
\]
Thus, at the \(O(\beta^{-1})\) covariance scale, the leading geometry is determined by the finite-time deformation kernels.
\end{enumerate}

\end{theorem}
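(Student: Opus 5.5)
The plan is to Taylor-expand the drift around the mean trajectory, then take expectations of the SDE and of the Itô product for \(Q'Q'^\top\), and control every remainder using the Hessian bound \(H\) and the assumed moment bounds \(C_2\beta^{-1}\), \(C_4\beta^{-2}\). Write \(x(t)=u(t)+\bar Q(t)\) and expand
\[
f(u+Q)=f(x)+Df(x)Q'+r(t),\qquad \|r(t)\|\le \tfrac{H}{2}\|Q'(t)\|^2,
\]
using Taylor's theorem with integral remainder, which is valid since \(f\in C^2\) with \(\sup\|D^2f\|_{\mathrm{op}}\le H\). By Lemma~\ref{lem:wellposedness}, all moments up to order four are finite, so the stochastic integrals below are true martingales and expectations may be exchanged with time derivatives.

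For (i), take expectations in the integral form of the SDE. The Brownian term has zero mean, and \(\mathbb E[Df(x)Q']=0\) because \(\bar Q\) is deterministic. This gives \(\dot{\bar Q}=f(x)-\mathcal PF(\mathbf u)-\tau^{-1}\bar Q+\mathbb E[r]\), with \(\|\mathbb E r\|\le \tfrac H2 C_2\beta^{-1}\), which is (i).

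For (ii), subtract the mean equation to get
\[
dQ'=\big(L_\tau Q'+r-\mathbb E r\big)dt+\sqrt{2/\beta}\,dB_t .
\]
Apply Itô's product rule to \(Q'Q'^\top\) and take expectations. The quadratic covariation contributes \(\tfrac{2}{\beta}I\), and the martingale terms vanish, leaving
\[
\dot\Sigma=L_\tau\Sigma+\Sigma L_\tau^\top+\tfrac2\beta I+R_\Sigma,\qquad R_\Sigma=\mathbb E[(r-\mathbb E r)Q'^\top+Q'(r-\mathbb E r)^\top].
\]
Since \(\mathbb E Q'=0\), this reduces to \(R_\Sigma=\mathbb E[rQ'^\top+Q'r^\top]\). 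The bound \(\|r\|\le\tfrac H2\|Q'\|^2\) then gives \(\|R_\Sigma\|_{\mathrm{op}}\le H\,\mathbb E\|Q'\|^3\). Cauchy--Schwarz (or Hölder) bounds this by \(H(\mathbb E\|Q'\|^2)^{1/2}(\mathbb E\|Q'\|^4)^{1/2}\le H\sqrt{C_2C_4}\,\beta^{-3/2}\), which yields \(C_T\) uniformly on \([0,T]\).

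For (iii), note that \(L_\tau(t)\) is continuous, since \(\bar Q\) is \(C^1\) by (i) and \(Df\) is continuous, and bounded (\(\|Df\|\le K\)). Hence \(\Phi(t,s)\) exists with \(\|\Phi(t,s)\|\le e^{(K+\tau^{-1})T}\). Variation of constants for the matrix Lyapunov ODE gives exactly
\[
\Sigma(t)=\Phi(t,0)\Sigma(0)\Phi(t,0)^\top+\int_0^t\Phi(t,s)\big(\tfrac2\beta I+R_\Sigma(s)\big)\Phi(t,s)^\top ds,
\]
which can be checked by differentiating. The \(R_\Sigma\) contribution is then bounded in norm by \(Te^{2(K+\tau^{-1})T}C_T\beta^{-3/2}\). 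Multiplying through by \(\beta\) gives the equivalent rescaled statement.

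The main subtlety is the remainder \(R_\Sigma\). It requires a third absolute moment, which is obtained only by interpolating the two assumed moment bounds; the fourth-moment assumption is essential precisely here. One must also ensure that interchanging \(d/dt\) and \(\mathbb E\) is legitimate. I would handle this by working with the integrated identity \(\Sigma(t)-\Sigma(0)=\int_0^t(\cdots)ds\), which holds by Lemma~\ref{lem:wellposedness}, and then differentiating, since the integrand is continuous in \(s\).

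A further point to state carefully is that \(L_\tau\) depends on \(\bar Q\), which itself carries the \(O(\beta^{-1})\) correction. This causes no difficulty, because the fundamental solution is defined along the exact mean and no further approximation of \(\bar Q\) is needed.
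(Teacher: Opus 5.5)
Your proposal is correct and follows the paper's proof step for step: the same Taylor expansion about $u+\bar Q$ with remainder $\|r_t\|\le \tfrac H2\|Q'\|^2$, the same It\^o product rule giving $R_\Sigma=\mathbb E[rQ'^\top+Q'r^\top]$, bounded via Cauchy--Schwarz by $H\sqrt{C_2C_4}\,\beta^{-3/2}$, and the same variation-of-constants argument with $\|\Phi(t,s)\|_{\mathrm{op}}\le e^{(K+\tau^{-1})T}$. One small correction: the hypotheses only make $\mathcal P F(\mathbf u(t))$ bounded, so $\bar Q$ is Lipschitz rather than $C^1$ and (i) holds only for a.e.\ $t$, as the paper states; this is harmless, because continuity of $L_\tau$ needs only continuity of $\bar Q$, and the $\mathcal PF$ term drops out of the $\Sigma$ integrand since $\mathbb E Q'=0$.
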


\begin{proof}
All differential identities below hold for a.e. \(t\in[0,T]\); the integral representation in part~(iii) holds for every \(t\in[0,T]\).

\textbf{(i) Mean-field consistency.}
Decompose \(Q(t)=\bar Q(t)+Q'(t),\ \bar Q(t)=\mathbb E[Q(t)],\ \mathbb E[Q'(t)]=0.\) into its mean $\bar Q$ and centered fluctuations $Q'$
For notational simplicity define the time dependent drift term
\[
v_t(Q):=f(u(t)+Q)-\mathcal P F(\mathbf u(t))-\tau^{-1}Q
\]
Identify
\[
L_\tau(t):=Df(u(t)+\bar Q(t))-\tau^{-1}I
\]
and define the Taylor remainder
\[
r_t(Q'):=f(u(t)+\bar Q(t)+Q')-f(u(t)+\bar Q(t))-Df(u(t)+\bar Q(t))Q' .
\]
Since \(D^2f\) is uniformly bounded by assumption, with
\[
\|D^2f(x)\|_{\mathrm{op}}
:=
\sup_{\|a\|=\|b\|=1}\|D^2f(x)[a,b]\|
\le H,
\]
Taylor's theorem gives \(\|r_t(Q')\|\le \frac{H}{2}\|Q'\|^2 \). Thus
\[
v_t(\bar Q+Q')
=
v_t(\bar Q)+L_\tau(t)Q'+r_t(Q').
\]
Taking expectations in the SDE gives
\[
\dot{\bar Q}(t)=\mathbb E[v_t(Q(t))]
=
v_t(\bar Q(t))+\mathbb E[r_t(Q'(t))],
\]
where we used that the fluctuations are mean-zero, i.e., \(\mathbb E[Q'(t)]=0\). Therefore
\[
\|\mathbb E[r_t(Q'(t))]\|
\le
\frac{H}{2}\mathbb E\|Q'(t)\|^2
\le
\frac{H C_2}{2}\beta^{-1}.
\]
Hence
\[
\dot{\bar Q}(t)
=
f(u(t)+\bar Q(t))-\mathcal P F(\mathbf u(t))-\tau^{-1}\bar Q(t)
+
O(\beta^{-1}),
\]
uniformly on \([0,T]\).

\noindent
\textbf{(ii) Covariance dynamics.}
Subtracting the mean equation from the SDE yields the centered fluctuation equation
\[
dQ'(t)
=
\Big[
L_\tau(t)Q'(t)
+
r_t(Q'(t))
-
\mathbb E[r_t(Q'(t))]
\Big]dt
+
\sqrt{\frac{2}{\beta}}\,dB_t .
\]
This stochastic differential form is essential: the diffusion term remains present after centering and is precisely what generates the covariance source term. Apply Itô's product rule to \(Q'(t)Q'(t)^\top\). Since the quadratic variation of
\(\sqrt{2/\beta}\,B_t\) is \((2/\beta)I\,dt\), we obtain
\[
\begin{aligned}
\frac{d}{dt}\Sigma(t)
&=
L_\tau(t)\Sigma(t)+\Sigma(t)L_\tau(t)^\top
+
\frac{2}{\beta}I
+
R_\Sigma(t),
\end{aligned}
\]
where
\[
\begin{aligned}
R_\Sigma(t)
&=
\mathbb E\Big[
\big(r_t(Q')-\mathbb E r_t(Q')\big)Q'^\top
\Big]
+
\mathbb E\Big[
Q'\big(r_t(Q')-\mathbb E r_t(Q')\big)^\top
\Big].
\end{aligned}
\]
Again, using \(\mathbb E[Q']=0\) such that the terms involving \(\mathbb E r_t(Q')\) vanish, and hence
\[
R_\Sigma(t)
=
\mathbb E\big[r_t(Q')Q'^\top\big]
+
\mathbb E\big[Q'r_t(Q')^\top\big].
\]
Consequently,
\[
\begin{aligned}
\|R_\Sigma(t)\|_{\mathrm{op}}
&\le
2\,\mathbb E\big[\|r_t(Q')\|\,\|Q'\|\big] \\
&\le
H\,\mathbb E\|Q'(t)\|^3 \\
&\le
H\big(\mathbb E\|Q'(t)\|^2\big)^{1/2}
  \big(\mathbb E\|Q'(t)\|^4\big)^{1/2} \\
&\le
H(C_2C_4)^{1/2}\beta^{-3/2}.
\end{aligned}
\]
Thus \(\|R_\Sigma(t)\|_{\mathrm{op}}\le C_T\beta^{-3/2}\), uniformly on \([0,T]\).

\vspace{0.8em}
\noindent
\textbf{(iii) Deformation-kernel representation.}
Let \(\Phi(t,s)\) be the fundamental solution generated by \(L_\tau(t)\) i.e., \(\partial_t\Phi(t,s)=L_\tau(t)\Phi(t,s),\;\Phi(s,s)=I.\)
Since \(f\) is globally Lipschitz and \(C^1\), \(\|Df\|_{\mathrm{op}}\le K\), and hence
\[
\|L_\tau(t)\|_{\mathrm{op}}\le K+\tau^{-1}.
\]
Therefore
\[
\sup_{0\le s\le t\le T}\|\Phi(t,s)\|_{\mathrm{op}}
\le
\exp\big((K+\tau^{-1})T\big)
=:C_\Phi(T).
\]

Fix \(t\in[0,T]\) and define
\[
\widetilde\Sigma(s;t)
:=
\Phi(t,s)\Sigma(s)\Phi(t,s)^\top .
\]
Using \(\partial_s\Phi(t,s)=-\Phi(t,s)L_\tau(s)\), we get
\[
\begin{aligned}
\frac{d}{ds}\widetilde\Sigma(s;t)
&=
\Phi(t,s)
\Big[
\dot\Sigma(s)
-
L_\tau(s)\Sigma(s)
-
\Sigma(s)L_\tau(s)^\top
\Big]
\Phi(t,s)^\top \\
&=
\Phi(t,s)
\Big[
\frac{2}{\beta}I+R_\Sigma(s)
\Big]
\Phi(t,s)^\top .
\end{aligned}
\]
Integrating from \(s=0\) to \(s=t\) gives
\[
\Sigma(t)
-
\Phi(t,0)\Sigma(0)\Phi(t,0)^\top
=
\frac{2}{\beta}\int_0^t
\Phi(t,s)\Phi(t,s)^\top\,ds
+
\int_0^t
\Phi(t,s)R_\Sigma(s)\Phi(t,s)^\top\,ds .
\]
The last term satisfies
\[
\begin{aligned}
\left\|
\int_0^t
\Phi(t,s)R_\Sigma(s)\Phi(t,s)^\top\,ds
\right\|_{\mathrm{op}}
&\le
C_\Phi(T)^2
\int_0^t \|R_\Sigma(s)\|_{\mathrm{op}}\,ds \\
&\le
C_\Phi(T)^2 T C_T\beta^{-3/2}
=
O_T(\beta^{-3/2}).
\end{aligned}
\]
Therefore
\[
\Sigma(t)
=
\Phi(t,0)\Sigma(0)\Phi(t,0)^\top
+
\frac{2}{\beta}\int_0^t
\Phi(t,s)\Phi(t,s)^\top\,ds
+
O_T(\beta^{-3/2})
\]
in operator norm or equivalently, multiplying by \(\beta\) gives
\[
\beta\Sigma(t)
=
\Phi(t,0)\beta\Sigma(0)\Phi(t,0)^\top
+
2\int_0^t
\Phi(t,s)\Phi(t,s)^\top\,ds
+
O_T(\beta^{-1/2}).
\]
Thus, at the \(O(\beta^{-1})\) covariance scale, the covariance geometry is generated by the finite-time deformation kernels of the linearized nudged dynamics.
\end{proof}

\subsection{Proof of Theorem \ref{theo:finite_ensemble_approx} (Finite-ensemble approximation of instability subspaces)}\label{app:finite_ensemble_approx}


\begin{theorem}[Finite-sample approximation of covariance-induced instability subspaces]
\ref{theo:finite_ensemble_approx}
Fix \(t\in[0,T]\). Assume that the centered ensemble fluctuations \(Q^{\prime(1)}(t),\ldots,Q^{\prime(M)}(t)\) are iid satisfying \(\mathbb E[Q'(t)]=0\) and \(\mathbb E[\|Q'(t)\|_2^4]\le \kappa<\infty\). Let
\[
\Sigma(t)=\mathbb E[Q'(t)Q'(t)^\top],\qquad \widehat\Sigma_M(t)=\frac1M\sum_{m=1}^M Q^{\prime(m)}(t)Q^{\prime(m)}(t)^\top .
\]
Let \(R(t)\) and \(\widehat R(t)\) denote the orthogonal projectors onto the leading \(r\)-dimensional eigenspaces of \(\Sigma(t)\) and \(\widehat\Sigma_M(t)\), respectively. Assume a spectral gap holds uniformly, i.e. \(\lambda_r(\Sigma(t))-\lambda_{r+1}(\Sigma(t))\ge \delta>0\). Then
\[
\mathbb E[\|\widehat R(t)-R(t)\|_{\mathrm{op}}]\le \frac{2\sqrt{\kappa}}{\delta\sqrt M}.
\]
Moreover, for every \(\varepsilon>0\),
\[
\mathbb P\{\|\widehat R(t)-R(t)\|_{\mathrm{op}}\ge\varepsilon\}\le \frac{4\kappa}{\delta^2M\varepsilon^2}.
\]
\end{theorem}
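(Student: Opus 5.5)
The proof combines a deterministic eigenspace perturbation bound with a second-moment estimate for the sample covariance. First, the Davis--Kahan $\sin\Theta$ theorem, in the form for spectral projectors with a gap condition on the unperturbed matrix only, gives for symmetric $\Sigma$ and $\widehat\Sigma_M$
\[
\|\widehat R(t)-R(t)\|_{\mathrm{op}}\le \frac{2\|\widehat\Sigma_M(t)-\Sigma(t)\|_{\mathrm{op}}}{\delta}.
\]
This is the Yu--Wang--Samworth variant, which needs only $\lambda_r(\Sigma)-\lambda_{r+1}(\Sigma)\ge\delta$ and no gap assumption on $\widehat\Sigma_M$. Because the projectors are orthogonal, $\|\widehat R-R\|_{\mathrm{op}}$ equals the sine of the largest principal angle, so this bound applies directly. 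It is also trivially true whenever $\|\widehat\Sigma_M-\Sigma\|_{\mathrm{op}}\ge\delta/2$, since $\|\widehat R-R\|_{\mathrm{op}}\le 1$.

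Second, I will bound the mean-square deviation of the sample covariance using the Frobenius norm, which dominates the operator norm. Write $X_m:=Q^{\prime(m)}Q^{\prime(m)\top}-\Sigma$. These are iid, mean zero, and take values in the Hilbert space of matrices with the Frobenius inner product. Cross terms vanish, so
\[
\mathbb E\|\widehat\Sigma_M-\Sigma\|_F^2=\frac1M\,\mathbb E\|X_1\|_F^2=\frac1M\Big(\mathbb E\|Q'\|^4-\|\Sigma\|_F^2\Big)\le\frac{\kappa}{M}.
\]
Here I use $\|Q'Q'^\top\|_F=\|Q'\|^2$ and $\mathbb E\langle Q'Q'^\top,\Sigma\rangle_F=\|\Sigma\|_F^2$. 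Jensen's inequality then gives $\mathbb E\|\widehat\Sigma_M-\Sigma\|_{\mathrm{op}}\le\sqrt{\kappa/M}$.

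Third, combining the two steps yields $\mathbb E\|\widehat R-R\|_{\mathrm{op}}\le 2\sqrt\kappa/(\delta\sqrt M)$. For the tail bound, Markov's inequality applied to the squared quantity gives
\[
\mathbb P\{\|\widehat R-R\|_{\mathrm{op}}\ge\varepsilon\}\le\frac{\mathbb E\|\widehat R-R\|^2_{\mathrm{op}}}{\varepsilon^2}\le\frac{4\,\mathbb E\|\widehat\Sigma_M-\Sigma\|_{\mathrm{op}}^2}{\delta^2\varepsilon^2}\le\frac{4\kappa}{\delta^2M\varepsilon^2}.
\]
The squared bound comes from squaring the deterministic Davis--Kahan inequality pointwise before taking expectations.

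The main obstacle is the first step: citing the right version of Davis--Kahan with the constant $2$ and a gap condition on the population matrix alone. The classical version requires a separation between the eigenvalues of $\widehat\Sigma_M$ and those of $\Sigma$, which would bring in Weyl's inequality and a case split. I expect the cleanest route is to invoke Yu--Wang--Samworth directly, which bounds $\|\sin\Theta\|_{\mathrm{op}}$ by $2\|\widehat\Sigma_M-\Sigma\|_{\mathrm{op}}/\delta$. Note that the centered fluctuations here use the true mean $\bar Q$, not the sample mean, so no extra bias term arises.
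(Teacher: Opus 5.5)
Your proposal is correct and follows essentially the same route as the paper. Both arguments use the Frobenius second-moment identity $\mathbb E\|\widehat\Sigma_M-\Sigma\|_F^2=\frac1M(\mathbb E\|Q'\|^4-\|\Sigma\|_F^2)\le\kappa/M$, then Jensen, then the operator-norm Davis--Kahan bound with constant $2/\delta$. The tail step differs only cosmetically: the paper applies Markov to $\|\widehat\Sigma_M-\Sigma\|_F^2$ and uses that $\|\widehat R-R\|_{\mathrm{op}}\ge\varepsilon$ forces $\|\widehat\Sigma_M-\Sigma\|_{\mathrm{op}}\ge\delta\varepsilon/2$, whereas you apply Markov to the squared projector error. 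Your observation that the population-gap-only Davis--Kahan bound follows from the classical one via Weyl and a case split at $\|\widehat\Sigma_M-\Sigma\|_{\mathrm{op}}=\delta/2$ makes explicit a step the paper leaves implicit.
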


\begin{proof}
Define \(X_m=Q^{\prime(m)}(t)Q^{\prime(m)}(t)^\top-\Sigma(t)\) and \(E_M=\widehat\Sigma_M(t)-\Sigma(t)=M^{-1}\sum_{m=1}^M X_m\). Then \(\mathbb E[X_m]=0\), and since the \(X_m\)'s are independent mean-zero random matrices in the Hilbert space of matrices equipped with the Frobenius inner product,
\[
\mathbb E[\|E_M\|_F^2]=\frac1{M^2}\sum_{m=1}^M\mathbb E[\|X_m\|_F^2]=\frac1M\mathbb E[\|X_1\|_F^2].
\]
Now
\[
\mathbb E[\|X_1\|_F^2]=\mathbb E[\|Q'(t)Q'(t)^\top-\Sigma(t)\|_F^2]=\mathbb E[\|Q'(t)Q'(t)^\top\|_F^2]-\|\Sigma(t)\|_F^2\le \mathbb E[\|Q'(t)\|_2^4]\le \kappa,
\]
where we used \(\mathbb E[Q'(t)Q'(t)^\top]=\Sigma(t)\) and \(\|Q'(t)Q'(t)^\top\|_F^2=\|Q'(t)\|_2^4\). Hence
\[
\mathbb E[\|E_M\|_{\mathrm{op}}]\le \mathbb E[\|E_M\|_F]\le \{\mathbb E[\|E_M\|_F^2]\}^{1/2}\le \sqrt{\kappa/M}.
\]
By Davis--Kahan,
\[
\|\widehat R(t)-R(t)\|_{\mathrm{op}}\le \frac{2}{\delta}\|\widehat\Sigma_M(t)-\Sigma(t)\|_{\mathrm{op}}=\frac{2}{\delta}\|E_M\|_{\mathrm{op}}.
\]
Taking expectations gives
\[
\mathbb E[\|\widehat R(t)-R(t)\|_{\mathrm{op}}]\le \frac{2}{\delta}\mathbb E[\|E_M\|_{\mathrm{op}}]\le \frac{2\sqrt{\kappa}}{\delta\sqrt M}.
\]
For the probability bound, Markov's inequality gives
\[
\mathbb P\{\|E_M\|_{\mathrm{op}}\ge a\}\le \mathbb P\{\|E_M\|_F\ge a\}\le \frac{\mathbb E[\|E_M\|_F^2]}{a^2}\le \frac{\kappa}{Ma^2}.
\]
Using Davis--Kahan again, \(\|\widehat R(t)-R(t)\|_{\mathrm{op}}\ge\varepsilon\) implies \(\|E_M\|_{\mathrm{op}}\ge \delta\varepsilon/2\). Therefore
\[
\mathbb P\{\|\widehat R(t)-R(t)\|_{\mathrm{op}}\ge\varepsilon\}\le \frac{4\kappa}{\delta^2M\varepsilon^2}.
\]
\end{proof}
\begin{remark}[Sub-Gaussian strengthening]
If \(Q'(t)\) is sub-Gaussian, then standard covariance concentration yields exponential high-probability bounds for \(\|\widehat\Sigma_M(t)-\Sigma(t)\|_{\mathrm{op}}\), and Davis--Kahan gives the corresponding exponential projector bound. The fourth-moment theorem above is the assumption-minimal statement needed for the \(M^{-1/2}\) mean projector error used in this work.
\end{remark}

%% file: appendix_sections/lowdim.tex
\subsection{Dynamical system and coarse model.}
\label{app:lowdim_system} 
To validate the covariance-based instability proxy in a setting where the unstable directions are known independently from tangent dynamics, we consider the three-dimensional nonlinear system of \cite{babaee2016}. This example was chosen because it exhibits alternating regimes of non-normal transient growth and exponential growth/decay, so that the dominant finite-time instability direction changes repeatedly along a single trajectory and cannot be captured by a fixed eigenvector of the linearized operator. Let \(z=(z_1,z_2,z_3)^\top\). The governing equations are
\begin{equation}
\dot{z}_1 = -a_1 z_1 + \epsilon z_2 + b z_3,\qquad
\dot{z}_2 = \epsilon^{-1} z_1 - a_2 z_2,\qquad
\dot{z}_3 = b z_3\left(\frac{1}{\sqrt{z_1^2+z_2^2}}-1\right),
\label{eq:lowdim_system}
\end{equation}
with parameters
\begin{equation}
a_1=a_2=2,\qquad \epsilon=0.05,\qquad b=20,
\end{equation}
as in \cite{babaee2016}. For these values, the dynamics are nearly periodic and each cycle contains four distinct regimes: non-normal growth in the \(z_1\)--\(z_2\) plane, decay toward the origin in the same plane, growth in the \(z_3\) direction, and decay in \(z_3\). The singularity at \(z_1^2+z_2^2=0\) further induces strong finite-time amplification near the origin, making the example a stringent test of whether an ensemble covariance can recover the relevant instability subspace. The coarse model we introduce is the same nonlinear three-dimensional system, but with an additional linear damping term \(-\gamma z_3\) in the third equation to moderate the strongest growth near the singular region.

We compare the dominant OTD mode \(q_{\mathrm{OTD}}(t)\) from \cite{babaee2016,babaee2017} with the leading eigenvector \(q_{\mathrm{ens}}(t)\) of the empirical covariance of the centered ensemble generated around the nudged trajectory. If \(z^{(m)}(t)\), \(m=1,\dots,M\), denote the ensemble realizations and \(\bar z(t)=M^{-1}\sum_{m=1}^M z^{(m)}(t)\), then
\begin{equation}
\hat\Sigma_M(t)=\frac{1}{M}\sum_{m=1}^M \big(z^{(m)}(t)-\bar z(t)\big)\big(z^{(m)}(t)-\bar z(t)\big)^\top,
\end{equation}
and \(q_{\mathrm{ens}}(t)\) is taken as the leading eigenvector of \(\hat\Sigma_M(t)\). The nudging timescale is fixed to \(\tau=8\). The purpose of this example is not predictive accuracy, but mechanistic validation: it shows that the ensemble covariance tracks the same time-dependent unstable subspace identified by OTD theory without integrating tangent dynamics or forming Jacobians explicitly.

\begin{figure}[H]
        \centering
        \includegraphics[width=1\linewidth]{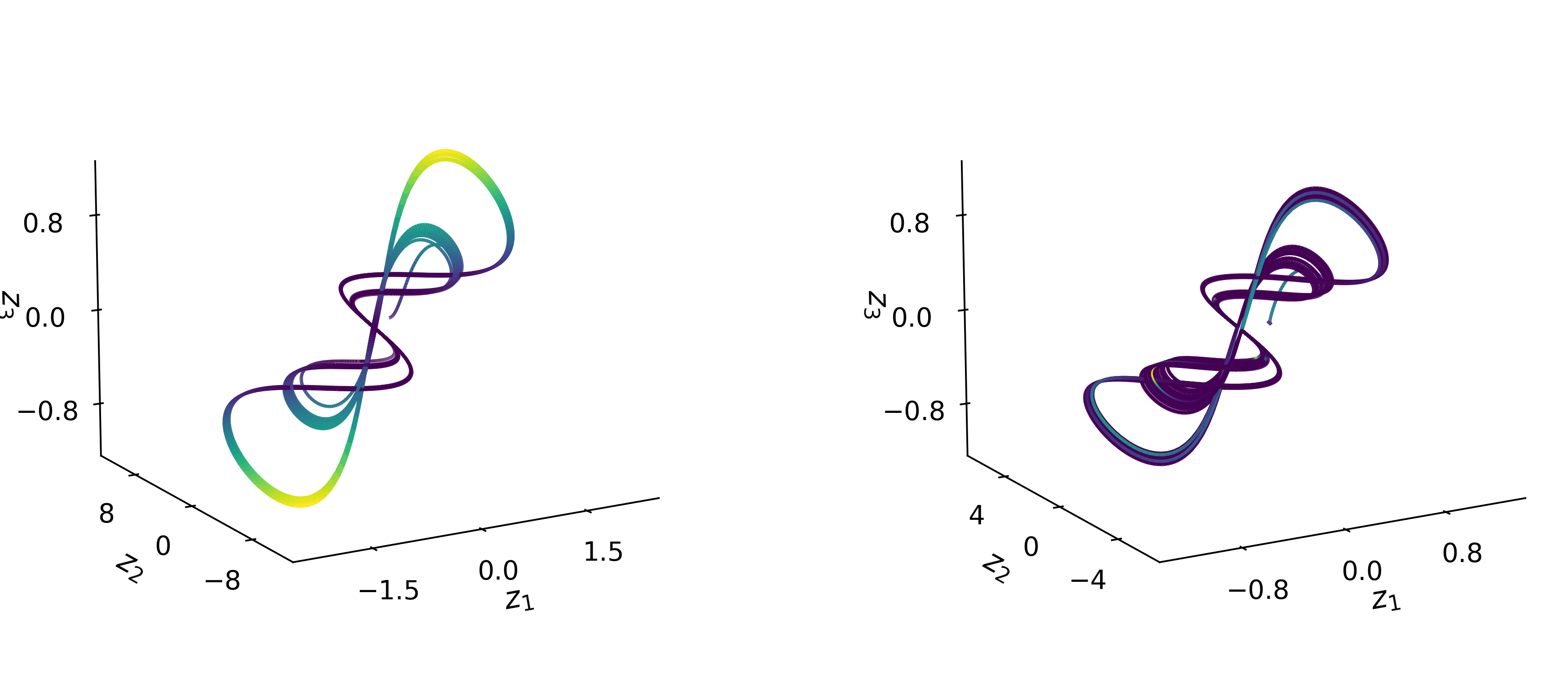}
        \caption{Visualization of the 3D dynamical system with slow–fast dynamics. Left: reference system; right: coarse, damped system. Color corresponds to the $z_3$ axis magnitude.}
        \label{fig:lowdim_otd}
\end{figure}

\subsection{Training and test-time learning problem.}
\label{app:lowdim_learning_problem}

The correction task is supervised sequence-to-sequence regression. From a biased coarse trajectory \(z^{\rm c}(t)\) and reference trajectory \(z^{\rm ref}(t)\), we construct overlapping windows
\[
    X_k =
    \bigl(
        z^{\rm c}(t_k), z^{\rm c}(t_k+\Delta t_{\rm ML}),
        \ldots,
        z^{\rm c}(t_k+(H-1)\Delta t_{\rm ML})
    \bigr),
\]
\[
    Y_k =
    \bigl(
        z^{\rm ref}(t_k), z^{\rm ref}(t_k+\Delta t_{\rm ML}),
        \ldots,
        z^{\rm ref}(t_k+(H-1)\Delta t_{\rm ML})
    \bigr).
\]
For FiLM-conditioned models, the corresponding context window is
\[
    C_k =
    \bigl(
        c_{t_k}, c_{t_k+\Delta t_{\rm ML}},
        \ldots,
        c_{t_k+(H-1)\Delta t_{\rm ML}}
    \bigr).
\]
The model is trained on horizons of $T_{\rm train}\in\{5,50,150\}$ and evaluated on an independent long trajectory of length \( T_{\rm test}=500\).

At test time, the model is non-intrusive and for ensemble-conditioned models, the default deployment context is therefore computed from the coarse trajectory alone using a causal rolling window. This ensures that the deployed correction uses only the coarse trajectory and no reference leakage.

\subsection{Residual-attention corrector.}
\label{app:lowdim_architecture}

The low-dimensional corrector is a residual attention model. Given an input state \(z^{\rm c}_t\in\mathbb{R}^3\), an input projection maps it to a hidden token
\[
    h_t^{(0)} = W_{\rm in} z^{\rm c}_t + b_{\rm in}
    \in\mathbb{R}^{d_{\rm hid}},
\]
followed by fixed sinusoidal positional encoding. The sequence is then processed by a Transformer encoder with \(n_{\rm layers}\) self-attention blocks. For layer \(\ell=1,\ldots,n_{\rm layers}\),
\[
    \widehat h_t^{(\ell)}
    =
    h_t^{(\ell-1)}
    +
    \operatorname{Dropout}
    \left[
        \operatorname{MHA}
        \left(
            \operatorname{LayerNorm}(h^{(\ell-1)})
        \right)_t
    \right],
\]
\[
    h_t^{(\ell)}
    =
    \widehat h_t^{(\ell)}
    +
    \operatorname{Dropout}
    \left[
        \operatorname{FFN}
        \left(
            \operatorname{LayerNorm}(\widehat h_t^{(\ell)})
        \right)
    \right],
\]
where
\[
    \operatorname{FFN}(h)
    =
    W_2\,\operatorname{Dropout}
    \left(
        \operatorname{GELU}(W_1h+b_1)
    \right)+b_2 .
\]
The final hidden state is \(h_t=h_t^{(n_{\rm layers})}\).

The model predicts a residual correction and a gate,
\[
    \delta_t = W_\delta h_t+b_\delta\in\mathbb{R}^3,
    \qquad
    g_t = \operatorname{sigmoid}(W_g h_t+b_g)\in(0,1)^3,
\]
and outputs
\[
    \widehat z_t
    =
    \hat z^{\rm c}_t + g_t\odot \delta_t .
\]
The gate bias is initialized at \(b_g=-2.0\), so the initial map is close to the identity and the network must learn to activate corrections from data.


For ensemble-conditioned models, the attention backbone is unchanged. A small context network maps \(c_t\) to FiLM parameters:
\[
    h^{\rm ctx}_t
    =
    \tanh(W_{{\rm ctx},1}c_t+b_{{\rm ctx},1})
    \in\mathbb{R}^{48},
\]
\[
    \gamma_t=W_\gamma h^{\rm ctx}_t+b_\gamma,
    \qquad
    \beta_t=W_\beta h^{\rm ctx}_t+b_\beta ,
\]
with \(\gamma_t,\beta_t\in\mathbb{R}^{d_{\rm hid}}\). FiLM modulation is applied after the attention encoder and before the residual output heads:
\[
    h_t^{\rm FiLM}
    =
    (1+\gamma_t)\odot h_t+\beta_t .
\]
The residual correction and gate are then computed from \(h_t^{\rm FiLM}\) instead of \(h_t\). This keeps the backbone, loss, optimizer, and output parameterization fixed, isolating the contribution of the ensemble-derived conditioning signal.

\begin{algorithm}[t]
\caption{Residual-attention corrector training for the low-dimensional system}
\label{alg:lowdim_attention_train}
\begin{algorithmic}[1]
\Require coarse trajectory \(z^{\rm c}(t)\), reference trajectory \(z^{\rm ref}(t)\),
window length \(H\), stride \(s\), batch size \(B\)
\Ensure trained residual-attention corrector \(G_\theta\)
\State normalize \(z^{\rm c}\) and \(z^{\rm ref}\) using training-reference statistics
\State extract overlapping windows \((X_k,Y_k)\) from \((z^{\rm c},z^{\rm ref})\)
\For{epoch \(=1,\ldots,N_{\rm epoch}\)}
    \For{batches of \(B\) windows}
        \State compute \(\widehat Y = G_\theta(X)\)
        \State evaluate \(L_{\rm path}+\lambda_\delta L_\delta
        +\lambda_{\rm mom}L_{\rm mom}+\lambda_{\rm tail}L_{\rm tail}\)
        \State update \(\theta\) with Adam
    \EndFor
\EndFor
\end{algorithmic}
\end{algorithm}

\begin{algorithm}[t]
\caption{FiLM residual-attention training with ensemble-derived context}
\label{alg:lowdim_film_attention_train}
\begin{algorithmic}[1]
\Require coarse trajectory \(z^{\rm c}(t)\), reference trajectory \(z^{\rm ref}(t)\),
ensemble members \(\{z^{(m)}(t)\}_{m=1}^M\), context type
\(\mathrm{ctx}\in\{\mathrm{mean/std/eigs},\mathrm{eigvec}\}\)
\Ensure trained FiLM residual-attention corrector \(G_\theta\)
\State compute context sequence \(c_t\) from the ensemble:
\[
c_t=
\begin{cases}
(\mu_t,s_t,\lambda_1(t),\lambda_2(t)), & \mathrm{ctx}=\mathrm{mean/std/eigs},\\
(v_1(t),\ldots,v_r(t)), & \mathrm{ctx}=\mathrm{eigvec}.
\end{cases}
\]
\State extract overlapping windows \((X_k,Y_k,C_k)\) from
\((z^{\rm c},z^{\rm ref},c)\)
\State initialize the same residual-attention backbone as in Algorithm~\ref{alg:lowdim_attention_train}
\For{epoch \(=1,\ldots,N_{\rm epoch}\)}
    \For{batches of \(B\) windows}
        \State compute FiLM parameters \((\gamma_t,\beta_t)=g_\theta(c_t)\)
        \State modulate attention representation
        \(h_t^{\rm FiLM}=(1+\gamma_t)\odot h_t+\beta_t\)
        \State compute \(\widehat Y=G_\theta(X,C)\)
        \State evaluate the same loss as Algorithm~\ref{alg:lowdim_attention_train}
        \State update \(\theta\) with Adam
    \EndFor
\EndFor
\end{algorithmic}
\end{algorithm}

\begin{algorithm}[t]
\caption{Long-horizon low-dimensional evaluation}
\label{alg:lowdim_eval}
\begin{algorithmic}[1]
\Require trained model \(G_\theta\), free-running coarse trajectory \(z^{\rm c}(t)\),
test horizon \(T_{\rm test}\), reference trajectory \(z^{\rm ref}(t)\) for diagnostics only
\Ensure corrected trajectory \(\widehat z(t)\), long-horizon diagnostics
\State build test windows \(X_{\rm test}\) from the free-running coarse trajectory
\If{model uses FiLM}
    \State build causal test-time context \(C_{\rm test}\) from rolling coarse statistics/eigendirections
    \State compute \(\widehat z(t)=G_\theta(X_{\rm test},C_{\rm test})\)
\Else
    \State compute \(\widehat z(t)=G_\theta(X_{\rm test})\)
\EndIf
\State compute PDFs, \(D_{\rm KL}\), \(L_{\log}\), percentile exceedance frequencies,
and \(\mathcal E_{99}\) against \(z^{\rm ref}(t)\)
\end{algorithmic}
\end{algorithm}

\subsection{Training loss.}
\label{app:lowdim_loss}

The deterministic residual corrector is trained with
\[
    L
    =
    L_{\rm path}
    +
    \lambda_\delta L_\delta
    +
    \lambda_{\rm mom}L_{\rm mom}
    +
    \lambda_{\rm tail}L_{\rm tail}.
\]
The pathwise loss is
\[
    L_{\rm path}
    =
    \mathbb{E}_t
    \left[
        w_{\rm tail}(t)
        \sum_{i=1}^3
        \alpha_i
        \bigl(
            \widehat z_i(t)-z^{\rm ref}_i(t)
        \bigr)^2
    \right],
\]
with coordinate weights \(\alpha=(1.5,1.5,3.0)\).
The third coordinate is weighted more strongly because the strongest extreme growth occurs in the \(z_3\) direction. The tail weight is
\[
    w_{\rm tail}(t)
    =
    \begin{cases}
        1+w_{\rm tail}, & \max_i |z^{\rm ref}_i(t)|>c_{\rm tail},\\
        1, & \text{otherwise},
    \end{cases}
\]
where \(w_{\rm tail}=3.0\) and \(c_{\rm tail}=2.0\) in normalized units.

The residual regularization term is
\[
    L_\delta
    =
    \mathbb{E}_t
    \left[
        \|\widehat z(t)-z^{\rm c}(t)\|_2^2
    \right],
\]
which discourages unnecessary deviations from the coarse trajectory. Moment matching is enforced by
\[
    L_{\rm mom}
    =
    \|\mu_{\rm pred}-\mu_{\rm ref}\|_2^2
    +
    \|s_{\rm pred}-s_{\rm ref}\|_2^2,
\]
where the means and standard deviations are computed over each training window and averaged over the batch.

To directly penalize tail-frequency mismatch, we use smooth exceedance indicators over percentile-based thresholds. Let \(\mathcal{C}=\{90,95,99\}\) denote the set of desired percentiles. For coordinate \(i\) and threshold \(c\in\mathcal{C}\), we compute the exceedance indicator as
\[
    I_{\rm smooth}(|z_i|>c_p)
    =
    \operatorname{sigmoid}
    \left(
        \frac{|z_i|-c_p}{T_{\rm tail}}
    \right),
    \qquad
    T_{\rm tail}=0.05 .
\] where $c_p$ denoteys the state value corresponding to the $p-$th percentile, rather than the percentile rank itself.
The tail loss is
\[
    L_{\rm tail}=\mathbb{E}\left[\frac{1}{3|\mathcal{C}|}\sum_{i=1}^3       \sum_{c\in\mathcal{C}}\left(P_{\rm pred}(|z_i|>c)- P_{\rm ref}(|z_i|>c)       \right)^2\right].
\]

\subsection{Hardware and runtime.}\label{app:lowdim:runtime}
The low-dimensional proof-of-concept experiment is computationally lightweight, completing in approximately 10 minutes on a single CPU core with negligible memory overhead, under the same CPU-based setup as the high-dimensional experiments.

\subsection{Hyperparameters.}
\label{app:lowdim_hyperparameters}
All hyperparameters for the low-dimensional experiments are summarized in Tables~\ref{tab:lowdim_system_params} and~\ref{tab:lowdim_arch_params}.

\begin{table}[H]
\centering
\small
\begin{tabular}{ll}
\hline
\textbf{Parameter} & \textbf{Value / Description} \\
\hline

\multicolumn{2}{l}{\textit{Model setup}} \\
\quad State variable & \(z=(z_1,z_2,z_3)^\top\) \\
\quad Coarse model & reference vector field with additional damping \(-\gamma z_3\) in the third equation \\[0.3em]

\multicolumn{2}{l}{\textit{Physical parameters}} \\
\quad \(a_1,a_2\) & \(2,2\) \\
\quad \(\epsilon\) & \(0.05\) \\
\quad \(b\) & \(20\) \\
\quad Coarse damping \(\gamma\) & 15 \\[0.3em]

\multicolumn{2}{l}{\textit{Nudging / ensemble}} \\
\quad Nudging timescale \(\tau\) & \(8\) \\
\quad Noise amplitudes \(\sigma\) & \(10^{-6},\,5\times 10^{-6},\,10^{-5}\) \\
\quad Ensemble sizes \(M\) & \(1,2,4,8,16\) \\[0.3em]

\multicolumn{2}{l}{\textit{Training / evaluation horizons}} \\
\quad Training horizons in aggregate files & \(T_{\rm train}\in\{5, 50,150\}\) \\
\quad Test horizon & \(T_{\rm test}=500\) \\
\quad Repeated runs & \(n=5\)\\
\hline
\end{tabular}
\caption{
Configuration of the low-dimensional controlled system and ensemble probe. The coarse model is deliberately biased through additional damping in the intermittently growing \(z_3\) direction, creating a non-intrusive correction task while preserving the finite-time instability structure of the reference dynamics.
}
\label{tab:lowdim_system_params}
\end{table}

\begin{table}[H]
\centering
\small
\begin{tabular}{ll}
\hline
\textbf{Parameter} & \textbf{Value / Description} \\
\hline

\multicolumn{2}{l}{\textit{Backbone}} \\
\quad Architecture & residual temporal attention corrector \\
\quad Input state & \(z_t^{\rm c}\in\mathbb R^3\) \\
\quad Hidden width \(d_{\rm hid}\) & \(96\) \\
\quad Attention heads & \(4\) \\
\quad Attention layers & \(3\) \\
\quad Head dimension & \(24\) \\
\quad Positional encoding & fixed sinusoidal \\
\quad Transformer block & pre-norm self-attention plus feed-forward residual block \\
\quad FFN multiplier & \(2\) \\
\quad FFN width & \(192\) \\
\quad Activation & GELU in FFN \\
\quad Dropout & \(0.05\) \\[0.3em]

\multicolumn{2}{l}{\textit{Residual output}} \\
\quad Output form & \(\widehat z_t=z_t^{\rm c}+g_t\odot \delta_t\) \\
\quad Residual head & \(\delta_t=W_\delta h_t+b_\delta\in\mathbb R^3\) \\
\quad Gate head & \(g_t=\operatorname{sigmoid}(W_g h_t+b_g)\in(0,1)^3\) \\
\quad Gate bias initialization & \(b_g=-2.0\) \\[0.3em]

\multicolumn{2}{l}{\textit{FiLM conditioning}} \\
\quad FiLM location & after attention encoder, before residual output heads \\
\quad FiLM map & \(h_t^{\rm FiLM}=(1+\gamma_t)\odot h_t+\beta_t\) \\
\quad Context MLP width & \(48\) \\
\quad Context MLP activation & \(\tanh\) \\
\quad Mean/std/eigs context & \((\mu_t,s_t,\lambda_1(t),\lambda_2(t))\in\mathbb R^8\) \\
\quad Eigenvector context & leading covariance eigenvector(s), sign-aligned in time \\
\quad FiLM output width & \(2d_{\rm hid}\), producing \((\gamma_t,\beta_t)\) \\

\multicolumn{2}{l}{\textit{Optimization}} \\
\quad Optimizer & Adam \\
\quad Epochs & \(40\) \\
\quad Batch size & \(64\) \\
\quad Learning rate & \(8\times10^{-4}\) \\
\quad Checkpoint selection & lowest validation objective \\[0.3em]

\multicolumn{2}{l}{\textit{Loss weights}} \\
\quad Total loss & \(L_{\rm path}+\lambda_\delta L_\delta+\lambda_{\rm mom}L_{\rm mom}+\lambda_{\rm tail}L_{\rm tail}\) \\
\quad Residual penalty \(\lambda_\delta\) & \(10^{-3}\) \\
\quad Moment penalty \(\lambda_{\rm mom}\) & \(5\times10^{-2}\) \\
\quad Tail penalty \(\lambda_{\rm tail}\) & \(5\times10^{-2}\) \\
\quad Coordinate weights & \((1.5,1.5,3.0)\) \\
\quad Tail emphasis \(w_{\rm tail}\) & \(3.0\) \\
\quad Tail threshold \(c_{\rm tail}\) & \(2.0\) normalized standard deviations \\
\quad Smooth exceedance temperature & \(0.05\) \\

\hline
\end{tabular}
\caption{
Architecture parameters for the low-dimensional residual-attention corrector. The no-FiLM and FiLM models share the same temporal backbone and output parameterization; the FiLM variants add only the ensemble-derived context interface.
}
\label{tab:lowdim_arch_params}
\end{table}

\subsection{Evaluation metrics.}
\label{app:lowdim_metrics}

We evaluate the corrected trajectory on the independent horizon \(T_{\rm test}=500\). The first metric is the pathwise root-mean-square error,
\[
{\rm RMSE}
=
\left(
    \frac{1}{3N}
    \sum_{n=1}^{N}
    \|\widehat z(t_n)-z^{\rm ref}(t_n)\|_2^2
\right)^{1/2},
\]
where \(N\) denotes the number of discrete time steps over the test interval.
Because chaotic trajectories lose phase alignment over long horizons, this metric is reported but is not the primary rare-event diagnostic.

We assess distributional fidelity using the Kullback--Leibler divergence and the log-density \(L^1\) difference, computed from empirical densities with common binning across the reference, coarse, baseline, and FiLM-corrected trajectories:
\[
    D_{\rm KL}
    =
    \sum_{b=1}^{B}
    p^{\rm ref}_{b}
    \log
    \frac{p^{\rm ref}_{b}}{p^{\rm model}_{b}},
\]
and
\[
    L_{\log}
    =
    \sum_{b=1}^{B}
    \left|
        \log p^{\rm ref}_{b}
        -
        \log p^{\rm model}_{b}
    \right|.
\]
For tail frequencies, let \(c_p\) denote the \(p\)-th percentile of \(|z^{\rm ref}_3|\) on the test trajectory. We report
\[
    E_{\rm freq}(p)
    =
    \left|
        \mathbb{P}_{\rm ref}(|z_3|>c_p)
        -
        \mathbb{P}_{\rm model}(|\widehat z_3|>c_p)
    \right|,
\] for $p = 99$.
For the third coordinate, where exponential growth is most pronounced in the low-dimensional system, we additionally report the coordinate-wise log-density discrepancy \(L_{\log}(z_3)\) and the $99$th-percentile exceedance-frequency error $E_{99}(z_3)$ separately in Table~\ref{tab:lowdim_z3_sweep_summary}. Table~\ref{tab:lowdim_attention_ablation} reports the full-system \(D_{\rm KL}\), \(L_{\log}\), and \(E_{\rm freq}(99)\), confirming that the correction improves the overall distribution rather than only the coordinate \(z_3\).


\subsection{Low-dimensional results tables.}
\label{app:lowdim_results}

Tables~\ref{tab:lowdim_z3_sweep_summary}--\ref{tab:lowdim_attention_ablation} are summarizing results for
when the residual-attention trunk, optimizer, gated residual output, and loss are held fixed, while FiLM injects one of two predeclared ensemble-derived instability contexts.
\begin{table}[H]
\centering
\small
\begin{tabular}{cllccc}
\toprule
$T_{\rm train}$ & Diagnostic & No-FiLM attention & Best FiLM setting & Best FiLM value & Reduction \\
\midrule
5
& $L_{\log}(z_3)$
& $6.985 \pm 0.135$
& eigvec, $\sigma=5{\times}10^{-6}$, $M=16$
& $6.952 \pm 0.195$
& $0.5\%$ \\
5
& $E_{99}(z_3)$
& $0.0042 \pm 0.0005$
& eigvec, $\sigma=10^{-5}$, $M=4$
& $0.0039 \pm 0.0011$
& $6.8\%$ \\
150
& $L_{\log}(z_3)$
& $6.903 \pm 0.100$
& eigvec, $\sigma=10^{-6}$, $M=2$
& $6.727 \pm 0.067$
& $2.5\%$ \\
150
& $E_{99}(z_3)$
& $0.0056 \pm 0.0017$
& mean/std/eigs, $\sigma=10^{-5}$, $M=2$
& $0.0041 \pm 0.0016$
& $26.3\%$ \\
\bottomrule
\end{tabular}
\caption{
Sweep-selected low-dimensional $z_3$ tail diagnostics. Entries are mean $\pm$ standard error over the number of repeats available in the aggregate file. This table is a sensitivity summary, not the primary model-selection protocol. The fixed-configuration result in Table~\ref{tab:lowdim_attention_ablation} should be used as the main repeated-run ablation.
}
\label{tab:lowdim_z3_sweep_summary}
\end{table}

\begin{table}[H]
\centering
\small
\begin{tabular}{llccc}
\toprule
Coordinate & Model & $D_{\rm KL}$ $\downarrow$ & $L_{\log}$ $\downarrow$ & $E_{99}$ $\downarrow$ \\
\midrule
$z_1$ & Residual attention
& $2.336 \pm 0.243$
& $7.103 \pm 0.444$
& $0.0084 \pm 0.0023$ \\
$z_1$ & FiLM attention, mean/std/eigs
& $\mathbf{1.884 \pm 0.025}$
& $\mathbf{6.237 \pm 0.110}$
& $0.0056 \pm 0.0015$ \\
$z_1$ & FiLM attention, eigvec
& $2.048 \pm 0.341$
& $6.745 \pm 0.592$
& $\mathbf{0.0048 \pm 0.0007}$ \\
\midrule
$z_2$ & Residual attention
& $2.728 \pm 0.064$
& $5.755 \pm 0.251$
& $0.0156 \pm 0.0017$ \\
$z_2$ & FiLM attention, mean/std/eigs
& $2.430 \pm 0.238$
& $6.015 \pm 0.455$
& $0.0137 \pm 0.0017$ \\
$z_2$ & FiLM attention, eigvec
& $\mathbf{2.112 \pm 0.107}$
& $\mathbf{4.993 \pm 0.205}$
& $\mathbf{0.0112 \pm 0.0018}$ \\
\midrule
$z_3$ & Residual attention
& $1.226 \pm 0.109$
& $7.231 \pm 0.228$
& $0.0042 \pm 0.0005$ \\
$z_3$ & FiLM attention, mean/std/eigs
& $\mathbf{1.210 \pm 0.159}$
& $\mathbf{6.962 \pm 0.164}$
& $0.0063 \pm 0.0012$ \\
$z_3$ & FiLM attention, eigvec
& $1.451 \pm 0.153$
& $7.330 \pm 0.176$
& $\mathbf{0.0039 \pm 0.0011}$ \\
\bottomrule
\end{tabular}
\caption{
Low-dimensional residual-attention ablation at $T_{\rm train}=50$, $\sigma=10^{-5}$, and $M=4$.
All entries are mean $\pm$ standard error over $n=5$ independent training runs. 
$E_{99}$ is the absolute 99th-percentile exceedance-frequency error. Lower is better.
Bold denotes the best value among the no-FiLM baseline and the two FiLM-conditioned variants.
The point of this table is not sweep selection: it fixes one ensemble configuration and shows that FiLM conditioning improves the residual-attention backbone across tail-density and exceedance diagnostics.
}
\label{tab:lowdim_attention_ablation}
\end{table}

%% file: appendix_sections/QG_model.tex
\subsection{Full model definition}

We consider the baroclinic two-layer quasi-geostrophic (QG) system on the doubly periodic square domain
\(\Omega = [0,2\pi]^2,\)
with streamfunctions \(\psi_1(x,y,t)\) and \(\psi_2(x,y,t)\) in the upper and lower layers, respectively. The corresponding potential vorticities are
\begin{equation*}
q_1 = \nabla^2 \psi_1 + \frac {k_d^2} 2(\psi_2-\psi_1),
\qquad
q_2 = \nabla^2 \psi_2 + \frac {k_d^2} 2(\psi_1-\psi_2) + \frac{f_0}{h_2} h_b(x,y),
\end{equation*}
where \(\Delta=\partial_{xx}+\partial_{yy}\), \(k_d^2\) is the deformation wavenumber parameter, \(\beta\) is the beta-plane parameter, and \(h_b\) is a prescribed bottom-topography field acting only in the lower layer, with $f_0= 1$ the inertial frequency and $h_2$ the bottom-layer thickness. The layer velocities are
\begin{equation*}
\mathbf{u}_i = U_i\textbf{e}_x + \hat{\textbf{k}} \times \nabla \psi_i, \qquad i\in\{1,2\},
\end{equation*}
where $\hat{\textbf{k}}$ is the unit vector orthogonal to the $(x,y)$ plane.
The governing equations are
\begin{equation}
\partial_t q_i + \mathbf{u}_i\cdot \nabla q_i + (\beta+ k_d^2 U_i)\,\partial_x \psi_i = -\delta_{2, i}\ r \nabla ^2\psi_i -  \nu \Delta^4q_i, \qquad i=1,2.
\label{eq:qg}
\end{equation}
For this implementation, the model is driven to a statistically stationary chaotic regime by a constant zonal mean shear and by bottom topography in the lower layer. The streamfunctions are recovered from the PVs by inversion of the elliptic relations above.

This model approximates mid- and high-latitude atmospheric circulation subject to an imposed shear current. The chosen parameterization is consistent with the radius and rotation of the earth and the characteristic length and velocity scales of the atmosphere, for a complete list of values, refer to Table~\ref{app:QG:parameter_table}.
We further introduce a bottom topography that is prescribed as a periodized sum of seven
Gaussian bumps,
\[
h_b(x,y)
=
h_0\sum_{j=1}^{7}
\exp\!\left(
-\frac{(x-a_j)^2+(y-b_j)^2}{\sigma_h^2}
\right),
\]
with \((a_j,b_j)\) chosen to respect periodicity (see Fig ~\ref{fig:app_QGtopograpy} for visual depiction of the introduced topography). T
his topography breaks the spatial homogeneity of the flow, induces strongly anisotropic large-scale structures breaks spatial homogeneity, and generates localized extreme events that less pronounced in the corresponding flat-bottom formulation. As a result, the benchmark is substantially more demanding than standard two-layer QG test cases without bottom topography, and it provides a stricter assessment of whether a surrogate can recover rare-event statistics under spatially inhomogeneous dynamics.

\begin{figure}
    \centering
    \includegraphics[width=0.75\linewidth]{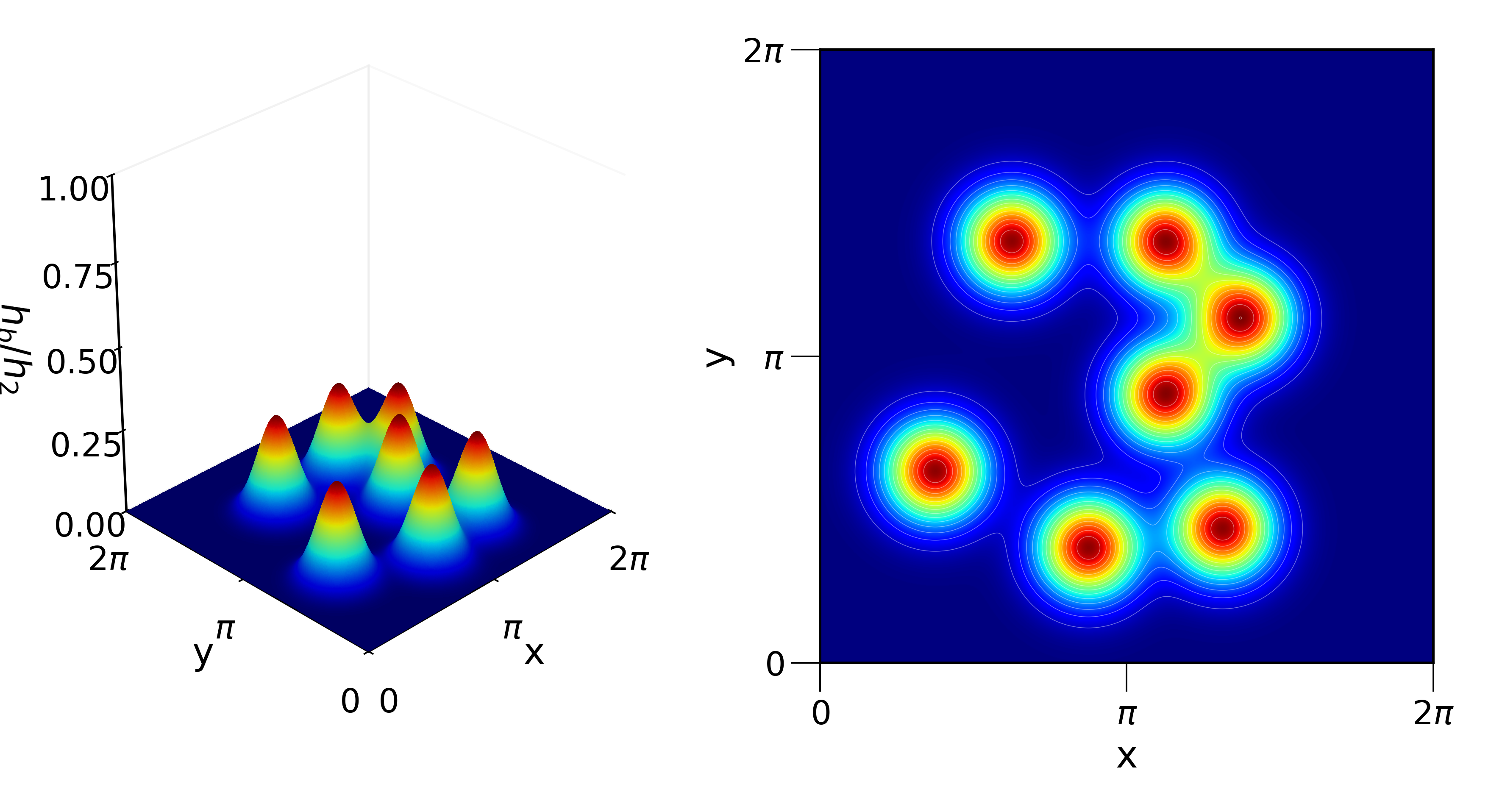}
    \caption{Bottom topography used in the quasi-geostrophic benchmark. The lower-layer forcing is constructed from seven Gaussian bumps, producing spatially inhomogeneous and anisotropic dynamics that are substantially more challenging than standard two-layer QG test cases without topography. This configuration is designed to probe whether the proposed method can recover extreme-event statistics in the presence of localized, topography-induced variability.}
    \label{fig:app_QGtopograpy}
\end{figure}

\subsection{Ensemble formulation}

Let \(q^{\mathrm{ref}}(t)\) denote the projected high-resolution reference PV on the coarse grid and \(q^\tau(t)\) the nudged coarse trajectory. The nudged coarse model is defined by
\begin{equation}
\partial_t q^\tau_i + \mathbf{u}^\tau_i\cdot \nabla q^\tau_i + (\beta\,+ k^2_dU_i)\partial_x \psi^\tau_i
=-\delta_{2, i}\ r \nabla ^2\psi^\tau_i -  \nu \Delta^4q^\tau_i
-\tau^{-1}\!\left(q^\tau_i - \mathcal{P}\textbf{q}^{\mathrm{ref}}_i\right),\qquad i=1,2,
\label{eq:nudged_qg}
\end{equation}
where \(\mathcal{P}\) denotes the spectral projector onto the controlled coarse modes. In Fourier variables, the nudging acts algebraically on each resolved mode \(k\) as
\begin{equation*}
\widehat{(q^\tau_i-\mathcal{P}\textbf{q}^{\mathrm{ref}}_i)}(k)
=
\mathbf{1}_{\{|k_x|\le k_{\max},\,|k_y|\le k_{\max}\}}
\left(\hat q^\tau_i(k)-\hat q^{\mathrm{ref}}_i(k)\right).
\end{equation*}
The ensemble members are independent realizations of a stochastic perturbation of the nudged coarse model:

\begin{equation}
\begin{split}
\mathrm d q_i^{(m)}
={}&
\Big[
-\mathbf u_i^{(m)}\cdot\nabla q_i^{(m)}
-\bigl(\beta+k_d^2U_i\bigr)\partial_x\psi_i^{(m)}
-\delta_{i2}r\Delta\psi_i^{(m)}
-\nu\Delta^4q_i^{(m)}  \\
&\hspace{2.5cm}
-\tau^{-1}\!\left(q_i^{(m)}-\mathcal{P}(\textbf{q}_i^{\mathrm{ref}}\right)
\Big]\mathrm dt
+\sigma\,\mathrm dW_i^{(m)},
\qquad i=1,2,\quad m=1,\dots,M,
\end{split}\label{eq:ensemble_qg}
\end{equation}

for \(m=1,\dots,M\), where \(W^{(m)}_i\) are independent Wiener processes applied modewise on the resolved Fourier coefficients. The ensemble is uncoupled conditional on the nudged base trajectory: members interact only through the shared reference \(q^{\mathrm{ref}}\) and the same nudging operator, not through direct member-to-member coupling. The stochastic term is therefore additive and independent across members and layers. The ensemble size used in the QG experiments is \(M\in\{1,2,4,8\}\). The results show that \(M\le 4\) is sufficient in the tested regime, with correspondingly low computational overhead.

\subsection{Numerical discretization}

The QG equations are discretized on a uniform doubly periodic grid with either \(128\times128\) or \(24\times24\) points, corresponding respectively to the reference and coarse simulations. Spatial derivatives are evaluated spectrally by Fourier collocation on the periodic domain. 
For \(k\neq0\), let \(\kappa=|k|^2\). The Fourier-space PV--streamfunction
relations are
\[
\begin{pmatrix}
\hat q_1(k)\\
\hat q_2(k)-\widehat{\frac{f_0}{h_2}h_b(x,y)}(k)
\end{pmatrix}
=
-
\begin{pmatrix}
\kappa+\frac{k_d^2}2& -\frac{k_d^2}2\\
-\frac{k_d^2}2 & \kappa+\frac{k_d^2}2
\end{pmatrix}
\begin{pmatrix}
\hat\psi_1(k)\\
\hat\psi_2(k)
\end{pmatrix}.
\]
Therefore
\[
\begin{pmatrix}
\hat\psi_1(k)\\
\hat\psi_2(k)
\end{pmatrix}
=
-\frac{1}{\kappa(\kappa+k_d^2)}
\begin{pmatrix}
\kappa+\frac{k_d^2}2 & \frac{k_d^2}2\\
\frac{k_d^2}2 & \kappa+\frac{k_d^2}2
\end{pmatrix}
\begin{pmatrix}
\hat q_1(k)\\
\hat q_2(k)-\widehat{\frac{f_0}{h_2}h_b(x,y)}(k)
\end{pmatrix},
\qquad k\neq0.
\]
with the zero mode treated consistently with the imposed mean flow. A fourth-order Runge--Kutta scheme is used for time integration with time step \(\Delta t = 10^{-2}\), and that the stochastic perturbation is discretized with per-mode variance proportional to \(\sigma^2 \Delta t\).

The high-resolution reference trajectory is simulated on the \(128\times128\) grid, and the coarse and nudged simulations are run on the \(24\times24\) grid. All trajectories are projected spectrally to the coarse grid for comparison. An initial spin-up period is discarded to remove transient effects from random initialization. The implementation runs on CPU-based workers in a local cluster; the ensemble generation is on the coarse grid only and thus much cheaper than high-resolution reference simulation. For the QG case studied, the added wall-clock cost of the ensemble is reported to be negligible relative to generating the reference data.

\subsection{QG solver pipeline}

The QG data pipeline is as follows. First, a high-resolution reference trajectory \(q^{\mathrm{ref}}_{\mathrm{HR}}(t)\) is generated by integrating the two-layer QG PDE on the \(128\times128\) grid. Second, this trajectory is spectrally projected to the \(24\times24\) resolved subspace to obtain \(q^{\mathrm{ref}}(t)\). Third, a coarse free-running trajectory \(q^{\mathrm{coarse}}(t)\) is generated on the same \(24\times24\) grid using the identical governing equations but with reduced resolution and the inherited initial condition of the projected reference trajectory. Fourth, a nudged coarse trajectory \(q^\tau(t)\) is generated following \eqref{eq:nudged_qg} and in parallel an ensemble of nudged realizations \(q^{(m)}(t)\) is evolved from \eqref{eq:ensemble_qg}. The machine-learning model receives the nudged coarse trajectory together with ensemble-derived summary statistics. The training target is the projected reference trajectory.

\subsection{Parameter table}

\begin{table}[H]
\centering
\small
\begin{tabular}{ll}
\hline
\textbf{Parameter} & \textbf{Value / Description} \\
\hline

\multicolumn{2}{l}{\textit{Model setup}} \\
\quad Spatial domain & $[0, 2\pi]^2$ (doubly periodic) \\
\quad Layers & 2 \\
\quad State variables & $\psi_1, \psi_2, q_1, q_2$ \\[0.3em]

\multicolumn{2}{l}{\textit{Physical parameters}} \\
\quad Bottom drag $r$ & $0.1$ \\
\quad Beta parameter $\beta$ & $2.0$ \\
\quad Deformation parameter $k_d^2$ & $4.0$ \\
\quad Inertial frequency $f_0$ & $1.0$\\
\quad Mean flow $(U_1, U_2)$ & $(0.2, -0.2)$ \\
\quad Bottom topography $h_b$ & 7 Gaussian bumps (fixed locations, $\sigma^2=0.25$, $h_0=0.3$) \\[0.3em]

\multicolumn{2}{l}{\textit{Discretization and numerics}} \\
\quad Reference resolution & $128 \times 128$ \\
\quad Projected resolution & $24 \times 24$ \\
\quad Time step $\Delta t$ & $10^{-2}$ \\[0.3em]

\multicolumn{2}{l}{\textit{Nudging / ensemble}} \\
\quad Nudging timescale $\tau$ & $16.0$ \\
\quad Ensemble size $M$ & $1, 2, 4, 8$ \\
\quad Noise amplitude $\sigma$ & $10^{-3}, 10^{-2}, 10^{-1}, 0.2, 0.4$ \\

\hline
\end{tabular}
\caption{Configuration of the two-layer quasi-geostrophic (QG) model with Gaussian bump topography. The system is evolved on a doubly periodic domain using a Fourier spectral discretization and fourth-order Runge--Kutta time stepping. Bottom topography consists of a superposition of seven localized Gaussian features, introducing spatial heterogeneity and dynamical complexity relative to usual setting without topology. Parameters are chosen to approximate mid- and high-latitude atmospheric circulation subject to an imposed shear current.}
\label{app:QG:parameter_table}
\end{table}

%% file: appendix_sections/ML.tex
\subsection{Problem formulation and model summary}
\label{app:ml:problem}

From the machine-learning perspective, the task is supervised sequence-to-sequence regression with probabilistic modeling. The goal is to learn a correction operator that maps nudged coarse trajectories of the QG system to their projected high-resolution counterparts over finite temporal windows and then generalizes to much longer free-running coarse trajectories at test time.

Let \(v_{\tau, t} \in \mathbb{R}^{d}\) denote the nudged coarse state, \(u_t \in \mathbb{R}^{d}\) the projected high-resolution reference, with \(d=2N_xN_y\) the flattened two-layer field dimension. Training examples are overlapping windows
\[
X^{(\tau)} = (v_{\tau}, v_{\tau+s}, \dots, v_{\tau+(H-1)s}), \quad
Y^{(\tau)} = (u_{\tau}, u_{\tau+s}, \dots, u_{\tau+(H-1)s}),
\]
with window length \(H\) and stride \(s\) determined by the ML time step \(\Delta t_{\mathrm{ML}}\). For ensemble-informed models, each time step carries a context vector \(c_t \in \mathbb{R}^{D_c}\) constructed from the ensemble, and the corresponding windowed context is
\[
C^{(\tau)} = (c_{\tau}, c_{\tau+s}, \dots, c_{\tau+(H-1)s}) \in \mathbb{R}^{H \times D_z}.
\]

The correction operator is then a parametric map
\[
G_\theta : (X^{(\tau)}, C^{(\tau)}) \mapsto \hat Y^{(\tau)},
\]
with \(C^{(\tau)}\) omitted for the baseline. Training minimizes a single objective \(\mathcal{L}(\theta)\) defined in Section~\ref{app:ml:loss}, and evaluation uses long-horizon statistics of the corrected coarse trajectory \(\hat u_t = G_\theta(v_t, \cdot)\) over a test horizon \(T_{\mathrm{test}} \gg T_{\mathrm{train}}\).

\paragraph{Compressed model summary}
At a high level, the model applies an encoder to each input field \(x_t\), samples a latent variable through a reparameterized Gaussian posterior, processes the concatenated encoder and latent states with a recurrent trunk, optionally modulated by a FiLM conditioning stream driven by ensemble context, and finally decodes to the corrected field:
\[
\hat y_t
=
\mathrm{Dec}\Big(
\mathrm{LSTM}\big(
[\mathrm{Enc}(x_t),\; c^{\text{latent}}_t]
;\; \mathrm{FiLM}(c_t)
\big)
\Big),\quad
\theta^\star
=
\arg\min_\theta \mathcal{L}(\theta)
\]
where \(z^{\text{latent}}_t\) is the sampled latent state and \(\mathrm{FiLM}(c_t)\) is present only in the ensemble-conditioned variant.

\subsection{Model architecture}
\label{app:ml:model}

\paragraph{Baseline: STORN backbone}
The baseline model is a stochastic recurrent neural network of STORN type. We ues the implementation of \cite{barthelsorensen2024} from its Zenodo release (version v1, \cite{barthelsorensen2024a}). For each time step \(t\) in a window \(X=(x_1,\dots,x_H)\),

\begin{enumerate}
\item An encoder maps \(x_t\) to a hidden representation \(h^{(0)}_t\).
\item A Gaussian latent posterior with mean \(\mu_t\) and scale \(\sigma_t\) is produced from \(h^{(0)}_t\), and a latent sample is drawn via the reparameterization trick,
\[
z^{\text{latent}}_t = \mu_t + \sigma_t \odot \epsilon_t,\quad \epsilon_t \sim \mathcal{N}(0,I).
\]
\item The concatenated state \(\phi_t = [h^{(0)}_t, z^{\text{latent}}_t]\) is passed through a recurrent trunk (LSTM) across the window.
\item A linear decoder maps the recurrent hidden state to the corrected field \(\hat y_t\).
\end{enumerate}

The model outputs both \(\hat y_t\) and the latent parameters \((\mu_t, \sigma_t)\), which are used in the KL term of the training objective. Concrete dimensionalities (encoder width, latent size, recurrent width) are summarized in Table~\ref{tab:ml_hyperparams}.

The following pseudocode summarizes the training and evaluation pipelines. It omits mathematical detail already defined above and focuses on inputs, key transformations, and outputs.

\begin{algorithm}[t]
\caption{STORN training}
\label{alg:storn_train}
\begin{algorithmic}[1]
\Require nudged trajectory \(v_\tau(t)\), reference \(u(t)\), window length \(H\), stride \(s\), batch size \(B\)
\Ensure trained STORN model \(G_\theta\)
\State extract windows \((X_k,Y_k)\) from \((v_\tau,u)\) using \(H,s\)
\State split windows into train/validation (90\%/10\% in time)
\State initialize STORN parameters \(\theta\)
\For{epochs}
    \For{batches of windows}
        \State compute \(\hat Y = G_\theta(X)\) and latent parameters \((\mu,\sigma)\)
        \State evaluate \(\mathcal{L}(\theta)\) from Eq.~\eqref{eq:ml_loss}
        \State update \(\theta\) with Adam
    \EndFor
\EndFor
\end{algorithmic}
\end{algorithm}

\paragraph{Ensemble-conditioned model: FiLM-STORN.}
The FiLM-STORN extends the backbone by conditioning the recurrent trunk on ensemble-derived context. For each time step \(t\),
\begin{itemize}
\item A context vector \(c_t \in \mathbb{R}^{D_z}\) (see Section~\ref{app:ml:context}) is computed from the ensemble.
\item A small conditioning network maps \(c_t\) to FiLM parameters \((\gamma_t, \beta_t)\), with the same width as the recurrent hidden state.
\item The recurrent hidden state \(h_t\) is modulated as
\[
\tilde h_t = (1 + \gamma_t) \odot h_t + \beta_t,
\]
before being fed to the decoder.
\end{itemize}
This preserves the probabilistic structure of STORN while coupling the dynamics to low-dimensional ensemble context. The context dimension and FiLM widths are specified in Table~\ref{tab:ml_hyperparams}.

\paragraph{Context construction.}\label{app:ml:context}
For the QG experiments, the context vector has dimension \(D_z = 4\). At each time step, it is constructed from the ensemble of nudged coarse members as
\[
c_t = (\mu_1(t), \sigma_1(t), \mu_2(t), \sigma_2(t)),
\]
where \(\mu_\ell(t)\) and \(\sigma_\ell(t)\) are the spatial mean and standard deviation of the layer-\(\ell\) streamfunction amplitude across ensemble members. Specifically, letting $\bar{\psi}_\ell^{(m)}(t)$ denote the spatial average of $|\psi_\ell^{(m)}(x,t)| $, i.e. 
\[\bar{\psi}_\ell^{(m)}(t) = \frac{1}{{2\pi}}\int_{2 \pi} |\psi_\ell^{(m)}(x,t)| dx,\]
we define $$\mu_\ell(t) = \frac{1}{M} \sum_{m=1}^M \bar{\psi}_\ell^{(m)}(t)$$ and $$\sigma_\ell^2(t) = \frac{1}{M-1} \sum_{m=1}^M \left(\bar{\psi}_\ell^{(m)}(t) - \mu_\ell(t)\right)^2.$$ 
This definition is used consistently in training and evaluation and is not repeated elsewhere. Explicitly resolving directional instability becomes impractical. This layer-wise amplitude context is not claimed to recover a spatial covariance eigenspace. It is a deliberately compressed proxy for the local ensemble amplitude/spread regime motivated by the covariance-deformation analysis. At deployment, the QG context is computed from the free-running coarse trajectory only; this rolling statistic is not an instantaneous ensemble covariance, but a leakage-free proxy for the low-order instability-intensity regime learned from the reference-nudged ensemble during training.

\begin{algorithm}[t]
\caption{FiLM-STORN training}
\label{alg:film_train}
\begin{algorithmic}[1]
\Require nudged trajectory \(v_\tau(t)\), reference \(u(t)\), ensemble members \(\{Q^{(m)}(t)\}\), window length \(H\), stride \(s\), batch size \(B\)
\Ensure trained FiLM-STORN model \(G_\theta\)
\State compute context sequence \(c_t\) from \(\{Q^{(m)}(t)\}\)
\State extract windows \((X_k,Y_k,C_k)\) from \((v_\tau,u,c)\) using \(H,s\)
\State split windows into train/validation (90\%/10\% in time)
\State initialize FiLM-STORN parameters \(\theta\)
\For{epochs}
    \For{batches of windows}
        \State compute \(\hat Y = G_\theta(X,C)\) and latent parameters \((\mu,\sigma)\)
        \State evaluate \(\mathcal{L}(\theta)\) from Eq.~\eqref{eq:ml_loss}
        \State update \(\theta\) with Adam
    \EndFor
\EndFor
\end{algorithmic}
\end{algorithm}

\paragraph{Pooling baseline (for comparison only).}
A pooled-ensemble baseline is used for ablation: all ensemble members are concatenated along the feature dimension at each time step, and a STORN model of matching width is trained on this pooled representation. This baseline shares the same training objective and evaluation protocol as STORN and FiLM-STORN but scales input dimensionality linearly with ensemble size. Its behavior is discussed in Section \ref{app:ml:ablation}; detailed equations are omitted to avoid redundancy.

The pooled-ensemble baseline follows Algorithm~\ref{alg:storn_train} with stacked ensemble inputs instead of context-conditioned FiLM modulation and shares the same loss and diagnostics.

\subsection{Training objective}
\label{app:ml:loss}

All models are trained to minimize a single semi-physics-informed objective combining reconstruction accuracy, mass conservation, and KL regularization. For a batch of \(N\) windows \(\{(X_k,Y_k,C_k)\}_{k=1}^N\), the loss is
\begin{equation}
\mathcal{L}(\theta)
=
\frac{1}{N}\sum_{k=1}^{N}
\sum_{t=1}^{H}
\Big[
\| \hat{y}_{k,t} - y_{k,t} \|_2^2
+
\lambda_{\mathrm{mass}} \, \mathcal{L}_{\mathrm{mass}}(\hat{y}_{k,t})
\Big]
+
\lambda_{\mathrm{KL}} \, \mathcal{L}_{\mathrm{KL}}.
\label{eq:ml_loss}
\end{equation}
with mass penalty 
\begin{equation*}
\mathcal{L}_{\mathrm{mass}}(\hat{y}_t)
=
\sum_{\ell=1}^2
\left|
\sum_{i=1}^{N_x N_y} \hat{y}_t^{(\ell,i)}
\right|,
\end{equation*}
and the KL term uses the closed-form divergence between the Gaussian latent posterior and a standard normal prior,
\begin{equation*}
\mathcal{L}_{\mathrm{KL}}=\frac{1}{N}\sum_{k,t}\frac{1}{2}\sum_{j}\left(
\sigma_{k,t,j}^2 + \mu_{k,t,j}^2- 1 - \log (\sigma_{k,t,j}^2 + \varepsilon)
\right),
\end{equation*}
with a small \(\varepsilon\) for numerical stability. The mass conservation term arises from the linear dependence of the streamfunctions on the layer height perturbations, combined with the volume conservation constraint, which enforces that the domain-integrated height disturbance is zero

\subsection{Implementation details: data, batching, and optimization}
\label{app:ml:impl}

\paragraph{Data and windowing.}
After discarding an initial spin-up period, a single long reference realization is used to construct training windows of length \(H\) at temporal stride \(s\). For each training horizon \(T_{\mathrm{train}}\), windows are extracted from the corresponding truncated segment. If the resulting number of windows falls below a small threshold (we use 10), \(H\) is reduced until enough windows are available. The train/validation split is 90\%/10\% in temporal order; windows are not shuffled to preserve sequential structure.

\paragraph{Batching and optimization.}
We deliberately employ a minimal and stable optimization setup in order to isolate the effect of the proposed architectural components. Adam is used for all models, with learning rates and number of epochs specified in Table~\ref{tab:ml_hyperparams}. Training is run for a fixed number of epochs (2000 for all experiments). We found this setup to be sufficient for stable convergence across all configurations, and it avoids introducing additional hyperparameters that could confound comparisons between models.

\paragraph{Hardware and runtime.}
All experiments are run on CPU-based workers in a local cluster. Runtime and peak memory increase monotonically with the number of additional ensemble members \(N_{\mathrm{add}}\), but the FiLM-STORN input dimension remains independent of \(N_{\mathrm{add}}\) because only low-order statistics enter the context. A full training run for a single configuration completes within a few hours on a modern multi-core CPU; see Figure ~\ref{fig:app:runtime}

\begin{figure}
    \centering
    \includegraphics[width=1\linewidth]{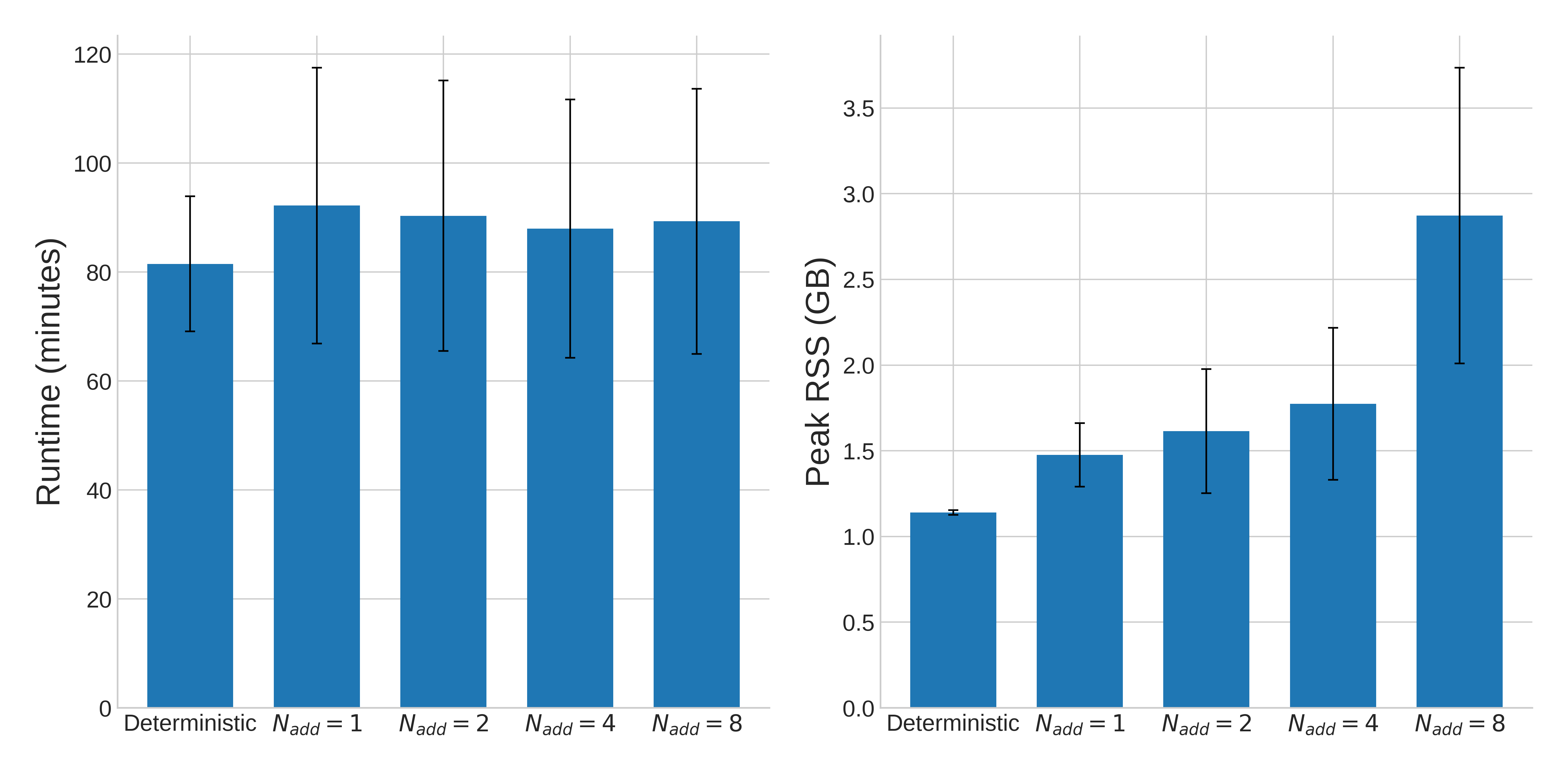}
    \caption{Runtime and peak memory versus number of additional ensemble members $N_{\mathrm{add}}$. Vertical bars indicate variability across noise levels $\sigma$.}
    \label{fig:app:runtime}
\end{figure}

\subsection{Evaluation protocol}
\label{app:ml:evaluation}

The trained operator is evaluated on an independent coarse trajectory of length \(T_{\mathrm{test}} \gg T_{\mathrm{train}}\). The test sequence \(X_{\mathrm{test}}\) is constructed with the same stride \(s\) and windowing convention as in training, but using the free-running coarse model rather than nudged data. For FiLM-STORN, the test-time context is computed from coarse fields alone using the same mapping defined in Section~\ref{app:ml:model}; This introduces a deliberate proxy shift from training ensemble spread to rolling coarse spread, chosen to avoid reference leakage. While this constitutes a distribution shift, both the nudged-coarse and the coarse data evolve on the same statistical attractor. As a result, we do not interpret the rolling statistic as a transverse covariance estimator, but as a low-order proxy for the instability-intensity regime. Its validity relies on the coarse dynamics retaining a consistent signature of temporally localized sensitivity, so that coarse fluctuations remain informative of the instability structure learned during training.

The corrected trajectory \(\hat u_t = G_\theta(v_t, \cdot)\) is analyzed using long-horizon diagnostics: probability density functions (PDFs) and log-PDF errors, temporal autocorrelations, exceedance-area statistics, and other derived metrics used in the main body. All evaluation is offline and uses shared analysis routines for reference, coarse, baseline STORN, and FiLM-STORN to ensure comparability.

\begin{algorithm}[t]
\caption{Long-horizon evaluation}
\label{alg:evaluation}
\begin{algorithmic}[1]
\Require trained model \(G_\theta\), coarse trajectory \(v(t)\), test horizon \(T_{\mathrm{test}}\)
\Ensure corrected trajectory \(\hat u(t)\), long-horizon diagnostics
\State build \(X_{\mathrm{test}}\) from \(v(t)\) using the same stride \(s\)
\State build context \(C_{\mathrm{test}}\) if using FiLM-STORB from  rolling layer-wise amplitude and statistics of $v(t)$ only
\State compute \(\hat u(t) = G_\theta(X_{\mathrm{test}}, C_{\mathrm{test}})\)
\State compute PDFs, log-PDF error, autocorrelations, spectra, and exceedance statistics for \(\hat u(t)\)
\end{algorithmic}
\end{algorithm}

\subsection{Ablation and sensitivity: ranked findings}
\label{app:ml:ablation}

\paragraph{FiLM conditioning}
\begin{itemize}
\item Adding FiLM conditioning on ensemble-derived context to the STORN backbone yields the largest improvements in data-limited regimes, particularly for tail-sensitive metrics (log-PDF error, exceedance area).
\item Baseline STORN models with identical hidden and latent sizes match bulk statistics reasonably well but systematically underestimate extremes when trained on short horizons.
\end{itemize}

\paragraph{Ensemble context size and construction}
\begin{itemize}
\item Using a compact four-dimensional context \((\mu_1,\sigma_1,\mu_2,\sigma_2)\) is sufficient to capture the principal ensemble variability relevant for conditioning.
\item Increasing the number of additional ensemble members beyond a small value (e.g., \(N_{\mathrm{add}}>4\)) yields diminishing returns in accuracy while increasing runtime.
\end{itemize}

\paragraph{Pooled-ensemble baseline behavior}
\begin{itemize}
\item The pooled-ensemble baseline, which stacks ensemble members along the feature dimension, performs competitively in data-rich regimes but is less data-efficient and less robust in low-data settings.
\item Because input dimensionality grows linearly with ensemble size, training time and memory increase substantially, and the mapping becomes underconstrained in short-horizon regimes, leading to worse tail statistics than FiLM-STORN.
\end{itemize}

\paragraph{Training horizon sensitivity and failure modes}
\begin{itemize}
\item All models improve as the training horizon increases, but FiLM-STORN saturates around intermediate horizons (e.g., \(T_{\mathrm{train}} \approx 200\)), after which additional data yield only marginal gains.
\item For very short horizons, deterministic or unconditioned baselines severely underrepresent tails. The FiLM-STORN recovers tail behavior more accurately but can still exhibit residual biases for the bulk of the pdf statistics due to the intrinsic information deficit.
\end{itemize}

\begin{table}[!ht]
\centering
\small
\begin{tabular}{l p{10cm}}
\hline
\textbf{Component} & \textbf{Value / Description} \\
\hline

\multicolumn{2}{l}{\textit{Data and sequence setup}} \\
\quad Grid resolution & \(N_x=N_y=24\), two layers, \(d=2N_xN_y=1152\) \\
\quad Window length \(H\) & 100, (adaptively reduced if too few windows are available, i.e. \(T_{\mathrm{train}} = 50\) \\
\quad ML time step \(\Delta t_{\mathrm{ML}}\) & 0.1, corresponding to stride \(s=\Delta t_{\mathrm{ML}}/\Delta t_{\mathrm{DNS}}=10\) \\
\quad Training horizons & \(T_{\mathrm{train}} \in \{50,200,1000\}\) \\
\quad Test horizon & \(T_{\mathrm{test}}=50\,000\) \\
\quad Train/val split & 90\% / 10\%, temporal split \\
\hline

\multicolumn{2}{l}{\textit{Architecture: baseline STORN}} \\
\quad Input representation & Flattened two-layer streamfunction fields of nudged and reference fields for training \\
\quad Encoder width & 60 \\
\quad Latent dimension \(L\) & 60 \\
\quad Recurrent trunk width & 60 \\
\quad Activation functions & \(\tanh\) in dense/LSTM layers, hard-sigmoid recurrent gate \\
\hline

\multicolumn{2}{l}{\textit{Architecture: FiLM-STORN}} \\
\quad Conditioning input & Training: layerwise amplitude statistics from the nudged auxiliary ensemble. Test: causal layerwise amplitude statistics from the free-running coarse trajectory. No reference information is used at test time. \\
\quad Context dimension \(D_z\) & 4: \((\mu_1,\sigma_1,\mu_2,\sigma_2)\) \\
\quad FiLM embedding width & 32 \\
\quad FiLM modulation & \(h \mapsto (1+\gamma(z))\odot h + \beta(z)\) \\
\quad FiLM output width & 120 ($2h_{hidden}$) \\
\hline

\multicolumn{2}{l}{\textit{Training}} \\
\quad Optimizer & Adam \\
\quad Batch size & 32 \\
\quad \(\lambda_{\mathrm{mass}}, \ \lambda_{\mathrm{KL}}\) & \(10^{-4}\) \\
\quad Mass penalty \(\lambda_{\mathrm{mass}}\) & fixed per QG configuration \\
\hline

\multicolumn{2}{l}{\textit{Ensemble conditioning}} \\
\quad Additional ensemble sizes \(N_{\mathrm{add}}\) & \(0,1,2,4,8\) \\
\quad Ensemble context construction & Per-time-step mean and standard deviation of layerwise amplitudes \\
\hline
\end{tabular}
\caption{Machine learning hyperparameters and architecture settings for the QG experiments.}
\label{tab:ml_hyperparams}
\end{table}

%% file: appendix_sections/add_results.tex
\subsection{Evaluation metrics}
\label{app:evaluation_metrics}

We evaluate long-horizon statistical fidelity using complementary metrics that emphasize different parts of the corrected distribution. Let \(p\) denote the reference density of a diagnostic observable and let \(q\) denote the corresponding density induced by a corrected trajectory. In the QG experiments, these densities are estimated from long test trajectories using the same binning and smoothing procedure for the reference, coarse baseline, STORN, and FiLM-STORN outputs.

The first metric is the Kullback--Leibler divergence
\begin{equation}
\label{eq:KL-divergence}
    D_{\mathrm{KL}}(p\|q)
    =
    \int p(x)\log\!\left(\frac{p(x)}{q(x)}\right)\,dx .
\end{equation}
This quantity measures global statistical mismatch relative to the reference distribution. Since the expectation is taken under \(p\), regions with substantial reference probability contribute most strongly. Thus \(D_{\mathrm{KL}}\) is useful for assessing attractor-level consistency and the bulk of the invariant statistics, but it can be comparatively insensitive to very far-tail discrepancies when those regions have small probability mass.

To emphasize tail fidelity, we also report the \(L_1\) discrepancy between log densities,
\begin{equation}
\label{eq:logL1}
    L_{\log}(p,q)
    =
    \int_{\Omega_\rho}
    \left|
        \log(p(x)+\varepsilon)
        -
        \log(q(x)+\varepsilon)
    \right|\,dx .
\end{equation}
Here \(\varepsilon>0\) is a small numerical floor and \(\Omega_\rho\) denotes the region where the reference density is above the density-estimation cutoff used in the experiments. The logarithmic scale makes multiplicative errors visible: underestimating a tail probability by one order of magnitude produces a large error even if the absolute probability mass is small. This makes \(L_{\log}\) a stricter diagnostic for rare-event emulation than ordinary density \(L_1\) error.

We further evaluate the spatial structure of extreme events through exceedance-area statistics. For a streamfunction field \(\psi(x,y,t)\) and amplitude threshold \(c\), define
\begin{equation}
\label{eq:exceedance_area}
    \frac{A_c(t)}{A}
    =
    \frac{1}{N_xN_y}
    \sum_{i=1}^{N_x}
    \sum_{j=1}^{N_y}
    \mathbf{1}\!\left\{
        |\psi(x_i,y_j,t)|>c
    \right\}.
\end{equation}
This observable measures the fraction of the spatial domain occupied by an extreme event at time \(t\). Unlike one-point marginal densities, \(A_c(t)/A\) is sensitive to the spatial extent, localization, and persistence of thresholded structures. It therefore tests whether the corrected model reproduces the geometry of rare excursions, not only their marginal amplitudes.

For each threshold \(c\) and layer \(\ell\), we compare the empirical distributions of \(A_{c,\ell}(t)/A\) using the one-dimensional Wasserstein-1 distance
\begin{equation}
\label{eq:aot_w1_error}
    \mathcal E_{\mathrm{AOT}}(c,\ell)
    =
    W_1\!\left(
    \mathcal L\{A^{\mathrm{ref}}_{c,\ell}(t)/A\}_{t=1}^{T_{\mathrm{test}}},
    \mathcal L\{A^{\mathrm{model}}_{c,\ell}(t)/A\}_{t=1}^{T_{\mathrm{test}}}
    \right).
\end{equation}
We use \(W_1\) because the empirical distribution of \(A_{c,\ell}(t)/A\) often contains substantial mass at zero, especially at high thresholds. Histogram-based KL divergences or log-density errors can become unstable in this regime because many bins are empty. In contrast, \(W_1\) compares the empirical distributions directly on the interval \([0,1]\) and remains well-defined when one model produces too few or too many nonzero exceedance events.

We also report the event-frequency error
\begin{equation}
\label{eq:aot_frequency_error}
    \mathcal E_{\mathrm{freq}}(c,\ell)
    =
    \left|
    \mathbb P_{\mathrm{ref}}\!\left(A_{c,\ell}(t)>0\right)
    -
    \mathbb P_{\mathrm{model}}\!\left(A_{c,\ell}(t)>0\right)
    \right|.
\end{equation}
This metric isolates whether the model produces threshold exceedances at the correct rate, independently of their spatial size once they occur. The pair \((\mathcal E_{\mathrm{AOT}},\mathcal E_{\mathrm{freq}})\) separates two different rare-event errors. Namely, generating extremes too often or too rarely, and generating events with the wrong spatial extent.

We summarize the high-threshold behavior by averaging over
\[
    \mathcal C_{\mathrm{tail}}=\{1.0,1.5,1.75\},
\]
namely
\begin{equation}
\label{eq:aot_average_errors}
    \overline{\mathcal E}_{\mathrm{AOT}}(\ell)
    =
    \frac{1}{|\mathcal C_{\mathrm{tail}}|}
    \sum_{c\in\mathcal C_{\mathrm{tail}}}
    \mathcal E_{\mathrm{AOT}}(c,\ell),
    \qquad
    \overline{\mathcal E}_{\mathrm{freq}}(\ell)
    =
    \frac{1}{|\mathcal C_{\mathrm{tail}}|}
    \sum_{c\in\mathcal C_{\mathrm{tail}}}
    \mathcal E_{\mathrm{freq}}(c,\ell).
\end{equation}
The lower-layer statistic \(\psi_2\) is reported separately because the bottom topography acts directly in that layer. We also report the two-layer average to verify that the improvement is not an artifact of a single field.

Together, \(D_{\mathrm{KL}}\), \(L_{\log}\), and the exceedance-area errors evaluate three distinct aspects of performance: global attractor consistency, tail-probability accuracy, and spatial organization of extremes. This is why we report all three rather than relying on a trajectory-level loss, mean-squared prediction error, or a single scalar density discrepancy.

\subsection{Detailed results}\label{app:addresults:details}

This section provides the full set of QG diagnostics supporting the main empirical claims. Tables and figures here separate three distinct effects: global distributional fidelity, tail accuracy, and spatial rare-event geometry. This separation is important because these diagnostics do not measure the same failure mode. A model can match the bulk PDF while missing the far tails, and it can match one-point tails while still producing thresholded events with the wrong spatial extent.

Tables~\ref{tab:KL_abs} and~\ref{tab:logL1_abs} report the global density diagnostics over the full streamfunction distribution.
The baseline column corresponds to the unconditioned STORN correction trained at the indicated \(T_{\mathrm{train}}\). The remaining columns report FiLM-STORN across ensemble noise levels \(\sigma\), with rows inside each cell corresponding to \(N_{\mathrm{add}}\in\{1,2,4,8\}\). These tables should be read as sensitivity results rather than as the primary model-selection protocol: the main-text comparison fixes \(\sigma=0.01\) and \(N_{\mathrm{add}}=2\) for the short-data FiLM-STORN model.


\begin{table}[t]
\centering
\footnotesize
\setlength{\tabcolsep}{3pt}
\renewcommand{\arraystretch}{1.15}
\resizebox{\textwidth}{!}{%
\begin{tabular}{llcccccc}
\toprule
$T$ & baseline & $N_{\mathrm{add}}$ & $\sigma=0.001$ & $\sigma=0.01$ & $\sigma=0.05$ & $\sigma=0.1$ & $\sigma=0.2$ \\
\midrule
\multirow{4}{*}{\(T=50\)} & \multirow{4}{*}{0.07079} & 1 & 0.036477 & 0.036453 & 0.057637 & 0.071921 & 0.78613 \\
 &  & 2 & 0.036464 & 0.036460 & 0.074342 & 0.063390 & 0.74884 \\
 &  & 4 & 0.036473 & 0.051943 & 0.087202 & 0.051904 & 0.65616 \\
 &  & 8 & 0.036442 & 0.053113 & 0.061234 & 0.071854 & 0.72615 \\
\midrule
\multirow{4}{*}{\(T=200\)} & \multirow{4}{*}{0.10011} & 1 & 0.032722 & 0.066066 & 0.087975 & 0.094977 & 0.083387 \\
 &  & 2 & 0.063504 & 0.045316 & 0.091538 & 0.15473 & 0.13318 \\
 &  & 4 & 0.052064 & 0.044252 & 0.11192 & 0.11831 & 0.091194 \\
 &  & 8 & 0.072961 & 0.059629 & 0.068314 & 0.11337 & 0.12425 \\
\midrule
\multirow{4}{*}{\(T=1000\)} & \multirow{4}{*}{0.0039339} & 1 & 0.015075 & 0.015913 & 0.0023566 & 0.0012746 & 0.023161 \\
 &  & 2 & 0.011283 & 0.010644 & 0.0023556 & 0.0025091 & 0.026599 \\
 &  & 4 & 0.016580 & 0.013081 & 0.0026269 & 0.0016688 & 0.026507 \\
 &  & 8 & 0.015483 & 0.014340 & 0.0013198 & 0.00092347 & 0.031131 \\
\bottomrule
\end{tabular}%
}
\caption{Global KL divergence, $D_{\mathrm{KL}}$ by training time trajectory length $T$, noise-level $\sigma$, and number of additional ensemble members $N_{\mathrm{add}}$. Baseline is the deterministic STORN model; the remaining columns are the FiLM modulated ensemble strategy.The reported values are averaged over five independent runs with negligible variation across runs.}
\label{tab:KL_abs}
\end{table}

\begin{table}[t]
\centering
\footnotesize
\setlength{\tabcolsep}{3pt}
\renewcommand{\arraystretch}{1.15}
\resizebox{\textwidth}{!}{%

\begin{tabular}{llcccccc}
\toprule
$T$ & baseline & $N_{\mathrm{add}}$ & $\sigma=0.001$ & $\sigma=0.01$ & $\sigma=0.05$ & $\sigma=0.1$ & $\sigma=0.2$ \\
\midrule
\multirow{4}{*}{\(T=50\)} & \multirow{4}{*}{11.970} & 1 & 8.4110 & 8.4510 & 7.9457 & 9.5331 & 36.176 \\
 &  & 2 & 8.3122 & 8.2959 & 7.7896 & 8.8488 & 35.336 \\
 &  & 4 & 8.1277 & 12.310 & 6.6458 & 11.571 & 32.948 \\
 &  & 8 & 8.2253 & 15.228 & 8.4209 & 11.450 & 29.796 \\
\midrule
\multirow{4}{*}{\(T=200\)} & \multirow{4}{*}{13.828} & 1 & 7.5888 & 9.2690 & 11.450 & 13.215 & 12.623 \\
 &  & 2 & 9.4531 & 5.7103 & 14.582 & 18.820 & 12.733 \\
 &  & 4 & 11.598 & 12.026 & 13.172 & 18.443 & 12.773 \\
 &  & 8 & 10.515 & 9.7486 & 13.931 & 15.119 & 16.286 \\
\midrule
\multirow{4}{*}{\(T=1000\)} & \multirow{4}{*}{5.3454} & 1 & 9.3220 & 8.1347 & 6.1606 & 7.8986 & 9.9209 \\
 &  & 2 & 6.4275 & 6.7232 & 8.2007 & 5.5868 & 7.6797 \\
 &  & 4 & 10.557 & 7.1147 & 6.4902 & 4.8015 & 9.9834 \\
 &  & 8 & 7.0483 & 6.0650 & 5.2899 & 7.1667 & 8.0975 \\
\bottomrule
\end{tabular}%
}
\caption{Global $\log(L_1)$ by training time trajectory length $T$, noise-level $\sigma$, and number of additional ensemble members $N_{\mathrm{add}}$. Baseline is the deterministic STORN model; the remaining columns are the FiLM modulated ensemble strategy. Lower is better. The reported values are averaged over five independent runs with negligible variation across runs.}
\label{tab:logL1_abs}
\end{table} 

\begin{table}[t]
\centering
\scriptsize
\setlength{\tabcolsep}{3.5pt}
\renewcommand{\arraystretch}{1.08}
\resizebox{\textwidth}{!}{
\begin{tabular}{llcccccc}
\toprule
Diagnostic & Model
& \(\mathcal E_{\mathrm{AOT}}(1.0)\)
& \(\mathcal E_{\mathrm{AOT}}(1.5)\)
& \(\mathcal E_{\mathrm{AOT}}(1.75)\)
& \(\mathcal E_{\mathrm{freq}}(1.0)\)
& \(\mathcal E_{\mathrm{freq}}(1.5)\)
& \(\mathcal E_{\mathrm{freq}}(1.75)\) \\
\midrule
\multirow{4}{*}{\(\psi_1\)}
& CR
& 0.13839 & 0.01553 & 0.00389
& 0.70602 & 0.60802 & 0.34090 \\
& STORN \(T=50\)
& 0.06560 & 0.00861 & 0.00259
& 0.02953 & 0.15682 & \textbf{0.10090} \\
& STORN \(T=1000\)
& 0.04683 & 0.00766 & 0.00220
& 0.04715 & 0.20020 & 0.13932 \\
& FiLM-STORN \(T=50\)
& \textbf{0.04564} & \textbf{0.00399} & \textbf{0.00144}
& \textbf{0.00202} & \textbf{0.09209} & 0.13165 \\
\midrule
\multirow{4}{*}{\(\psi_2\)}
& CR
& 0.02876 & 0.00032 & \(2.67{\times}10^{-5}\)
& 0.72329 & 0.02798 & 0.00517 \\
& STORN \(T=50\)
& 0.01732 & 0.00031 & \(2.66{\times}10^{-5}\)
& 0.40766 & 0.02704 & 0.00511 \\
& STORN \(T=1000\)
& 0.01472 & 0.00028 & \(2.53{\times}10^{-5}\)
& 0.28833 & 0.02377 & 0.00481 \\
& FiLM-STORN \(T=50\)
& \textbf{0.01152} & \textbf{0.00020} & \textbf{\(1.98{\times}10^{-5}\)}
& \textbf{0.19028} & \textbf{0.01314} & \textbf{0.00278} \\
\midrule
\multirow{4}{*}{Avg. \(\psi_1,\psi_2\)}
& CR
& 0.08357 & 0.00792 & 0.00196
& 0.71466 & 0.31800 & 0.17303 \\
& STORN \(T=50\)
& 0.04146 & 0.00446 & 0.00131
& 0.21859 & 0.09193 & \textbf{0.05300} \\
& STORN \(T=1000\)
& 0.03078 & 0.00397 & 0.00111
& 0.16774 & 0.11199 & 0.07206 \\
& FiLM-STORN \(T=50\)
& \textbf{0.02858} & \textbf{0.00210} & \textbf{0.00073}
& \textbf{0.09615} & \textbf{0.05262} & {0.06722} \\
\bottomrule
\end{tabular}
}
\caption{
Threshold-resolved exceedance-area diagnostics corresponding to Table~\ref{tab:aot_geometry_main}. 
For threshold \(c\), \(\mathcal E_{\mathrm{AOT}}(c)\) is the empirical Wasserstein-1 distance between the reference and model distributions of \(A_c(t)/A\), while \(\mathcal E_{\mathrm{freq}}(c)\) is the absolute error in event frequency \(\mathbb P(A_c(t)>0)\). Lower is better. FiLM-STORN denotes the ensemble-conditioned model trained with \(T_{\mathrm{train}}=50\), \(\sigma=0.01\), and \(N_{\mathrm{add}}=2\). Bold entries mark the best value among the four reported models for each diagnostic and threshold.
}
\label{tab:aot_geometry_thresholds}
\end{table}

The global density diagnostics show that ensemble conditioning is most useful in the data-limited regimes, but they do not by themselves establish recovery of spatial rare-event structure. We therefore report threshold-resolved exceedance-area diagnostics in Table~\ref{tab:aot_geometry_thresholds}. These metrics compare the empirical law of \(A_c(t)/A\), the fraction of the domain exceeding threshold \(c\), and are insensitive to time-phase mismatch after chaotic decorrelation.

\begin{table}[t]
\centering
\setlength{\tabcolsep}{5pt}
\renewcommand{\arraystretch}{1.12}
\begin{tabular}{ccccc}
\toprule
Diagnostic & \(T_{\mathrm{train}}\) & Metric & Best$^*$ \((\sigma,N_{\mathrm{add}})\) & Best$^*$ value \\
\midrule
\(\psi_2\) & 50
& \(\overline{\mathcal E}_{\mathrm{AOT}}\)
& \((0.1,8)\) & 0.00202 \\
\(\psi_2\) & 50
& \(\overline{\mathcal E}_{\mathrm{freq}}\)
& \((0.01,2)\) & 0.06874 \\
Avg. \(\psi_1,\psi_2\) & 50
& \(\overline{\mathcal E}_{\mathrm{AOT}}\)
& \((0.05,1)\) & 0.00862 \\
Avg. \(\psi_1,\psi_2\) & 50
& \(\overline{\mathcal E}_{\mathrm{freq}}\)
& \((0.1,4)\) & 0.06456 \\
\midrule
\(\psi_2\) & 200
& \(\overline{\mathcal E}_{\mathrm{AOT}}\)
& \((0.001,1)\) & 0.00344 \\
\(\psi_2\) & 200
& \(\overline{\mathcal E}_{\mathrm{freq}}\)
& \((0.001,1)\) & 0.07810 \\
Avg. \(\psi_1,\psi_2\) & 200
& \(\overline{\mathcal E}_{\mathrm{AOT}}\)
& \((0.001,1)\) & 0.00757 \\
Avg. \(\psi_1,\psi_2\) & 200
& \(\overline{\mathcal E}_{\mathrm{freq}}\)
& \((0.001,1)\) & 0.07409 \\
\midrule
\(\psi_2\) & 1000
& \(\overline{\mathcal E}_{\mathrm{AOT}}\)
& \((0.1,8)\) & 0.00171 \\
\(\psi_2\) & 1000
& \(\overline{\mathcal E}_{\mathrm{freq}}\)
& \((0.1,8)\) & 0.01940 \\
Avg. \(\psi_1,\psi_2\) & 1000
& \(\overline{\mathcal E}_{\mathrm{AOT}}\)
& \((0.1,8)\) & 0.00231 \\
Avg. \(\psi_1,\psi_2\) & 1000
& \(\overline{\mathcal E}_{\mathrm{freq}}\)
& \((0.05,8)\) & 0.01847 \\
\bottomrule
\end{tabular}
\caption{
Best FiLM-STORN configurations over the reported \((\sigma,N_{\mathrm{add}})\) sweep, evaluated using high-threshold averages over \(c\in\{1.0,1.5,1.75\}\). This table summarizes achievable performance under the sweep and is not used as the primary comparison, where \(\sigma=0.01\) and \(N_{\mathrm{add}}=2\) are fixed for the \(T_{\mathrm{train}}=50\) FiLM-STORN model. $^*$ Best denotes the best configuration over the reported sweep, used here to summarize the achievable gain
}
\label{tab:aot_sweep_best}
\end{table}

To assess sensitivity to the ensemble construction, Table~\ref{tab:aot_sweep_best} reports the best FiLM-STORN configuration over the \((\sigma,N_{\mathrm{add}})\) sweep. This table is not the primary comparison because it is sweep-selected. Its purpose is to show the achievable range and the existence of a broad useful regime. Very large perturbations, such as \(\sigma=0.2\), can degrade performance in the global metrics, consistent with the interpretation that the ensemble should probe local deformation geometry rather than act as a generic stochastic simulator.

\paragraph{Global and regional density visualizations.}
In addition to the scalar tables, we include log-PDF and regional PDF visualizations for the fixed short-data configuration used in the main comparison: \(T_{\mathrm{train}}=50\), \(\sigma=0.01\), and \(N_{\mathrm{add}}=2\). The log-PDF plots expose tail discrepancies that are compressed on a linear density scale. The regional PDFs divide the spatial domain into subregions and test whether improvements are spatially localized or distributed across the flow. This is particularly important in the topographic QG setting, because the lower-layer topography induces spatially inhomogeneous extremes; a model may match the global PDF while still failing in the regions where rare events occur.

\begin{figure}[t]
\centering
\includegraphics[width=1.02\textwidth, trim=2.7cm 1.5cm 0cm 0cm, clip]{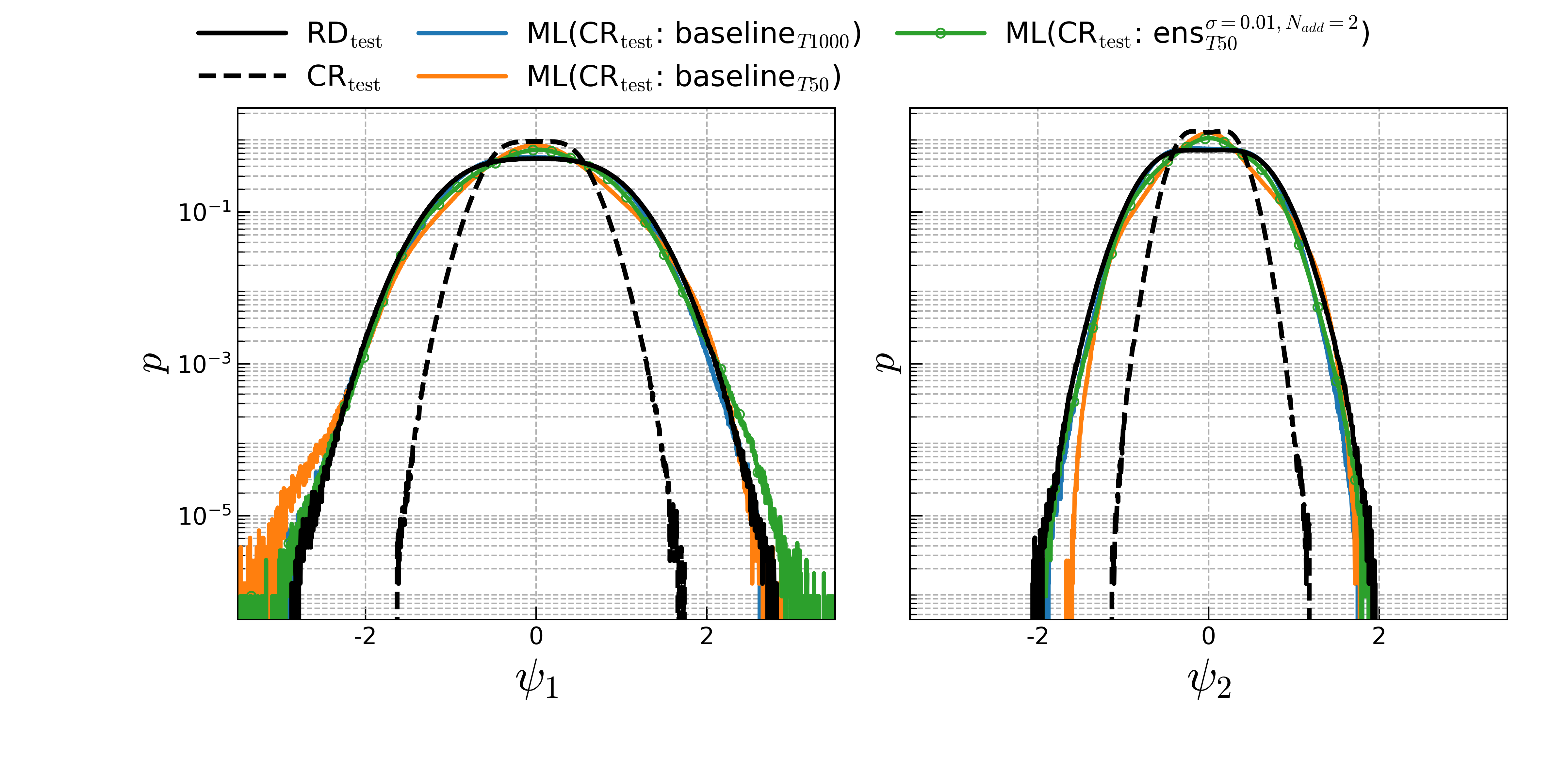}
\caption{
Global log-PDF comparison for the fixed short-data configuration \(T_{\mathrm{train}}=50\), \(\sigma=0.01\), and \(N_{\mathrm{add}}=2\). The logarithmic scale highlights low-probability tails, where ordinary linear-scale PDFs can make substantial multiplicative errors appear visually small. Agreement in these plots supports tail-distribution fidelity, while the AOT diagnostics in Tables~\ref{tab:aot_geometry_thresholds}--\ref{tab:aot_sweep_best} test the corresponding spatial extent of thresholded events.
}
\label{fig:global_logpdf_fixed}
\end{figure}

\begin{figure}[t]
\centering
\includegraphics[width=1.1\textwidth,  trim=1.5cm 1.0cm 0cm 0cm, clip]{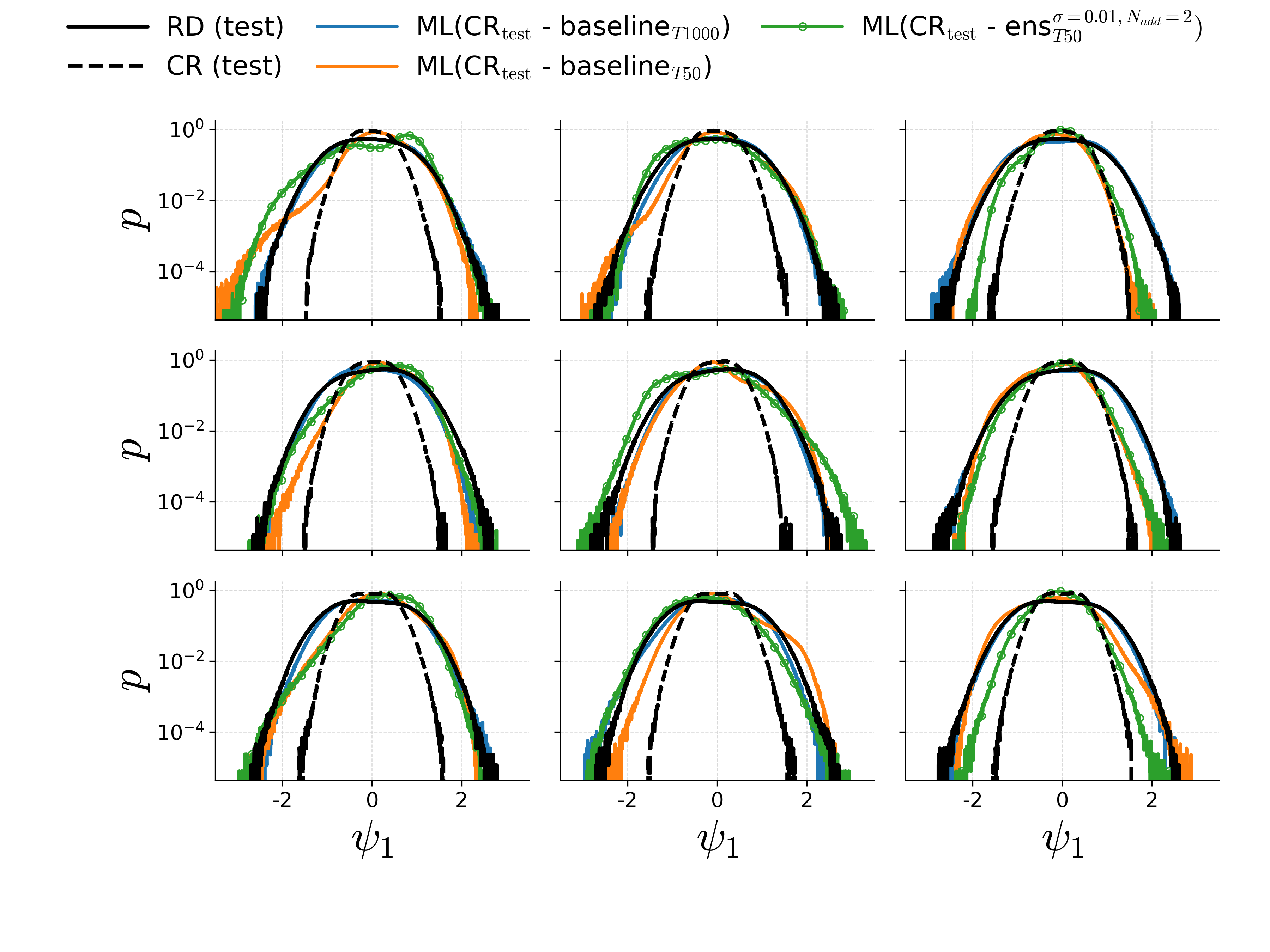}
\caption{
Regional log-PDF comparison for the upper stream-function $\psi_1$ for the fixed short-data configuration \(T_{\mathrm{train}}=50\), \(\sigma=0.01\), and \(N_{\mathrm{add}}=2\). The domain is partitioned into nine subregions, and each panel compares the projected reference, the uncorrected coarse trajectory, a STORN baseline trained on $T_{train}=1000$, a STORN baseline trained on $T_{train}=50$, and the FiLM-STORN correction trained on the same short-data configuration. These plots test whether the correction recovers spatially heterogeneous tail behavior rather than only matching a domain-averaged distribution.
}
\label{fig:regional_logpdf_fixed}
\end{figure}

\begin{figure}[t]
\centering
\includegraphics[width=1.1\textwidth,  trim=1.5cm 1.0cm 0cm 0cm, clip]{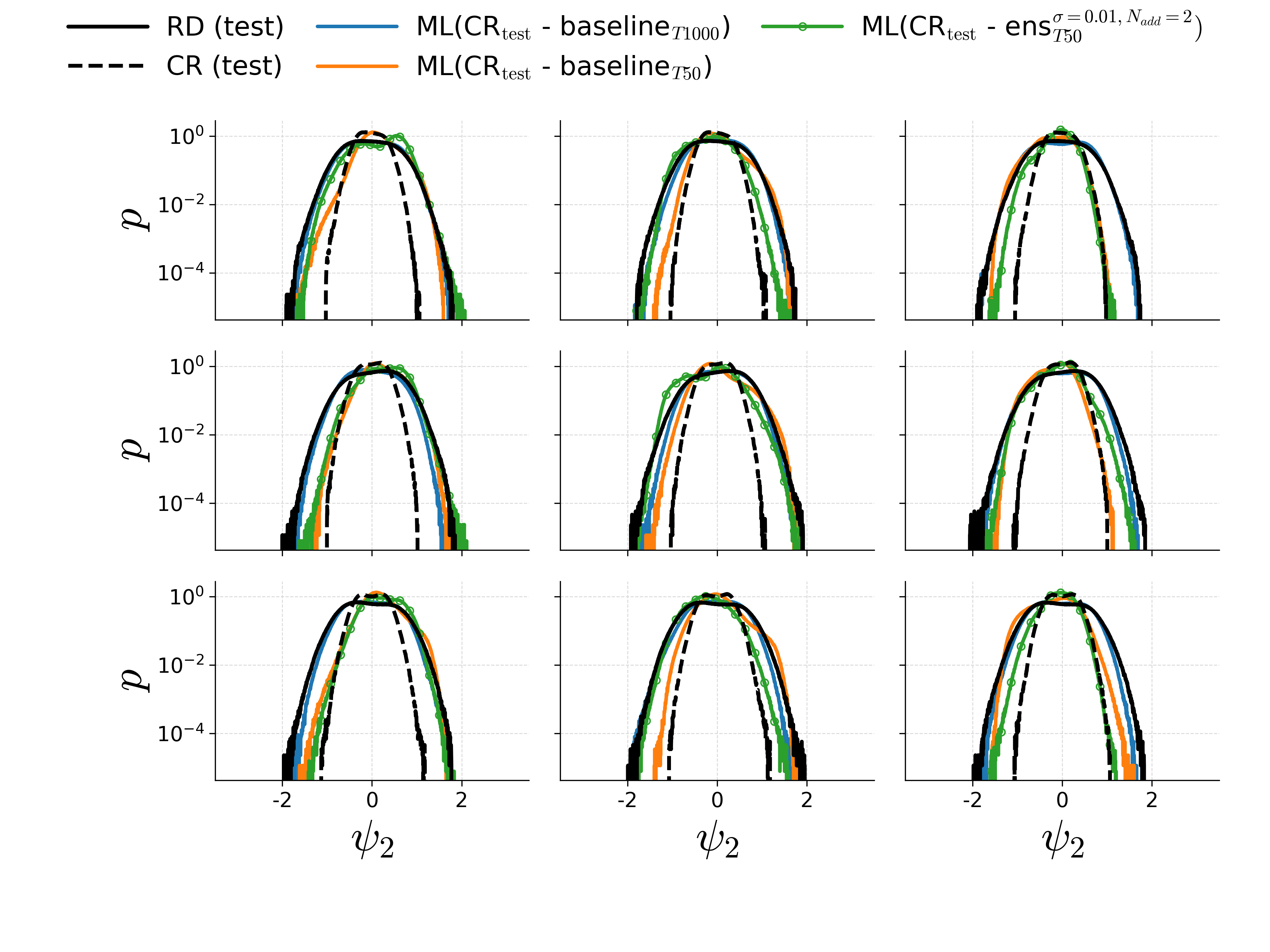}
\caption{
Regional log-PDF comparison for the lower stream-function $\psi_2$ subject to topography for the fixed short-data configuration \(T_{\mathrm{train}}=50\), \(\sigma=0.01\), and \(N_{\mathrm{add}}=2\). The domain is partitioned into nine subregions, and each panel compares the projected reference, the uncorrected coarse trajectory, a STORN baseline trained on $T_{train}=1000$, a STORN baseline trained on $T_{train}=50$, and the FiLM-STORN correction trained on the same short-data configuration. These plots test whether the correction recovers spatially heterogeneous tail behavior rather than only matching a domain-averaged distribution.
}
\label{fig:regional_linearpdf_fixed}
\end{figure}

%% file: checklist.tex
\newpage
\section*{NeurIPS Paper Checklist}

\begin{enumerate}

\item {\bf Claims}
    \item[] Question: Do the main claims made in the abstract and introduction accurately reflect the paper's contributions and scope?
    \item[] Answer: \answerYes{} 
    \item[] Justification: The introduction clearly lists the paper’s contribution.
    \item[] Guidelines:
    \begin{itemize}
        \item The answer \answerNA{} means that the abstract and introduction do not include the claims made in the paper.
        \item The abstract and/or introduction should clearly state the claims made, including the contributions made in the paper and important assumptions and limitations. A \answerNo{} or \answerNA{} answer to this question will not be perceived well by the reviewers. 
        \item The claims made should match theoretical and experimental results, and reflect how much the results can be expected to generalize to other settings. 
        \item It is fine to include aspirational goals as motivation as long as it is clear that these goals are not attained by the paper. 
    \end{itemize}

\item {\bf Limitations}
    \item[] Question: Does the paper discuss the limitations of the work performed by the authors?
    \item[] Answer: \answerYes{}{} 
    \item[] Justification: We discuss limitations in Section~\ref{sec:limitations}.
    \item[] Guidelines:
    \begin{itemize}
        \item The answer \answerNA{} means that the paper has no limitation while the answer \answerNo{} means that the paper has limitations, but those are not discussed in the paper. 
        \item The authors are encouraged to create a separate ``Limitations'' section in their paper.
        \item The paper should point out any strong assumptions and how robust the results are to violations of these assumptions (e.g., independence assumptions, noiseless settings, model well-specification, asymptotic approximations only holding locally). The authors should reflect on how these assumptions might be violated in practice and what the implications would be.
        \item The authors should reflect on the scope of the claims made, e.g., if the approach was only tested on a few datasets or with a few runs. In general, empirical results often depend on implicit assumptions, which should be articulated.
        \item The authors should reflect on the factors that influence the performance of the approach. For example, a facial recognition algorithm may perform poorly when image resolution is low or images are taken in low lighting. Or a speech-to-text system might not be used reliably to provide closed captions for online lectures because it fails to handle technical jargon.
        \item The authors should discuss the computational efficiency of the proposed algorithms and how they scale with dataset size.
        \item If applicable, the authors should discuss possible limitations of their approach to address problems of privacy and fairness.
        \item While the authors might fear that complete honesty about limitations might be used by reviewers as grounds for rejection, a worse outcome might be that reviewers discover limitations that aren't acknowledged in the paper. The authors should use their best judgment and recognize that individual actions in favor of transparency play an important role in developing norms that preserve the integrity of the community. Reviewers will be specifically instructed to not penalize honesty concerning limitations.
    \end{itemize}

\item {\bf Theory assumptions and proofs}
    \item[] Question: For each theoretical result, does the paper provide the full set of assumptions and a complete (and correct) proof?
    \item[] Answer: \answerYes{} 
    \item[] Justification: We proof all theoretical results in Appendix~\ref{app:proofs}.
    \item[] Guidelines:
    \begin{itemize}
        \item The answer \answerNA{} means that the paper does not include theoretical results. 
        \item All the theorems, formulas, and proofs in the paper should be numbered and cross-referenced.
        \item All assumptions should be clearly stated or referenced in the statement of any theorems.
        \item The proofs can either appear in the main paper or the supplemental material, but if they appear in the supplemental material, the authors are encouraged to provide a short proof sketch to provide intuition. 
        \item Inversely, any informal proof provided in the core of the paper should be complemented by formal proofs provided in appendix or supplemental material.
        \item Theorems and Lemmas that the proof relies upon should be properly referenced. 
    \end{itemize}

    \item {\bf Experimental result reproducibility}
    \item[] Question: Does the paper fully disclose all the information needed to reproduce the main experimental results of the paper to the extent that it affects the main claims and/or conclusions of the paper (regardless of whether the code and data are provided or not)?
    \item[] Answer: \answerYes{} 
    \item[] Justification: All the information used to run the experiments are clearly
stated in the main text, and further details are given in the corresponding appendices including evaluation metric and implementation details.
    \item[] Guidelines:
    \begin{itemize}
        \item The answer \answerNA{} means that the paper does not include experiments.
        \item If the paper includes experiments, a \answerNo{} answer to this question will not be perceived well by the reviewers: Making the paper reproducible is important, regardless of whether the code and data are provided or not.
        \item If the contribution is a dataset and\slash or model, the authors should describe the steps taken to make their results reproducible or verifiable. 
        \item Depending on the contribution, reproducibility can be accomplished in various ways. For example, if the contribution is a novel architecture, describing the architecture fully might suffice, or if the contribution is a specific model and empirical evaluation, it may be necessary to either make it possible for others to replicate the model with the same dataset, or provide access to the model. In general. releasing code and data is often one good way to accomplish this, but reproducibility can also be provided via detailed instructions for how to replicate the results, access to a hosted model (e.g., in the case of a large language model), releasing of a model checkpoint, or other means that are appropriate to the research performed.
        \item While NeurIPS does not require releasing code, the conference does require all submissions to provide some reasonable avenue for reproducibility, which may depend on the nature of the contribution. For example
        \begin{enumerate}
            \item If the contribution is primarily a new algorithm, the paper should make it clear how to reproduce that algorithm.
            \item If the contribution is primarily a new model architecture, the paper should describe the architecture clearly and fully.
            \item If the contribution is a new model (e.g., a large language model), then there should either be a way to access this model for reproducing the results or a way to reproduce the model (e.g., with an open-source dataset or instructions for how to construct the dataset).
            \item We recognize that reproducibility may be tricky in some cases, in which case authors are welcome to describe the particular way they provide for reproducibility. In the case of closed-source models, it may be that access to the model is limited in some way (e.g., to registered users), but it should be possible for other researchers to have some path to reproducing or verifying the results.
        \end{enumerate}
    \end{itemize}

\item {\bf Open access to data and code}
    \item[] Question: Does the paper provide open access to the data and code, with sufficient instructions to faithfully reproduce the main experimental results, as described in supplemental material?
    \item[] Answer: \answerYes{} 
    \item[] Justification:  We provide pseudo code for our algorithms. We will open-source our codes upon acceptance.
    \item[] Guidelines:
    \begin{itemize}
        \item The answer \answerNA{} means that paper does not include experiments requiring code.
        \item Please see the NeurIPS code and data submission guidelines (\url{https://neurips.cc/public/guides/CodeSubmissionPolicy}) for more details.
        \item While we encourage the release of code and data, we understand that this might not be possible, so \answerNo{} is an acceptable answer. Papers cannot be rejected simply for not including code, unless this is central to the contribution (e.g., for a new open-source benchmark).
        \item The instructions should contain the exact command and environment needed to run to reproduce the results. See the NeurIPS code and data submission guidelines (\url{https://neurips.cc/public/guides/CodeSubmissionPolicy}) for more details.
        \item The authors should provide instructions on data access and preparation, including how to access the raw data, preprocessed data, intermediate data, and generated data, etc.
        \item The authors should provide scripts to reproduce all experimental results for the new proposed method and baselines. If only a subset of experiments are reproducible, they should state which ones are omitted from the script and why.
        \item At submission time, to preserve anonymity, the authors should release anonymized versions (if applicable).
        \item Providing as much information as possible in supplemental material (appended to the paper) is recommended, but including URLs to data and code is permitted.
    \end{itemize}

\item {\bf Experimental setting/details}
    \item[] Question: Does the paper specify all the training and test details (e.g., data splits, hyperparameters, how they were chosen, type of optimizer) necessary to understand the results?
    \item[] Answer: \answerYes{} 
    \item[] Justification: We provide a detailed description of the experimental methodology, reproducible code, additional results, proofs, and complete hyperparameter and simulation settings for the machine learning pipeline and trajectory generation. Supplementary details are provided in Appendix~\ref{app:lowdimsystem} for the low-dimensional example and in Appendices~\ref{app:QG} and~\ref{app:ML_QG} for the QG experiments.

    \item[] Guidelines:
    \begin{itemize}
        \item The answer \answerNA{} means that the paper does not include experiments.
        \item The experimental setting should be presented in the core of the paper to a level of detail that is necessary to appreciate the results and make sense of them.
        \item The full details can be provided either with the code, in appendix, or as supplemental material.
    \end{itemize}

\item {\bf Experiment statistical significance}
    \item[] Question: Does the paper report error bars suitably and correctly defined or other appropriate information about the statistical significance of the experiments?
    \item[] Answer: \answerYes{} 
    \item[] Justification: The two examples are evaluated using different task-specific metrics, whose interpretations are discussed in the corresponding sections ~\ref{app:lowdim_metrics} and \ref{app:evaluation_metrics}. All reported results are averaged over five independent runs, (see e.g. Tables \ref{tab:lowdim_z3_sweep_summary}, \ref{tab:lowdim_attention_ablation}, \ref{tab:KL_abs} and \ref{tab:logL1_abs}).
    \item[] Guidelines:
    \begin{itemize}
        \item The answer \answerNA{} means that the paper does not include experiments.
        \item The authors should answer \answerYes{} if the results are accompanied by error bars, confidence intervals, or statistical significance tests, at least for the experiments that support the main claims of the paper.
        \item The factors of variability that the error bars are capturing should be clearly stated (for example, train/test split, initialization, random drawing of some parameter, or overall run with given experimental conditions).
        \item The method for calculating the error bars should be explained (closed form formula, call to a library function, bootstrap, etc.)
        \item The assumptions made should be given (e.g., Normally distributed errors).
        \item It should be clear whether the error bar is the standard deviation or the standard error of the mean.
        \item It is OK to report 1-sigma error bars, but one should state it. The authors should preferably report a 2-sigma error bar than state that they have a 96\% CI, if the hypothesis of Normality of errors is not verified.
        \item For asymmetric distributions, the authors should be careful not to show in tables or figures symmetric error bars that would yield results that are out of range (e.g., negative error rates).
        \item If error bars are reported in tables or plots, the authors should explain in the text how they were calculated and reference the corresponding figures or tables in the text.
    \end{itemize}

\item {\bf Experiments compute resources}
    \item[] Question: For each experiment, does the paper provide sufficient information on the computer resources (type of compute workers, memory, time of execution) needed to reproduce the experiments?
    \item[] Answer: \answerYes{} 
    \item[] Justification: ompute resources are summarized for the low-dimensional experiment in Appendix \ref{app:lowdim:runtime}, and reported in more detail for the high-dimensional experiments in Appendix \ref{app:ml:impl}.
    \item[] Guidelines:
    \begin{itemize}
        \item The answer \answerNA{} means that the paper does not include experiments.
        \item The paper should indicate the type of compute workers CPU or GPU, internal cluster, or cloud provider, including relevant memory and storage.
        \item The paper should provide the amount of compute required for each of the individual experimental runs as well as estimate the total compute. 
        \item The paper should disclose whether the full research project required more compute than the experiments reported in the paper (e.g., preliminary or failed experiments that didn't make it into the paper). 
    \end{itemize}
    
\item {\bf Code of ethics}
    \item[] Question: Does the research conducted in the paper conform, in every respect, with the NeurIPS Code of Ethics \url{https://neurips.cc/public/EthicsGuidelines}?
    \item[] Answer: \answerYes{} 
    \item[] Justification: Our research is solely intended for scientific purposes, conforms in every respect with the NeurIPS Code of Ethics, and poses no potential ethical risks.
    \item[] Guidelines:
    \begin{itemize}
        \item The answer \answerNA{} means that the authors have not reviewed the NeurIPS Code of Ethics.
        \item If the authors answer \answerNo, they should explain the special circumstances that require a deviation from the Code of Ethics.
        \item The authors should make sure to preserve anonymity (e.g., if there is a special consideration due to laws or regulations in their jurisdiction).
    \end{itemize}

\item {\bf Broader impacts}
    \item[] Question: Does the paper discuss both potential positive societal impacts and negative societal impacts of the work performed?
    \item[] Answer: \answerNA{} 
    \item[] Justification: There is no societal impact of the work performed.
    \item[] Guidelines:
    \begin{itemize}
        \item The answer \answerNA{} means that there is no societal impact of the work performed.
        \item If the authors answer \answerNA{} or \answerNo, they should explain why their work has no societal impact or why the paper does not address societal impact.
        \item Examples of negative societal impacts include potential malicious or unintended uses (e.g., disinformation, generating fake profiles, surveillance), fairness considerations (e.g., deployment of technologies that could make decisions that unfairly impact specific groups), privacy considerations, and security considerations.
        \item The conference expects that many papers will be foundational research and not tied to particular applications, let alone deployments. However, if there is a direct path to any negative applications, the authors should point it out. For example, it is legitimate to point out that an improvement in the quality of generative models could be used to generate Deepfakes for disinformation. On the other hand, it is not needed to point out that a generic algorithm for optimizing neural networks could enable people to train models that generate Deepfakes faster.
        \item The authors should consider possible harms that could arise when the technology is being used as intended and functioning correctly, harms that could arise when the technology is being used as intended but gives incorrect results, and harms following from (intentional or unintentional) misuse of the technology.
        \item If there are negative societal impacts, the authors could also discuss possible mitigation strategies (e.g., gated release of models, providing defenses in addition to attacks, mechanisms for monitoring misuse, mechanisms to monitor how a system learns from feedback over time, improving the efficiency and accessibility of ML).
    \end{itemize}
    
\item {\bf Safeguards}
    \item[] Question: Does the paper describe safeguards that have been put in place for responsible release of data or models that have a high risk for misuse (e.g., pre-trained language models, image generators, or scraped datasets)?
    \item[] Answer: \answerNA{}{} 
    \item[] Justification: The paper does not pose such risks.
    \item[] Guidelines:
    \begin{itemize}
        \item The answer \answerNA{} means that the paper poses no such risks.
        \item Released models that have a high risk for misuse or dual-use should be released with necessary safeguards to allow for controlled use of the model, for example by requiring that users adhere to usage guidelines or restrictions to access the model or implementing safety filters. 
        \item Datasets that have been scraped from the Internet could pose safety risks. The authors should describe how they avoided releasing unsafe images.
        \item We recognize that providing effective safeguards is challenging, and many papers do not require this, but we encourage authors to take this into account and make a best faith effort.
    \end{itemize}

\item {\bf Licenses for existing assets}
    \item[] Question: Are the creators or original owners of assets (e.g., code, data, models), used in the paper, properly credited and are the license and terms of use explicitly mentioned and properly respected?
    \item[] Answer: \answerYes{}{} 
    \item[] Justification: We cite the original work and provide a link to the QG STORN baseline implementation in Appendix \ref{app:ml:model}.
    \item[] Guidelines:
    \begin{itemize}
        \item The answer \answerNA{} means that the paper does not use existing assets.
        \item The authors should cite the original paper that produced the code package or dataset.
        \item The authors should state which version of the asset is used and, if possible, include a URL.
        \item The name of the license (e.g., CC-BY 4.0) should be included for each asset.
        \item For scraped data from a particular source (e.g., website), the copyright and terms of service of that source should be provided.
        \item If assets are released, the license, copyright information, and terms of use in the package should be provided. For popular datasets, \url{paperswithcode.com/datasets} has curated licenses for some datasets. Their licensing guide can help determine the license of a dataset.
        \item For existing datasets that are re-packaged, both the original license and the license of the derived asset (if it has changed) should be provided.
        \item If this information is not available online, the authors are encouraged to reach out to the asset's creators.
    \end{itemize}

\item {\bf New assets}
    \item[] Question: Are new assets introduced in the paper well documented and is the documentation provided alongside the assets?
    \item[] Answer: \answerYes{} 
    \item[] Justification: We provide a readme file for how to run the code. 
    \item[] Guidelines:
    \begin{itemize}
        \item The answer \answerNA{} means that the paper does not release new assets.
        \item Researchers should communicate the details of the dataset\slash code\slash model as part of their submissions via structured templates. This includes details about training, license, limitations, etc. 
        \item The paper should discuss whether and how consent was obtained from people whose asset is used.
        \item At submission time, remember to anonymize your assets (if applicable). You can either create an anonymized URL or include an anonymized zip file.
    \end{itemize}

\item {\bf Crowdsourcing and research with human subjects}
    \item[] Question: For crowdsourcing experiments and research with human subjects, does the paper include the full text of instructions given to participants and screenshots, if applicable, as well as details about compensation (if any)? 
    \item[] Answer: \answerNA{} 
    \item[] Justification: {The paper does not involve crowdsourcing nor research with human subjects.}
    \item[] Guidelines:
    \begin{itemize}
        \item The answer \answerNA{} means that the paper does not involve crowdsourcing nor research with human subjects.
        \item Including this information in the supplemental material is fine, but if the main contribution of the paper involves human subjects, then as much detail as possible should be included in the main paper. 
        \item According to the NeurIPS Code of Ethics, workers involved in data collection, curation, or other labor should be paid at least the minimum wage in the country of the data collector. 
    \end{itemize}

\item {\bf Institutional review board (IRB) approvals or equivalent for research with human subjects}
    \item[] Question: Does the paper describe potential risks incurred by study participants, whether such risks were disclosed to the subjects, and whether Institutional Review Board (IRB) approvals (or an equivalent approval/review based on the requirements of your country or institution) were obtained?
    \item[] Answer: \answerNA{} 
    \item[] Justification: The paper does not involve crowdsourcing nor research with human subjects.
    \item[] Guidelines:
    \begin{itemize}
        \item The answer \answerNA{} means that the paper does not involve crowdsourcing nor research with human subjects.
        \item Depending on the country in which research is conducted, IRB approval (or equivalent) may be required for any human subjects research. If you obtained IRB approval, you should clearly state this in the paper. 
        \item We recognize that the procedures for this may vary significantly between institutions and locations, and we expect authors to adhere to the NeurIPS Code of Ethics and the guidelines for their institution. 
        \item For initial submissions, do not include any information that would break anonymity (if applicable), such as the institution conducting the review.
    \end{itemize}

\item {\bf Declaration of LLM usage}
    \item[] Question: Does the paper describe the usage of LLMs if it is an important, original, or non-standard component of the core methods in this research? Note that if the LLM is used only for writing, editing, or formatting purposes and does \emph{not} impact the core methodology, scientific rigor, or originality of the research, declaration is not required.
    \item[] Answer: \answerNA{} 
    \item[] Justification: The core method development in this research does not involve LLMs as any important, original, or non-standard components.
    \item[] Guidelines:
    \begin{itemize}
        \item The answer \answerNA{} means that the core method development in this research does not involve LLMs as any important, original, or non-standard components.
        \item Please refer to our LLM policy in the NeurIPS handbook for what should or should not be described.
    \end{itemize}

\end{enumerate}